\definecolor{Cyan}{rgb}{0.88,1,1}
\theoremstyle{plain}
\newtheorem{theorem}{Theorem}[section]
\newtheorem{lemma}[theorem]{Lemma}
\theoremstyle{definition}
\newtheorem{assumption}[theorem]{Assumption}
\theoremstyle{remark}
\newtheorem{remark}[theorem]{Remark}
\newcommand{\Ac}{\mathcal{A}}
\newcommand{\Cc}{\mathcal{C}}
\newcommand{\Dc}{\mathcal{D}}
\newcommand{\Gc}{\mathcal{G}}
\newcommand{\Lc}{\mathcal{L}}
\newcommand{\Mc}{\mathcal{M}}
\newcommand{\Nc}{\mathcal{N}}
\newcommand{\Rc}{\mathcal{R}}
\newcommand{\Sc}{\mathcal{S}}
\newcommand{\Uc}{\mathcal{U}}
\newcommand{\bP}{\mathbf{P}}
\newcommand{\oneb}{\mathds{1}}
\newcommand{\Bb}{\mathbb{B}}
\newcommand{\Eb}{\mathbb{E}}
\newcommand{\Pb}{\mathbb{P}}
\newcommand{\Qb}{\mathbb{Q}}
\newcommand{\Rb}{\mathbb{R}}
\newcommand{\ber}[1]{\text{br}(#1)}
\newcommand{\KL}[1]{\text{KL}\left(#1\right)}
\DeclareMathOperator*{\argmax}{arg\,max}
\DeclareMathOperator*{\argmin}{arg\,min}
\DeclareMathOperator*{\essinf}{essinf}
\newcommand{\shir}[1]{{\color{magenta}#1}}
\newcommand{\shir}[1]{#1}
\icmltitlerunning{Provably Efficient Offline Reinforcement Learning with Perturbed Data Sources}
\begin{document}

\twocolumn[
\icmltitle{Provably Efficient Offline Reinforcement Learning with Perturbed Data Sources}



\icmlsetsymbol{equal}{*}

\begin{icmlauthorlist}
\icmlauthor{Chengshuai Shi}{uva}
\icmlauthor{Wei Xiong}{hkust}
\icmlauthor{Cong Shen}{uva}
\icmlauthor{Jing Yang}{psu}
\end{icmlauthorlist}

\icmlaffiliation{uva}{Department of Electrical and Computer Engineering, University of Virginia}
\icmlaffiliation{hkust}{Department of Mathematics, The Hong Kong University of Science and Technology}
\icmlaffiliation{psu}{Department of Electrical Engineering, The Pennsylvania State University}

\icmlcorrespondingauthor{Chengshuai Shi}{cs7ync@virginia.edu}
\icmlcorrespondingauthor{Cong Shen}{cong@virginia.edu}

\icmlkeywords{Offline Reinforcement Learning}

\vskip 0.3in
]



\printAffiliationsAndNotice{} 

\begin{abstract}
Existing theoretical studies on offline reinforcement learning (RL) mostly consider a dataset sampled directly from the target task. In practice, however, data often come from several heterogeneous but related sources. Motivated by this gap, this work aims at rigorously understanding offline RL with multiple datasets that are collected from \emph{randomly perturbed versions} of the target task instead of from itself. An information-theoretic lower bound is derived, which reveals a necessary requirement on the number of involved sources in addition to that on the number of data samples. Then, a novel HetPEVI algorithm is proposed, which simultaneously considers the \emph{sample uncertainties} from a finite number of data samples per data source and the \emph{source uncertainties} due to a finite number of available data sources. Theoretical analyses demonstrate that HetPEVI can solve the target task as long as the data sources \emph{collectively} provide a good data coverage. Moreover, HetPEVI is demonstrated to be optimal up to a polynomial factor of the horizon length. Finally, the study is extended to offline Markov games and offline robust RL, which demonstrates the generality of the proposed designs and theoretical analyses.
\end{abstract}

\section{Introduction}\label{sec:intro}
Offline reinforcement learning (RL) \citep{levine2020offline}, a.k.a. batch RL \citep{lange2012batch}, has received growing interest in recent years. It aims at training RL agents using accessible datasets collected \emph{a priori} and thus avoids expensive online interactions. 
Along with its tremendous empirical successes, recent studies have also advanced the theoretical understandings of offline RL \citep{rashidinejad2021bridging,jin2021pessimism, xie2021policy}.

Despite these progresses, most theoretical studies on offline RL focus on learning via data collected exactly from the target task. In practice, however, it is difficult to ensure such a perfect match. Instead, it is more reasonable to model that data are collected from different sources that are perturbed versions of the target task in some applications. For example, when training a chatbot \citep{jaques2020human}, the offline dialogue datasets typically consist of conversations from different people with naturally varying language habits. The training objective is the common underlying language structure, e.g., the basic grammar, which cannot be completely reflected in any individual dialogue but can be holistically learned from the aggregation of them. Similar examples can be found in healthcare with records from different hospitals \citep{tang2021model}, recommender systems with histories from different customers \citep{afsar2022reinforcement}, and others; more discussions are provided in Appendix~\ref{app:setting}.

While a few empirical investigations have been reported (in particular, under the offline meta-RL framework, e.g., \citet{dorfman2021offline,lin2022model,mitchell2021offline}), theoretical understandings of effectively and efficiently learning with heterogeneous while related data sources are lacking. Motivated by this limitation, this work makes progress in answering the following open question: 
\begin{center}
    \textit{Can we design provably efficient offline RL for a target task with multiple randomly perturbed data sources?}
\end{center}
\noindent\textbf{Challenges.}
Existing offline RL studies typically deal with one type of uncertainty, i.e., the \emph{sample uncertainty} associated with the finite data sampled directly from the target task, which results in distributional shift and partial coverage.
In addition to these, randomly perturbed data sources bring new challenges. First, since multiple data sources are involved, it is important to \emph{jointly aggregate} their sample uncertainties, and to leverage their \emph{collective information}. Moreover, even if every data source is perfectly known, the target task may \emph{not} be fully revealed as the data sources are perturbations of the target. Thus, importantly, an additional type of uncertainty due to a finite number of available data sources should be jointly considered, which is referred to as the \emph{source uncertainty}. 

\begin{table*}[tbh]
    \small
    \centering
    \caption{Related works and their studied settings; see Appendix~\ref{app:related} and Fig.~\ref{fig:related} for more discussions and graphical illustrations.}
    \label{tbl:related}
    \begin{tabular}{c|c|c}
        \hline
          & \textbf{Data source} & \textbf{Evaluation of the learned policy}\\
         \hline
         \Gape{\makecell[c]{Canonical offline RL \\ \citep{xie2021policy,li2022settling}}} & The target MDP & Performance on the target MDP\\
        \Gape{\makecell[c]{Offline robust RL \\\citep{shi2022distributionally,zhou2021finite}}} & The nominal MDP & \makecell[c]{Worst-case performance in \\an uncertainty set around the nominal MDP}\\
         \Gape{\makecell[c]{Offline latent RL \\ (Offline version of \citet{kwon2021rl})}} & \Gape{\makecell[c]{A set of potential MDPs}} & \Gape{\makecell[c]{Average performance on unknown MDPs \\randomly selected from the data source set}}\\
        \Gape{\makecell[c]{Offline federated/multi-task RL \\\citep{zhou2022federated,lu2021power}}} & \Gape{\makecell[c]{A set of task MDPs}} & \Gape{\makecell[c]{Performance of the learned \\ task-dependent policy on each task MDP}}\\
        \rowcolor[gray]{0.9} \Gape{\makecell[c]{Offline RL with perturbed data sources\\ (this work)}} & \Gape{\makecell[c]{A set of MDPs perturbed from\\ the target MDP (Assumption~\ref{asp:relationship})}} & Performance on the target MDP\\
        \hline
    \end{tabular}
\end{table*}

\noindent\textbf{Contributions.}
 To the best of our knowledge, this is the first theoretical work that studies the fundamental limits of offline RL with multiple perturbed data sources and develops provably efficient algorithm designs, which can benefit relevant applications of RL using multiple heterogeneous data sources (e.g., offline meta-RL).  
The contributions are summarized as follows: \\
$\bullet$ We study a new offline RL problem where the datasets are collected from multiple heterogeneous source Markov Decision Processes (MDPs), with possibly different reward and transition dynamics, as opposed to directly from the target MDP. Motivated by practical applications, the data source MDPs are modeled as randomly perturbed versions of the target MDP. \\
$\bullet$ A novel information-theoretic lower bound is derived. It illustrates that in addition to ensuring sufficient sample complexity (i.e., the amount of collected data samples), it is equally (if not more) important to guarantee sufficient source diversity (i.e., the number of involved data sources). This observation is new in the offline RL study and provides useful guidance for practical data collection.\\
$\bullet$ A novel HetPEVI algorithm is proposed with a carefully designed two-part penalty term to ensure pessimistic estimations during learning. Especially, the first part of the penalty jointly aggregates the sample uncertainties associated with each dataset, while the second one provides additional compensations for the source uncertainties, which is uniquely required to handle randomly perturbed data sources. \\
$\bullet$ Theoretical analysis demonstrates that as long as the perturbed data sources collectively provide a good data coverage, HetPEVI can learn the target task efficiently. This \emph{collective} coverage requirement is more practical than the previous \emph{individual} coverage requirement. In particular, it only requires that for each state-action pair induced by the optimal policy on the target task, there \emph{exists} a (potentially different) data source that can provide data samples for it. More importantly, compared with the lower bound, HetPEVI is shown to be optimal up to a polynomial factor of the horizon length regarding its requirements of sample complexity and source diversity. Additional experimental results further corroborate the effectiveness of HetPEVI.\\
$\bullet$ The design principle in HetPEVI is further extended to offline Markov games and offline robust RL with perturbed data sources, which showcases its generality. Importantly, these extensions further validate that learning with perturbed data sources is feasible given a good collective data coverage, while it requires guarantees of both sample complexity and source diversity.


\noindent \textbf{Related Works.} Theoretical understandings of offline RL \citep{levine2020offline} have been gaining increased interest in recent years, where the principle of ``pessimism'' plays an important role. In particular, \citet{xie2021policy, li2022settling, shi2022pessimistic, rashidinejad2021bridging} have investigated the standard tabular setting, while \citet{jin2021pessimism,yin2022near,xie2021bellman,uehara2021pessimistic} studied function approximations. Most of these theoretical advances assume that data are collected directly from the target task. On the other hand, practical RL research has seen growing interest in how to utilize data 
from heterogeneous sources, e.g., meta-RL \citep{mitchell2021offline,dorfman2021offline,lin2022model,li2020focal} and federated RL \citep{zhuo2019federated,jin2022federated}. As a first step to filling this theoretical gap, this work aims at understanding offline RL with multiple randomly perturbed sources. Some particularly related research domains in theoretical RL are discussed in Table~\ref{tbl:related}, which are compared with this work in the studied data sources and evaluation criteria. A detailed literature review can be found in Appendix~\ref{app:related}.

\section{Problem Formulation}\label{sec:formulation}
\subsection{Preliminaries of Episodic MDPs} 
We consider an episodic MDP $\Mc := (H, \Sc, \Ac, \Pb:=\{\Pb_h: h\in [H]\}, r:=\{r_h: h\in [H]\})$. In this tuple, $H$ is the length of each episode, $\Sc$ is the state space with $S:=|\Sc|$, $\Ac$ is the action space with $A := |\Ac|$, $\Pb_h(s'|s,a)$ gives the probability of transiting to state $s'$ if action $a$ is taken upon state $s$ at step $h$, and $r_h(s,a)$ is the deterministic reward in the interval of $[0,1]$ of taking action $a$ for state $s$ at step $h$.\footnote{The assumption of deterministic rewards is standard in theoretical RL \citep{jin2018q,jin2020provably} as the uncertainties in estimating rewards are dominated by those in estimating transitions.}
Specifically, at each step $h\in [H]$, the agent observes state $s_h\in \Sc$, picks action $a_h\in \Ac$, receives reward $r_h(s_h,a_h)$, and then transits to a next state $s_{h+1}\sim \Pb_h(\cdot|s_h,a_h)$. 

A policy $\pi := \{\pi_h(\cdot|s):(s,h)\in\Sc\times [H]\}$ consists of distributions $\pi_h(\cdot|s)$ over the action space $\Ac$. For convenience, we use $\pi_h(s)$ to refer to the chosen action at $(s,h)\in \Sc\times [H]$ by a deterministic policy $\pi$. To measure the performance, the value function of the policy $\pi$ is defined as $V^{\pi,\Mc}_h(s):= \Eb_{\pi,\Mc}[\sum\nolimits_{i=h}^H r_{i}(s_{i},a_{i})|s_h = s]$ for all $(s, h) \in \Sc\times [H]$,
where the expectation $\Eb_{\pi, \Mc}[\cdot]$ is with respect to (w.r.t.) the random trajectory induced by policy $\pi$ on MDP $\Mc$. Similarly, the $Q$-function of $\pi$ can be defined as $Q^{\pi,\Mc}_h(s,a):= \Eb_{\pi,\Mc}[\sum\nolimits_{i=h}^H r_{i}(s_{i},a_{i})|s_h = s, a_h = a]$ for all $(s,a,h)\in \Sc\times \Ac\times [H]$.
If the initial state is drawn from a distribution $\xi\in\Delta(\Sc)$, the following notation is adopted: $V^{\pi,\Mc}_1(\xi): = \Eb_{s\sim \xi}[V^{\pi, \Mc}_1(s)]$.

To characterize the state and state-action occupancy distribution induced by policy $\pi$ on MDP $\Mc$ at each step, we denote that $d^{\pi,\Mc}_h(s;\xi): = \Eb_{\pi,\Mc} \left[\oneb\left\{s_h = s\right\}|s_1\sim \xi\right]$ and $d^{\pi, \Mc}_h(s,a; \xi) := \Eb_{\pi,\Mc} \left[\oneb\left\{s_h = s, a_h = a\right\}|s_1\sim \xi\right]$,
where the expectation is conditioned on $s_1\sim \xi$. Whenever it is clear from the context, we simplify the notations as $d^{\pi,\Mc}_h(s) := d^{\pi,\Mc}_h(s;\xi)$ and $d^{\pi,\Mc}_h(s, a) := d^{\pi,\Mc}_h(s, a;\xi)$.

\subsection{Learning Goal}\label{subsec:target}
This work considers a target task modeled by an MDP $\Mc = (H, \Sc, \Ac, \Pb, r)$ as introduced above. The goal is to find a good policy for this target MDP using certain existing datasets, i.e., offline learning. Especially, the sub-optimality gap of a policy $\hat\pi$ on $\Mc$ w.r.t. $s_1\sim \xi$ is defined as follows:
\begin{equation}\label{eqn:gap_def}
    \text{Gap}(\hat\pi;\Mc, \xi) := V^{\pi^*,\Mc}_{1}(\xi) - V^{\hat\pi, \Mc}_{1}(\xi),
\end{equation}
where $\pi^*:= \argmax_{\pi} V^{\pi,\Mc}_1(\xi)$ is the optimal (deterministic) policy on the target MDP $\Mc$. Correspondingly, an output policy $\hat\pi$ is called $\varepsilon$-optimal if $\text{Gap}(\hat\pi;\Mc, \xi)\leq \varepsilon$. 

\subsection{Data Sources and The Task-Source Relationship}
Instead of assuming data sampled directly from the unknown target task MDP $\Mc$, this work considers that the learning agent has access to datasets from $L$ different data sources. Each data source is also an unknown MDP, and the $l$-th data source can be represented as $\Mc_l = (H, \Sc, \Ac,  \Pb_l, r_l)$. To capture heterogeneity, each data source $\Mc_l$ may not exactly match the target task $\Mc$. Concretely, despite the same episodic length, state space, and action space,  their transition and reward dynamics are not necessarily aligned, i.e., possibly, $\Pb_{h,l}(\cdot|s,a)\neq \Pb_{h}(\cdot|s,a)$ and $r_{h,l}(s,a)\neq r_{h}(s,a)$.

In practical applications, while being heterogeneous, the data source MDPs are often still related to the target task (e.g., the dialogue dataset example in Section~\ref{sec:intro}). In particular, data sources in offline meta-RL are often assumed to be sampled from one certain distribution \citep{mitchell2021offline}. Thus, the following relationship is considered between the target MDP and the data source MDPs.  
\begin{assumption}[Task--source relationship]\label{asp:relationship}
    Data source MDPs $\{\Mc_l = \{H, \Sc, \Ac, \Pb_{l}, r_l\}: l\in [L]\}$ are generated from an unknown set of distributions $g = \{g_h: h\in [H]\}$ such that for each $(l, h)\in [L]\times [H]$, the reward and transition $\{r_{h,l}, \Pb_{h,l}\}$ are independently sampled from the distribution $g_h(\cdot)$ whose expectation is $\{r_{h}, \Pb_{h}\}$ of the target MDP $\Mc=\{H, \Sc, \Ac, \Pb, r\}$.
\end{assumption}
The requirement that rewards are random samples with the expectation as the target task is commonly adopted in bandits literature \citep{shi2021federated,zhu2022random}, and the same requirement on the transition dynamics is a natural extension, where one representative example is to follow a Dirichlet distribution \citep{marchal2017sub}.

\shir{\begin{remark}\label{rmk:worst}
    This work essentially considers a ``worst-case'' scenario in the sense that our proposed designs and obtained results are for any generation process satisfying Assumption~\ref{asp:relationship}, i.e., the generation process $g$ exists and has an expectation as the target task $\Mc$; the other properties of $g$ (e.g., its variance) are not specified but our designs and results still hold.
\end{remark}}        

\begin{figure}[tbh]
	\centering
 \setlength{\abovecaptionskip}{-6pt}\includegraphics[width=0.4\textwidth]{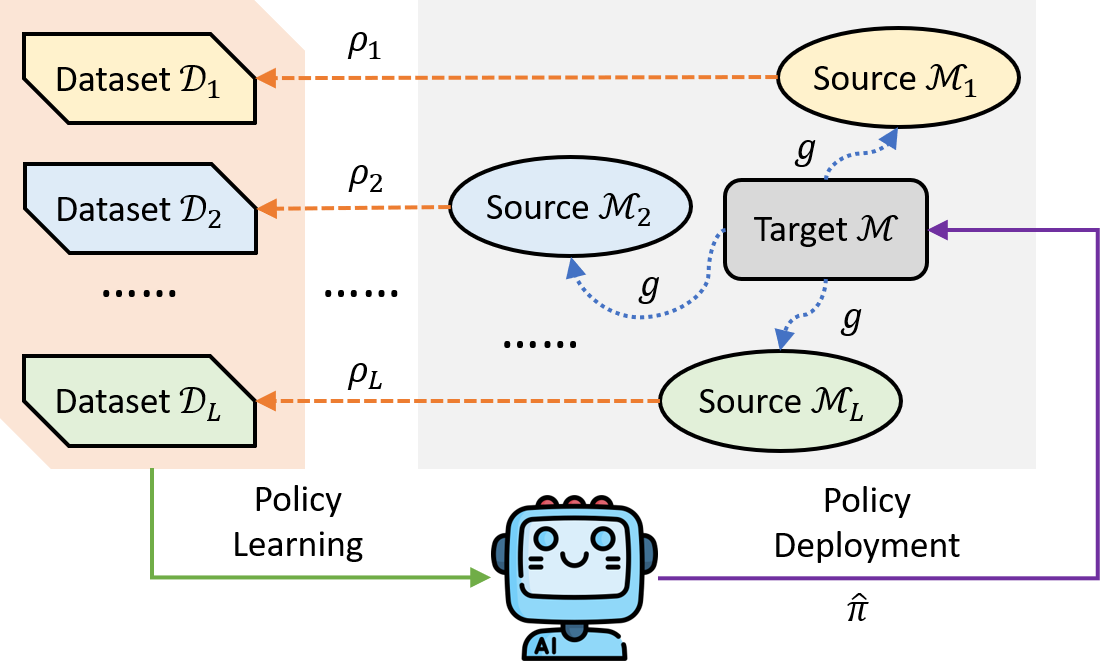}
 \caption{Problem overview: dotted (blue and orange) lines indicate the collection of datasets (with randomness from both source generation and data sampling), while the solid (green and purple) lines are for policy learning and deployment. Especially, the agent aims at solving a target MDP $\Mc$ but lacks direct access. Instead, available datasets $\{\Dc_l: l\in [L]\}$ are collected via behavior policies $\{\rho_l: l\in [L]\}$  from data source MDPs $\{\Mc_l: l\in [L]\}$ that are randomly perturbed from the target MDP $\Mc$ (through distribution $g$ in Assumption~\ref{asp:relationship}). With such datasets, the agent learns a policy $\hat{\pi}$ offline, which is deployed (potentially in the future) on the target MDP $\Mc$ with its performance gap measured by Eqn.~\eqref{eqn:gap_def}. See Fig.~\ref{fig:model_full} for additional graphical illustrations.}
	\label{fig:model}
\end{figure}

\subsection{Collections of Datasets}
We consider that from each data source $\Mc_l$, a dataset $\Dc_l := \{(s^{k}_{1,l}, a^{k}_{1,l}, r^k_{1,l}, \cdots, s^{k}_{H,l}, a^{k}_{H,l}, r^{k}_{H,l}):k\in [K]\}$ is collected, which consists of $K$ independent trajectories sampled by a (possibly different) unknown behavior policy $\rho_l$ with a (possibly different) initial state distribution $\xi_l$. More specifically, the $k$-th trajectory in dataset $\Dc_l$ is generated according to $s^k_{1,l} \sim \xi_l(\cdot)$, $a^k_{h,l} \sim \rho_{h,l}(\cdot|s^{k}_{h,l})$, $r^k_{h,l} = r_{h,l}(s^k_{h,l},a^{k}_{h,l})$, and $s^k_{h+1,l}\sim \Pb_{h,l}(\cdot|s^k_{h,l}, a^k_{h,l})$.

It can be observed that although the trajectories in the collected datasets are independently collected, the sampled transitions from the same episode are still correlated. A two-fold sub-sampling technique is developed in \citet{li2022settling} to alleviate such temporal dependencies and re-create datasets where the sampled transitions is independent of each other. To ease the presentation, we denote $\{\Dc'_l: l\in [L]\}$ as the datasets re-created from the original  $\{\Dc_l: l\in [L]\}$ with the two-fold sub-sampling. Details on the sub-sampling technique are provided in Appendix~\ref{app:subsampling}.

A compact overview of the studied offline learning problem is provided in Fig.~\ref{fig:model}, whose complete step-by-step version ca be found in Fig.~\ref{fig:model_full}.

\noindent\textbf{Miscellaneous.} 
Notations without subscripts $l$ generally refer to the target MDP $\Mc$, while subscript $l$ is added when discussing each individual data source $\Mc_l$. For any function $f:\Sc\to \Rb$, the transition operator and Bellman operator of the target MDP $\Mc$ at each step $h\in [H]$ are defined, respectively, as $(\Pb_h f)(s,a) := \Eb_{s' \sim \Pb_h(\cdot|s,a)}[f(s')|s,a]$ and $(\Bb_h f)(s,a) := r_h(s,a) + (\Pb_h f)(s,a)$. The notation $c$ is used throughout the paper with varying values to represent a constant of order $O(1)$. Lastly, the notation $y \gtrsim x$ compactly denotes that $y \geq x \log(KHSAL/\delta)$, where $\delta$ is a constant in $(0,1)$, while $x\vee y: = \max\{x,y\}$ and $x\wedge y := \min\{x,y\}$.

\section{Lower Bound Analysis}
With $\Lc_h(s,a): = \{l\in [L]: d^{\rho_l, \Mc_l}_{h}(s,a; \xi_l)>0\}\subseteq [L]$ as the set of data sources that can visit $(s, a, h)$, the following two quantities are introduced: the minimum number of sources that cover each possible visitations of the optimal policy $\pi^*$ on the target MDP $\Mc$ is defined as
    \begin{align*}
        L^\dagger:= \min\left\{|\Lc_h(s,a)|: (s, a, h) \text{ s.t. } d^{\pi^*, \Mc}_h(s,a;\xi) > 0\right\},
    \end{align*}
    and the collective coverage parameter is defined as
    \begin{equation*}
        C^{\dagger}:=\max_{(s, a, h)}\left\{\sum_{l\in \Lc_h(s,a)}\frac{\min\left\{d^{\pi^*, \Mc}_{h}(s,a;\xi), \frac{1}{S}\right\}}{|\Lc_h(s,a)|\cdot d^{\rho_l, \Mc_l}_{h}(s,a;\xi_l)}\right\},
    \end{equation*} 
    where we adopt the convention that $0/0=0$. Note that $L^{\dagger}$ captures how many sources provide information on the optimal policy by counting the useful ones, while $C^\dagger$ further characterizes how well these data sources collectively provide information by comparing their aggregated occupancy probability with the optimal policy. The clipping with $1/S$ follows the definition of the single-policy clipped coverage parameter recently proposed in \citet{li2022settling}.
    
   The following novel information-theoretic lower bound is established to provide fundamental limits.
\begin{theorem}\label{thm:lower_bound}
    For any $(H, S, L^{\dagger}, C^{\dagger}, \varepsilon)$ obeying $H\geq 4$, $C^\dagger\geq 4/S$ and $\varepsilon \leq c_0H$, 
    if either of the following two conditions is not satisfied:
    \begin{align*}
        L^\dagger K \geq c_1\frac{C^\dagger H^2S}{\varepsilon^2},\qquad  
        L^\dagger \geq c_2\frac{H^2}{\varepsilon^2},
    \end{align*}
    one can construct two target MDPs $\{\Mc^0, \Mc^1\}$, an initial state distribution $\xi$, a data source generation distribution set $g$, and datasets $\{\Dc_l: l\in [L]\}$, such that
    \begin{align*}
        \inf\nolimits_{\hat{\pi}}\max\nolimits_{\phi\in \{0,1\}}\left\{\bP_{\phi}\left({\normalfont\text{Gap}}(\hat\pi;\Mc^{\phi}, \xi) > \varepsilon \right)\right\} \geq \frac{1}{8},
    \end{align*}
    where $c_0$, $c_1$ and $c_2$ are universal constants, the infimum is taken over all estimators $\hat{\pi}$, and $\bP_0$ (resp. $\bP_1$) denotes the probability when the target MDP is $\Mc^0$ (resp. $\Mc^1$).
\end{theorem}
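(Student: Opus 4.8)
I would prove the bound by a two-point (Le Cam) argument: exhibit a pair of target MDPs $\Mc^0,\Mc^1$, a shared generation law $g$, behavior policies $\{\rho_l\}$ with initial distributions $\{\xi_l\}$, and datasets $\{\Dc_l\}$, so that (i) no policy is simultaneously $\varepsilon$-optimal on $\Mc^0$ and $\Mc^1$, and (ii) the induced dataset laws satisfy $\mathrm{TV}(\bP_0,\bP_1)\le 3/4$ whenever one of the two displayed conditions fails. The theorem then follows from the standard two-point testing reduction $\inf_{\hat\pi}\max_\phi \bP_\phi(\text{Gap}(\hat\pi;\Mc^\phi,\xi)>\varepsilon)\ge\tfrac12(1-\mathrm{TV}(\bP_0,\bP_1))\ge\tfrac18$, using that ``$\varepsilon$-optimal on $\Mc^0$'' and ``$\varepsilon$-optimal on $\Mc^1$'' are mutually exclusive because $\pi^*_{\Mc^0}$ and $\pi^*_{\Mc^1}$ will disagree at the single decision state below.

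\emph{Construction.} I would concentrate all difficulty at one decision state $s^\star$ entered at an early step $h_0$ (taking $h_0=2$, so $H-h_0=\Theta(H)$ as $H\ge4$). From $\xi$, the optimal policy reaches $s^\star$ with probability one; at $s^\star$ the two actions $\{a,b\}$ lead to an absorbing ``good'' state (per-step reward $1$) or an absorbing ``bad'' state (reward $0$), with good-transition probabilities $\tfrac12\pm\eta$ swapped between $\Mc^0$ and $\Mc^1$. Then $\pi^*_{\Mc^0}(s^\star)\ne\pi^*_{\Mc^1}(s^\star)$ and the wrong action costs $\text{Gap}\asymp\eta(H-h_0)\asymp\eta H$; choosing $\eta\asymp\varepsilon/H$ (legal since $\varepsilon\le c_0 H$) makes the wrong action yield $\text{Gap}>\varepsilon$. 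The generation law $g_{h_0}$ draws the good-transition probability at $(s^\star,a)$ (and its mirror at $(s^\star,b)$) from a $\mathrm{Beta}$-type distribution on $[0,1]$ of constant variance $\beta^2=\Theta(1)$ centered at the target value $\tfrac12\pm\eta$, so that its mean equals the target as Assumption~\ref{asp:relationship} requires; everything else in $\Mc^\phi$ and $g$ is deterministic. To realize the coverage parameters, I would let $L^\dagger$ of the $L$ behavior policies reach $s^\star$ with probability $q\asymp\tfrac{1}{SC^\dagger}$ and split $a,b$ evenly, which makes the clipped ratio $\min\{d^{\pi^*}_{h_0}(s^\star,\cdot),1/S\}/(|\Lc_{h_0}(s^\star,\cdot)|\,d^{\rho_l}_{h_0}(s^\star,\cdot))$ equal the prescribed $C^\dagger$ (the clip is active since $d^{\pi^*}_{h_0}(s^\star,\cdot)=1\ge 1/S$), while the remaining $L-L^\dagger$ sources avoid $s^\star$, so $L^\dagger$ is exactly the source-coverage count in the bound. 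The constraints $q\le1$ and validity of $\tfrac12\pm\eta\pm O(\beta)$ inside $[0,1]$ are where $C^\dagger\ge 4/S$ and $\varepsilon\le c_0 H$ enter.

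\emph{Information bound.} The crux is $\mathrm{KL}(\bP_0\|\bP_1)$ with $\bP_\phi=\bigotimes_l \bP_\phi^{\Dc_l}$, where each $\bP_\phi^{\Dc_l}$ is a mixture over the source's realized good-transition probability $X_l\sim g_{h_0}^\phi$; conditioned on $X_l$, the dataset law of source $l$ does not depend on $\phi$, so all dependence on $\phi$ passes through the mixing law. A short computation (posterior-variance, or $f$-divergence data processing) shows that source $l$, which collects $N_l\asymp Kq\asymp\tfrac{K}{SC^\dagger}$ informative transitions at $s^\star$, contributes $\mathrm{KL}(\bP_0^{\Dc_l}\|\bP_1^{\Dc_l})\lesssim \eta^2\min\{N_l,1/\beta^2\}\asymp\eta^2\min\{N_l,1\}$ — each source is at best a single noisy observation of the target mean (noise variance $\beta^2=\Theta(1)$) and is weaker still when data-starved. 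Summing over the $L^\dagger$ useful sources gives $\mathrm{KL}(\bP_0\|\bP_1)\lesssim \eta^2\min\{L^\dagger,\tfrac{L^\dagger K}{SC^\dagger}\}\asymp\tfrac{\varepsilon^2}{H^2}\min\{L^\dagger,\tfrac{L^\dagger K}{SC^\dagger}\}$, which is $O(1)$ — hence $\mathrm{TV}(\bP_0,\bP_1)\le3/4$ by Pinsker — exactly when $L^\dagger\lesssim H^2/\varepsilon^2$ or $L^\dagger K\lesssim SC^\dagger H^2/\varepsilon^2$, i.e.\ whenever one of the two conditions is violated. Together with (i) this gives the theorem. (Equivalently one can run two separate two-point constructions: $g_{h_0}$ a point mass for the first condition — where the factor $S$ enters purely through the clip forcing $q\asymp1/(SC^\dagger)$, hence few samples per source — and $g_{h_0}$ of constant variance with $K$ large for the second, which is insensitive to $K$ and $S$ because finitely many sources cannot pin down a $\Theta(1)$-variance mean.)

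\emph{Main obstacle.} The delicate step is the per-source estimate $\mathrm{KL}(\bP_0^{\Dc_l}\|\bP_1^{\Dc_l})\lesssim\eta^2\min\{N_l,1\}$: this is where sample uncertainty and source uncertainty must be combined in the correct order (an interpolation, not a bound that sees only one bottleneck), and it requires $g$ to be smooth enough (Beta-type) that $\mathrm{KL}(g^0\|g^1)$ is finite. Handling the randomness of $N_l$ (a $\mathrm{Binomial}(K,q)$ count) and the two-fold sub-sampling is routine, since all stochasticity sits at step $h_0$ and $\{\Dc'_l\}$ is a function of $\{\Dc_l\}$, so bounding $\mathrm{TV}$ for the full datasets suffices. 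The remaining work is bookkeeping: verifying that for every admissible $(H,S,L^\dagger,C^\dagger,\varepsilon)$ one can pick $\eta,\beta,q$ and the number of sources so that the target value gap, the prescribed clipped $C^\dagger$, and the information budget are all met by genuine probability distributions, including the boundary cases $C^\dagger$ near $4/S$ and $\varepsilon$ near $c_0 H$.
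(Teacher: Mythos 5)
Your proposal is correct and follows essentially the same route as the paper's proof: a Le Cam two-point construction concentrated at a single early decision state, with initial mass $\asymp 1/(SC^\dagger)$ and good/bad behavior policies realizing $L^\dagger$ and the clipped $C^\dagger$, a mean-preserving perturbation law $g$, and a KL bound that is the minimum of a source-count term and a sample-count term, instantiated in two parameter regimes to yield the two necessary conditions. The only differences are cosmetic: the paper takes $g$ to be a two-point mixture of two auxiliary MDPs $\Nc^0,\Nc^1$ (with the interpolation obtained via the log-sum inequality and two choices of the mixture weight $\alpha$ and separation $\Delta$) rather than a Beta-type law, and invokes the exponential form of the two-point testing bound rather than Pinsker.
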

It can be observed that this lower bound has two requirements: (1) sample complexity, i.e., $L^\dagger K  = \Omega(C^\dagger H^2S/\varepsilon^2)$; (2) source diversity, i.e., $L^\dagger = \Omega(H^2/\varepsilon^2)$. While similar requirements on sample complexity have appeared in previous studies \cite{li2022settling, rashidinejad2021bridging}, the one derived here is established collectively through $C^\dagger$ and $L^\dagger$ on the aggregation of heterogeneous data sources with different behavior policies. To the best of our knowledge, the second requirement on source diversity appears in offline RL studies for the first time. It provides a key observation that without enough data sources that provide useful information, the target MDP cannot be efficiently learned even with infinite data samples from each data source. The combination of these two requirements indicates that it is equally (if not more) important to involve sufficient high-quality sources as to sample adequate data from each of them, which is a helpful principle to guide practical data collection.

\begin{algorithm}[tbh]
	\caption{HetPEVI}
	\label{alg:HetPEVI}
	\begin{algorithmic}[1]
	    \STATE {\bfseries Input:} Dataset $\Dc = \{D_l:l\in[L]\}$
     \STATE Obtain $\Dc'_l \gets \text{subsampling}(\Dc_l), \forall l\in [L]$
     \STATE For each $(s, a, h)\in \Sc \times \Ac \times [H]$, first obtain $\hat{\Lc}_h(s,a)$; then for each $l\in \hat{\Lc}_h(s,a)$, estimate $\hat{r}_{h,l}(s,a)$ and $\hat{\Pb}_{h,l}(\cdot|s,a)$; lastly, aggregate $\hat{r}_h(s,a)$ and $\hat{\Pb}_h(\cdot|s,a)$ \hfill\COMMENT{\textit{See Section~\ref{subsec:estimations}}}
     \STATE Initialize $\hat{V}_{H+1}(s) \gets 0, \forall s\in \Sc$ \hfill\COMMENT{\textit{See Section~\ref{subsec:PEVI}}}
	    \FOR{$h = H, H-1,\cdots, 1$} 
        \FOR{$(s,a) \in \Sc\times \Ac$}
        \STATE $\Gamma^\alpha_h(s,a) \gets c\sqrt{\sum\nolimits_{l\in \hat{\Lc}_h(s,a)}\frac{H^2\log(SAH/\delta)}{(\hat{L}_h(s,a))^2N_{h,l}(s,a)}}$
        \STATE $\Gamma^\beta_h(s,a) \gets c\sqrt{H^2\log(SAH/\delta)/{\hat{L}_h(s,a)}}$
        \STATE $\Gamma_h(s,a) \gets \min\{\Gamma^\alpha_h(s,a) + \Gamma^\beta_h(s,a), H\}$
        \STATE $\hat Q_h(s,a) \gets \max\{(\hat{\Bb}_h \hat{V}_{h+1})(s,a)- \Gamma_h(s,a), 0\}$
        \ENDFOR
        \FOR{$s\in \Sc$}
        \STATE $\hat{\pi}_h(s) \gets \argmax_{a\in\Ac} \hat{Q}_h(s,a)$
        \STATE $\hat{V}_h(s) \gets \hat{Q}_h(s,\hat{\pi}_h(s))$
	    \ENDFOR
        \ENDFOR
	    \STATE {\bfseries Output:} policy $\hat\pi = \{\hat\pi_h(s): (s,h)\in \Sc\times [H]\}$
	\end{algorithmic}
	\end{algorithm}

\section{The HetPEVI Algorithm}
In this section, we present a  novel model-based algorithm, termed HetPEVI, to perform offline RL with perturbed data sources, which is summarized in Algorithm~\ref{alg:HetPEVI}.

\subsection{Constructing Empirical Estimations}\label{subsec:estimations}
The HetPEVI algorithm begins by counting the number of visitations in the available datasets. Especially, we denote $N_{h,l}(s,a)$ and $N_{h,l}(s,a,s')$ as the amount of visitations on each tuple $(s,a,h)$ and $(s,a,h,s')$ in dataset $\Dc'_{l}$, respectively. Then, the subset of datasets that have non-zero visitations on tuple $(s,a,h)$ can be found as $
    \hat{\Lc}_h(s,a) := \{l \in [L]: N_{h,l}(s,a) >0\}$,
whose size is denoted as $\hat{L}_h(s,a) := |\hat{\Lc}_h(s,a)|$.
Empirical estimations of rewards and transitions are then obtained for each tuple $(s,a,h)\in \Sc\times \Ac\times [H]$ and each source $l\in \hat{\Lc}_h(s,a)$ as $\hat{r}_{h,l}(s,a) = r_{h,l}(s,a)$ and $\hat{\Pb}_{h,l}(s'|s,a) = N_{h,l}(s,a,s')/N_{h,l}(s,a)$.
These individual estimates are further aggregated into overall estimates for each tuple $(s, a, h, s')\in \Sc\times \Ac\times [H]\times \Sc$ as $\hat{r}_h(s,a)  = \sum\nolimits_{l\in \hat{\Lc}_h(s,a)}\hat{r}_{h,l}(s,a)/(\hat{L}_h(s,a)\vee 1)$ and $   \hat{\Pb}_h(s'|s,a) = \sum\nolimits_{l\in \hat{\Lc}_h(s,a)}\hat{\Pb}_{h,l}(s'|s,a)/(\hat{L}_h(s,a)\vee 1)$.
Note that in these estimations of
transitions and rewards, only the data sources that provide non-zero visitations are counted, which may differ for each tuple $(s, a, h)$.

\subsection{Considering Two Types of Uncertainties}\label{subsec:PEVI}
With the obtained estimations, HetPEVI iterates backward from the last step to the first step as
\begin{align}
        &\hat Q_h(s,a) = \max\big\{\big(\hat{\Bb}_h \hat{V}_{h+1}\big)(s,a)- \Gamma_h(s,a), 0\big\},\label{eqn:VI}\\
	        &\hat\pi_h(s) = \arg\max\nolimits_{a\in\Ac} \hat{Q}_h(s,a),\quad \hat{V}_h(s) = \hat{Q}_h(s,\hat\pi_h(s))\notag,
\end{align}
with $\hat{V}_{H+1}(s) = 0, \forall s\in \Sc$, and the empirical Bellman operator $\hat\Bb_h$  defined as $(\hat\Bb_h \hat{V}_{h+1}\big)(s,a) : = \hat{r}_h(s,a) + \big(\hat\Pb_h \hat{V}_{h+1})(s,a)$,
where $(\hat\Pb_h \hat{V}_{h+1})(s,a)$ is the empirical version of $(\Pb_h \hat{V}_{h+1})(s,a)$ using the estimated $\hat\Pb_h(\cdot|s,a)$.
The essence of this procedure is that instead of directly setting  $\hat{Q}_h(s,a)$ as $(\hat{\Bb}_h \hat{V}_{h+1})(s,a)$ (as in the standard value iteration), a penalty term $\Gamma_h(s,a)$ is subtracted, which serves the important role of keeping the estimations $\hat{V}_h(s)$ and $\hat{Q}_h(s,a)$ pessimistic and providing conservative actions. 

Previous offline RL studies \citep{jin2021pessimism, rashidinejad2021bridging,xie2021policy,li2022settling} only deal with one single data source (i.e., the target MDP) and thus only one type of uncertainty due to a finite number of data samples. Instead, the agent in this work needs to process multiple heterogeneous datasets, while none of them individually characterize the target task. Thus, as mentioned in Section~\ref{sec:intro}, the agent faces two \emph{coupled} uncertainties. First, the \emph{sample uncertainties} associated with each data source need to be jointly aggregated instead of being measured individually to leverage collective information. Second, even with perfect knowledge of each data source, the target MDP may not be fully revealed. As a result, the agent also needs to consider the uncertainties from the limited number of data sources, i.e., the \emph{source uncertainties}.

To address the two uncertainties, the penalty term is designed to have two parts as follows:
\begin{equation*}
    \Gamma_h(s,a) = \min\big\{\Gamma^\alpha_h(s,a) + \Gamma^\beta_h(s,a), H\big\},
\end{equation*}
where $\Gamma^\alpha_h(s,a)$ aggregates the sample uncertainties while $\Gamma^\beta_h(s,a)$ accounts for the source uncertainties.

\noindent\textbf{Penalties to Aggregate Sample Uncertainties.} The first part of the penalty, i.e., $\Gamma^\alpha_h(s,a)$, is designed as
\begin{equation*}
    \Gamma^\alpha_h(s,a) = c\sqrt{\frac{1}{(\hat{L}_h(s,a))^2}\sum\nolimits_{l\in \hat{\Lc}_h(s,a)}\frac{H^2\log(SAH/\delta)}{ N_{h,l}(s,a)}}.
\end{equation*}
Note that this design avoids the data sources that have zero visitations on this tuple $(s, a, h)$ and is a joint measure of sample uncertainties from the other sources (instead of directly summing up their individual sample uncertainties as $\tilde{O}(\frac{1}{\hat{L}_h(s,a)} \sum_{l\in \hat{\Lc}_h(s,a)}\sqrt{\frac{H^2}{N_{h,l}(s,a)}})$). These designs are important to accelerate learning and obtain the near-optimal performance illustrated later in Section~\ref{sec:theory}.

\noindent\textbf{Penalties to Account for Source Uncertainties.}
The second part of the penalty $\Gamma^\beta_h(s,a)$ serves the important role of measuring the uncertainties due to the limited amount of data sources, which is designed as:
\begin{equation*}
    \Gamma^\beta_h(s,a) = c\sqrt{\frac{H^2\log(SAH/\delta)}{\hat{L}_h(s,a)}}.
\end{equation*}
Intuitively, it shrinks with the number of datasets that provides information on the tuple $(s, a, h)$, i.e., $\hat{L}_h(s,a)$, which thus may differ among state-action pairs. Jointly using the two penalty terms, the overall uncertainties in the datasets can be compensated. Especially, with high probability, for all $(s,a,h)\in \Sc\times \Ac\times [H]$, it can be ensured that  $|(\hat{\Bb}_h \hat{V}_{h+1})(s,a) - (\Bb_h \hat{V}_{h+1})(s,a) |\leq \Gamma_{h}(s,a)$.

\begin{remark}\label{rmk:adaptive_design}
    The adopted penalty $\Gamma^\beta_h(s,a)$ for source uncertainty is intended to accommodate any unknown variance between sources and the task, i.e., a worst-case consideration \shir{as mentioned in Remark~\ref{rmk:worst}}. However, if there is prior knowledge of the variance, it is feasible to incorporate such information. In particular, if the rewards and transition vectors \shir{(at each $(s,a,h)$)} are generated via $\sigma_g$-sub-Gaussian distributions, the penalty for source uncertainties can be designed as $\Gamma^\beta_h(s,a) = c\sqrt{\frac{H^2\sigma_g^2\log(SAH/\delta)}{\hat{L}_h(s,a)}}$.
\end{remark}

\section{Performance Analysis}\label{sec:theory}
This section provides a theoretical analysis of HetPEVI. In particular, the following performance guarantee can be established.
\begin{theorem}[HetPEVI]\label{thm:HetPEVI}
    Under Assumption~\ref{asp:relationship}, with probability at least $1-\delta$, the output policy $\hat\pi$ of HetPEVI satisfies
    \begin{equation*}
        {\normalfont\text{Gap}}(\hat\pi;\Mc, \xi) = \tilde{O}\left(\sqrt{\frac{C^\dagger H^4S}{L^\dagger K}} +\sqrt{\frac{H^4}{L^\dagger}}\right),
    \end{equation*}
    when $K \gtrsim c/d^{\min}$, where $d^{\min}:=\min\{d^{\rho_l, \Mc_l}_h(s, a): (s, a, h, l)  \text{ s.t. } d^{\pi^*,\Mc}_h(s, a)>0, d^{\rho_l, \Mc_l}_h(s, a)>0\}$.
\end{theorem}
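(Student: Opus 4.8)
The plan is to follow the ``pessimism'' paradigm of \citet{jin2021pessimism,xie2021policy,li2022settling}, tailored to the aggregated two-part penalty. Start from the decomposition
\begin{equation*}
\text{Gap}(\hat\pi;\Mc,\xi) = \underbrace{\big(V^{\pi^*,\Mc}_1(\xi) - \hat V_1(\xi)\big)}_{(\mathrm{I})} + \underbrace{\big(\hat V_1(\xi) - V^{\hat\pi,\Mc}_1(\xi)\big)}_{(\mathrm{II})}.
\end{equation*}
Since $\hat Q_h(s,a)\le\max\{(\hat\Bb_h\hat V_{h+1})(s,a),0\}$ and the penalty is nonnegative, a backward induction gives $\hat V_h(s)\le V^{\hat\pi,\Mc}_h(s)$ for all $(s,h)$, so $(\mathrm{II})\le 0$ and can be dropped. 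For $(\mathrm{I})$, condition on the penalty-validity event $\Ec:=\big\{\,\forall(s,a,h):\;|(\hat\Bb_h\hat V_{h+1})(s,a)-(\Bb_h\hat V_{h+1})(s,a)|\le\Gamma_h(s,a)\,\big\}$. On $\Ec$, using $\hat V_h(s)\ge\hat Q_h(s,\pi^*_h(s))\ge(\hat\Bb_h\hat V_{h+1})(s,\pi^*_h(s))-\Gamma_h(s,\pi^*_h(s))\ge(\Bb_h\hat V_{h+1})(s,\pi^*_h(s))-2\Gamma_h(s,\pi^*_h(s))$ together with $(\Bb_hV^{\pi^*,\Mc}_{h+1})(s,\pi^*_h(s))=V^{\pi^*,\Mc}_h(s)$ yields the recursion $V^{\pi^*,\Mc}_h(s)-\hat V_h(s)\le\big(\Pb_h(V^{\pi^*,\Mc}_{h+1}-\hat V_{h+1})\big)(s,\pi^*_h(s))+2\Gamma_h(s,\pi^*_h(s))$, which unrolls along the occupancy of $\pi^*$ to
\begin{equation*}
\text{Gap}(\hat\pi;\Mc,\xi)\ \le\ 2\sum_{h=1}^H\sum_{(s,a)}d^{\pi^*,\Mc}_h(s,a;\xi)\,\Gamma_h(s,a).
\end{equation*}
So everything reduces to (i) showing $\Ec$ holds with probability at least $1-\delta$, and (ii) bounding this occupancy-weighted penalty sum.

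\textbf{Step (i).} Decompose the error at $(s,a,h)$ into the reward part $\hat r_h-r_h$, the \emph{sample} part $\tfrac{1}{\hat L_h(s,a)}\sum_{l\in\hat\Lc_h(s,a)}\big((\hat\Pb_{h,l}-\Pb_{h,l})\hat V_{h+1}\big)(s,a)$, and the \emph{source} part $\tfrac{1}{\hat L_h(s,a)}\sum_{l\in\hat\Lc_h(s,a)}(\Pb_{h,l}\hat V_{h+1})(s,a)-(\Pb_h\hat V_{h+1})(s,a)$. The source part (and $\hat r_h-r_h$) is handled by Hoeffding over the $\hat L_h(s,a)$ summands, each in $[0,H]$; the essential point is that since Assumption~\ref{asp:relationship} draws $\{r_{h,l},\Pb_{h,l}\}$ \emph{independently across} $h$, the event $l\in\hat\Lc_h(s,a)$ (i.e.\ $N_{h,l}(s,a)>0$) is measurable w.r.t.\ $\{\Pb_{h',l}\}_{h'<h}$ and the sampling noise, hence independent of $\Pb_{h,l}(\cdot|s,a)$; consequently, conditioned on $\hat\Lc_h(s,a)$ and on $\hat V_{h+1}$ (a function of layers $>h$ only), the vectors $\{\Pb_{h,l}(\cdot|s,a):l\in\hat\Lc_h(s,a)\}$ are i.i.d.\ from $g_h$ with mean $\Pb_h(\cdot|s,a)$, giving a deviation $\lesssim\sqrt{H^2\log(\cdot)/\hat L_h(s,a)}=\Gamma^\beta_h(s,a)$. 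For the sample part, first use the two-fold sub-sampling of \citet{li2022settling} (yielding $\Dc'_l$) to make the layer-$h$ empirical transitions independent of $\hat V_{h+1}$, then write $\big((\hat\Pb_{h,l}-\Pb_{h,l})\hat V_{h+1}\big)(s,a)$ as an average of $N_{h,l}(s,a)$ i.i.d.\ mean-zero terms bounded by $H$; applying a \emph{single} Hoeffding bound to the jointly independent family indexed by $(l,\text{sample})$, with coefficient $1/(\hat L_h(s,a)N_{h,l}(s,a))$ on each term, produces exactly $\lesssim\sqrt{\frac{1}{(\hat L_h(s,a))^2}\sum_{l}\frac{H^2\log(\cdot)}{N_{h,l}(s,a)}}=\Gamma^\alpha_h(s,a)$ — which is why the aggregation carries $(\hat L_h(s,a))^{-2}$ rather than the naive $(\hat L_h(s,a))^{-1}$. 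A union bound over $(s,a,h)$ (with the standard covering over discretized $\hat V_{h+1}$) closes Step (i).

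\textbf{Step (ii).} The hypothesis $K\gtrsim c/d^{\min}$, via a multiplicative Chernoff bound and a union bound, gives a high-probability event on which, for every $(s,a,h)$ with $d^{\pi^*,\Mc}_h(s,a;\xi)>0$, every $l$ with $d^{\rho_l,\Mc_l}_h(s,a;\xi_l)>0$ has $N_{h,l}(s,a)\gtrsim K\,d^{\rho_l,\Mc_l}_h(s,a;\xi_l)>0$, so $\hat\Lc_h(s,a)=\Lc_h(s,a)$ and $\hat L_h(s,a)=|\Lc_h(s,a)|\ge L^\dagger$. Hence $\Gamma^\beta_h(s,a)\lesssim\sqrt{H^2\log(\cdot)/L^\dagger}$, and by the definition of $C^\dagger$, $\Gamma^\alpha_h(s,a)\lesssim\sqrt{\frac{H^2\log(\cdot)}{K}\cdot\frac{1}{|\Lc_h(s,a)|^2}\sum_{l\in\Lc_h(s,a)}\frac{1}{d^{\rho_l,\Mc_l}_h(s,a;\xi_l)}}\lesssim\sqrt{\frac{H^2C^\dagger\log(\cdot)}{K L^\dagger\min\{d^{\pi^*,\Mc}_h(s,a;\xi),1/S\}}}$. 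Plugging into the penalty sum, the $\Gamma^\beta$ contribution is $\le\sqrt{H^2\log(\cdot)/L^\dagger}\sum_h\sum_{(s,a)}d^{\pi^*,\Mc}_h(s,a;\xi)=\sqrt{H^4\log(\cdot)/L^\dagger}$; for the $\Gamma^\alpha$ contribution, since $\pi^*$ is deterministic only $a=\pi^*_h(s)$ contributes and $\sum_s d^{\pi^*,\Mc}_h(s,\pi^*_h(s);\xi)=1$, so Cauchy--Schwarz gives $\sum_s d^{\pi^*,\Mc}_h(s,\pi^*_h(s);\xi)/\sqrt{\min\{d^{\pi^*,\Mc}_h(s,\pi^*_h(s);\xi),1/S\}}\le 2\sqrt S$, whence the $\Gamma^\alpha$ contribution is $\lesssim H\sqrt S\cdot\sqrt{H^2C^\dagger\log(\cdot)/(KL^\dagger)}=\sqrt{C^\dagger H^4 S\log(\cdot)/(L^\dagger K)}$. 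Summing the two and absorbing logarithms into $\tilde O(\cdot)$ gives the claim. The step I expect to be the main obstacle is Step (i): simultaneously (a) disentangling source randomness from sample randomness so that, conditioned on the data-dependent set $\hat\Lc_h(s,a)$ and on $\hat V_{h+1}$, the layer-$h$ perturbations stay i.i.d.\ with the correct mean, and (b) arranging the sample-noise concentration as a single Hoeffding bound over a jointly independent family so that the tight $(\hat L_h(s,a))^{-2}$ aggregation appears — this is precisely what prevents an extra $\sqrt{L^\dagger}$ loss and makes HetPEVI match the lower bound up to horizon factors.
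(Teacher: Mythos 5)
Your proposal follows essentially the same route as the paper's proof: pessimism to drop the $\hat V_1-V^{\hat\pi}$ term, unrolling the recursion along $\pi^*$ to the occupancy-weighted penalty sum $2\sum_{h,s}d^{\pi^*,\Mc}_h(s)\Gamma_h(s,\pi^*_h(s))$, a two-part Hoeffding concentration (joint over all samples for the $(\hat L_h(s,a))^{-2}$ aggregation, over sources for $\Gamma^\beta$), the burn-in condition to force $\hat\Lc_h=\Lc_h$ on the support of $\pi^*$, and Cauchy--Schwarz with the clipped coverage to extract $\sqrt{S}$; your extra care about why $l\in\hat\Lc_h(s,a)$ is independent of $\Pb_{h,l}(\cdot|s,a)$ is a welcome elaboration of a point the paper leaves implicit. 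One small correction: the pessimism $\hat V_h\le V^{\hat\pi,\Mc}_h$ does not follow merely from the penalty being nonnegative --- it also requires conditioning on the concentration event $\Ec$ (so that $(\hat\Bb_h\hat V_{h+1})(s,a)-\Gamma_h(s,a)\le(\Bb_h\hat V_{h+1})(s,a)$), exactly as in the paper's Step 1.
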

It can be observed that as long as  $C^\dagger< \infty$ and $L^\dagger >0$, a meaningful performance gap can be provided by Theorem~\ref{thm:HetPEVI}. To ensure these two conditions, it is equivalent to have the following assumption of collective coverage.
\begin{assumption}[Collective coverage, this work]\label{asp:collective_tabular}
    For each $(s,a,h)\in \Sc\times \Ac\times [H]$ that $d^{\pi^*,\Mc}_{h}(s,a;\xi)>0$, there \emph{exists} $l\in [L]$ such that the behavior policy $\rho_l$ satisfy that $d^{\rho_l, \Mc_l}_h(s,a;\xi_l)>0$.
\end{assumption}
It is beneficial to compare Assumption~\ref{asp:collective_tabular} with those from previous offline RL studies. Especially, with a dataset $\Dc$ sampled with a behavior policy $\rho$ and an initial state distribution $\xi'$ directly from the target MDP $\Mc$, the following individual coverage assumption is often required. 
\begin{assumption}[Individual coverage,  \citet{rashidinejad2021bridging,xie2021policy,li2022settling}]\label{asp:individual_tabular}
    For each $(s,a,h)\in \Sc\times \Ac\times [H]$ that $d^{\pi^*,\Mc}_{h}(s,a;\xi)>0$, the behavior policy $\rho$ satisfy that $d^{\rho, \Mc}_h(s,a;\xi')>0$.
\end{assumption}
Assumption~\ref{asp:individual_tabular} is strong as the behavior policy needs to individually cover the unknown optimal policy. Instead, Assumption~\ref{asp:collective_tabular} is more practical because it leverages collective information: different parts of the optimal trajectory can be covered by different behavior policies and data sources. In particular, it may be easier to reach some states and actions in certain data sources with their corresponding behavior policies.

Moreover, besides the burn-in cost of $K\gtrsim c/d^{\min}$ which does not scale with $\varepsilon$, Theorem~\ref{thm:HetPEVI} illustrates that to obtain an $\varepsilon$-optimal policy, HetPEVI only needs 
\begin{equation*}
    L^\dagger K = \tilde{O}\left(\frac{C^\dagger H^4S}{\varepsilon^2}\right) \quad \text{and } \quad L^\dagger = \tilde{O}\left(\frac{H^4}{\varepsilon^2}\right),
\end{equation*}
where the first requirement is on the \emph{sample complexity} while the second one is on the \emph{source diversity}. 
Compared with the lower bound in Theorem~\ref{thm:lower_bound}, it can be observed that HetPEVI is optimal (up to logarithmic factors) on the dependency of $L^\dagger, C^\dagger, K, S, \varepsilon$, and only incurs an additional $H^2$ factor, which demonstrates its effectiveness and efficiency. \shir{Note that if we further leverage prior variance information and adopt the adaptive penalty as illustrated in Remark~\ref{rmk:adaptive_design}, a corresponding variance-adaptive performance guarantee can be established following the proof of Theorem~\ref{thm:HetPEVI}.}


\section{Experimental Results}
To further empirically validate the effectiveness of HetPEVI, experimental results are reported in Fig.~\ref{fig:performance}. In particular, simulations are performed on a target MDP with $S = 2$, $A = 20$ and $H = 20$. The data source MDPs are randomly generated through a set of independent Dirichlet distributions \citep{marchal2017sub}. The baseline, labeled as `Avg-PEVI', is an aggregated policy. In particular, with each individual dataset $\Dc_l$, a policy $\hat{\pi}_l$ is learned via PEVI \citep{xie2021policy,jin2021pessimism}. Then, at each $(s, h)\in \Sc\times [H]$, Avg-PEVI selects an action uniformly among $\{\hat{\pi}_{h,l}(s): l\in [L]\}$. Additional experimental details can be found in Appendix~\ref{app:exp}. 

From Fig.~\ref{fig:performance}, it can be observed that HetPEVI consistently outperforms the baseline policy, and is capable of approaching the optimal performance particularly when sufficient data sources and data samples are available.

\begin{figure}[tb]
    \setlength{\abovecaptionskip}{-6pt}
	\centering
	 \includegraphics[width=0.7\linewidth]{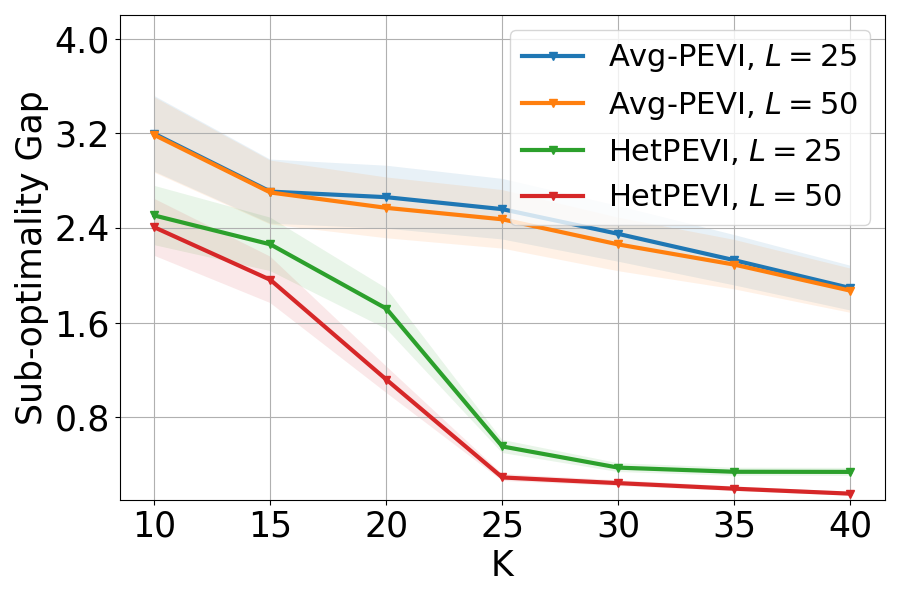}
	\caption{\small Performance comparisons between HetPEVI and the baseline with varying amounts of data samples and data sources.}
	\label{fig:performance}
\end{figure}

\section{Extension to Offline Markov Game}
We extend the study to the multi-player regime. In particular, the most representative two-player zero-sum Markov game (MG) is considered, which can be characterized by $\Gc := \{H, \Sc, \Ac := \Ac^1\times \Ac^2, \Pb:=\{\Pb_h:h\in [H]\}, r:=\{r_h:h\in [H]\}\}$. Besides $H$ as episode length and $\Sc$ as the state space, the major distinction in MG compared with MDP is that the entire action space $\Ac$ is a product of the individual action spaces of two players, i.e., $\Ac^1$ for the max-player and $\Ac^2$ for the min-player, respectively. We further denote that $A^1 := |\Ac^1|$ and $A^2 := |\Ac^2|$. Consequently, the transitions and rewards depend on the action pair $a = (a^1, a^2) \in \Ac^1\times \Ac^2$. Value functions of MG $\Gc$ can be defined similarly as those of MDP to be $V^{\pi, \Gc}_h(s)$ and $Q^{\pi, \Gc}_h(s,a)$ for a product policy $\pi = \mu\times \nu$ from the max-player's policy $\mu$ and the min-player's policy $\nu$. 

With $\Gc$ as the target MG, we are interested in approximating its Nash equilibrium (NE) policy pair $\pi^*=(\mu^*, \nu^*)$, where $\mu^*$ and $\nu^*$ are the best responses to each other. In particular, from the perspective of the max-player\footnote{The min-player's perspective is symmetrical and thus can be similarly solved with minor modifications on notations.}, the performance gap of a learned policy $\hat{\mu}$ with an initial state distribution $\xi$ is defined as
$\text{MGGap}(\hat{\mu}; \Gc, \xi) := V^{\mu^*\times \nu^*, \Gc}_1(\xi) - V^{ \hat{\mu}\times \ber{\hat{\mu}}, \Gc}_1(\xi)$,
where $\ber{\mu}$ denotes the best response of policy $\mu$, i.e., $\ber{\mu} = \argmin_{\nu} V^{\mu\times \nu, \Gc}_1(\xi)$. 

However, instead of having data directly sampled from the target MG as in \citet{cui2022offline, cui2022provably, zhong2022pessimistic, yan2022model}, we consider that there are $L$ data source MGs $\{\Gc_l: l\in [L]\}$ while $K$ trajectories being independently sampled from each data source MG $\Gc_l$ by a behavior policy $\rho_l$ and an initial state distribution $\xi_l$, where $\rho_l := \rho_l^1\times \rho_l^2$ with $\rho_l^1$ for the max-player and $\rho_l^2$ for the min-player. A task-source relationship similar to Assumption~\ref{asp:relationship} is considered between $\{\Gc_l: l\in [L]\}$ and $\Gc$, which is rigorously stated in Assumption~\ref{asp:relationship_game}.

\subsection{Algorithm Design and Analysis}
The HetPEVI-Game algorithm is generalized from HetPEVI to perform efficient offline learning in MG with multiple perturbed data sources. With algorithm details in Appendix~\ref{app_sub:alg_detail_game}, we note that HetPEVI-Game inherits most parts of HetPEVI while the major distinction being in the value iteration. In particular, the following is performed instead of Eqn.~\eqref{eqn:VI}:
\begin{align*}
        &\hat Q_h(s,a) = \max\big\{\big(\hat{\Bb}_h \hat{V}_{h+1}\big)(s,a)- \Gamma^g_h(s,a), 0\big\},\\
        &\left(\hat{\mu}_h(\cdot|s), \hat{\nu}_h(\cdot|s)\right) = \text{NE}(\hat{Q}_h(s,\cdot)),\\
	        &\hat{V}_h(s) = \Eb_{a\sim \hat{\mu}_h(\cdot|s)\times \hat{\nu}_h(\cdot|s)}[\hat{Q}_h(s,a)], 
    \end{align*}
where $\text{NE}(\cdot)$ finds the NE policy pair regarding the input and $\Gamma^g_h(s,a)$ is designed to have the same two-part structure as $\Gamma_h(s,a)$ of HetPEVI. Again, the two parts in $\Gamma^g_h(s,a)$ jointly capture the sample uncertainties and source uncertainties associated with the available datasets.

Similar to $C^\dagger$ and $L^\dagger$, we introduce the following quantities, $L^\dagger_g$ and $C^\dagger_g$, to measure the collective data coverage in MG:
\begin{equation*}\small
\begin{aligned}
    L^\dagger_g&:= \min\left\{|\Lc_h(s,a)|: (s, a, h) \text{ s.t. } \exists \nu, d^{\mu^*
        \times \nu, \Gc}_h(s,a;\xi) > 0\right\}, \\
        C^{\dagger}_g&:=\max_{(s, a, h)}\max_{\nu}\left\{\sum_{l\in \Lc_h(s,a)}\frac{\min\left\{d^{\mu^*\times \nu, \Gc}_{h}(s,a;\xi), \frac{1}{SA_1}\right\}}{|\Lc_h(s,a)|\cdot d^{\rho_l, \Gc_l}_{h}(s,a;\xi_l)}\right\},
\end{aligned}
\end{equation*}
where we reload the notation that $\Lc_h(s,a): = \{l\in [L]: d^{\rho_l, \Gc_l}_{h}(s,a;\xi_l)>0\}\subseteq [L]$. Then, the performance guarantee for HetPEVI-Game is established in the following theorem.
    
\begin{theorem}[HetPEVI-Game]\label{thm:HetPEVI_game}
    Under Assumption~\ref{asp:relationship_game}, with probability at least $1-\delta$, the output policy $\hat\mu$ of HetPEVI-Game satisfies
    \begin{equation*}
        {\normalfont\text{MGGap}}(\hat\mu;\Mc, \xi) = \tilde{O}\left(\sqrt{\frac{C^\dagger_g H^4SA_1}{L^\dagger_g K}} +\sqrt{\frac{H^4}{L^\dagger_g}}\right).
    \end{equation*}
    when  $K \gtrsim c/d^{\min}_g$, where $d^{\min}_g:=\min\{d^{\rho_l,\Gc_l}_h(s, a): (s, a, h, l)  \text{ s.t. } \exists \nu, d^{\mu^*\times \nu,\Gc}_h(s, a)>0, d^{\rho_l,\Gc_l}_h(s, a)>0\}$.
\end{theorem}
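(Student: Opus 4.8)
\textbf{Proof plan for Theorem~\ref{thm:HetPEVI_game}.}
The plan is to mirror the analysis of Theorem~\ref{thm:HetPEVI} (the single-agent HetPEVI guarantee), adapting each ingredient to the two-player zero-sum setting, so I will track the three structural modifications that the Markov game introduces: (i) the NE subroutine replaces the $\argmax$ in the value iteration, (ii) the performance metric $\text{MGGap}$ compares $V^{\mu^*\times\nu^*}_1$ against $V^{\hat\mu\times\ber{\hat\mu}}_1$ rather than a single value function, which forces a two-sided argument, and (iii) the coverage parameters $C^\dagger_g, L^\dagger_g$ now take a $\max$ over the min-player's responses $\nu$ and clip at $1/(SA_1)$. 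First I would establish the ``good event'' on which the estimation concentration holds: exactly as in the MDP case, use the two-fold subsampling of Appendix~\ref{app:subsampling} to decouple temporal dependencies, invoke Bernstein/Hoeffding together with Assumption~\ref{asp:relationship_game} to show that for all $(s,a,h)$, $|(\hat\Bb_h\hat V_{h+1})(s,a) - (\Bb_h\hat V_{h+1})(s,a)| \le \Gamma^g_h(s,a)$ with probability $1-\delta$, where the two parts of $\Gamma^g_h$ absorb the aggregated sample uncertainty and the source uncertainty respectively — this step is essentially unchanged because the perturbation model is identical and the estimators $\hat r_h, \hat\Pb_h$ are built the same way.

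Next I would prove the pessimism property: on the good event, $\hat V_h(s) \le V^{\mu^*\times\nu}_h(s)$ for every min-player policy $\nu$ and in particular $\hat V_h(s) \le V^{\mu^*\times\nu^*}_h(s) = V^*_h(s)$. The key point specific to Markov games is that after subtracting $\Gamma^g_h$ the matrix $\hat Q_h(s,\cdot,\cdot)$ lies entrywise below $(\Bb_h\hat V_{h+1})(s,\cdot,\cdot)$, and taking the NE (value) of a smaller matrix game and of $\max_\mu\min_\nu$ is monotone; combined with the inductive hypothesis $\hat V_{h+1}\le V^{\mu^*\times\nu}_{h+1}$ and the fact that $\mu^*$ fixed against any $\nu$ gives an MDP, backward induction closes. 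Simultaneously I would need the complementary bound controlling $V^{\mu^*\times\nu^*}_1 - V^{\hat\mu\times\ber{\hat\mu}}_1$: decompose this as $(V^*_1 - \hat V_1) + (\hat V_1 - V^{\hat\mu\times\ber{\hat\mu}}_1)$; the first bracket is $\ge 0$ by pessimism and the second is bounded by the standard ``value difference'' telescoping along the trajectory of $\hat\mu\times\ber{\hat\mu}$, where at each step the per-step error is at most $2\Gamma^g_h$ (here one uses that $\hat\mu,\hat\nu$ form an NE of $\hat Q_h$, so deviating the min-player to $\ber{\hat\mu}$ cannot increase the value by more than the estimation slack — this is the game-theoretic analogue of the $\argmax$ optimality used in the MDP proof, and it is the one genuinely new inequality).

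The final step is to convert the telescoped sum $\sum_h \Eb_{\hat\mu\times\ber{\hat\mu},\Gc}[\Gamma^g_h(s_h,a_h)]$ into the stated rate. I would change measure from the trajectory distribution $d^{\hat\mu\times\ber{\hat\mu},\Gc}_h$ to the data-generating distributions: since $\ber{\hat\mu}$ is some response to $\hat\mu$ and pessimism already restricts attention to $(s,a,h)$ reachable under $\mu^*$ against \emph{some} $\nu$, the occupancy we integrate against is dominated (after clipping at $1/(SA_1)$, exactly as $1/S$ was used in the MDP case) by the quantity appearing in $C^\dagger_g$; then plug in the definitions of $\Gamma^{g,\alpha}_h$ and $\Gamma^{g,\beta}_h$, apply Cauchy--Schwarz over $l\in\hat\Lc_h(s,a)$ to turn $\sum_l 1/N_{h,l}$ into the $C^\dagger_g$-weighted form, and use the burn-in condition $K\gtrsim c/d^{\min}_g$ to guarantee $N_{h,l}(s,a)\gtrsim K\,d^{\rho_l,\Gc_l}_h(s,a)$ and $\hat\Lc_h(s,a)=\Lc_h(s,a)$ on the good event. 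Summing over the $H$ steps produces the two terms $\sqrt{C^\dagger_g H^4 S A_1/(L^\dagger_g K)}$ and $\sqrt{H^4/L^\dagger_g}$. I expect the main obstacle to be the game-theoretic value-difference lemma in the second paragraph: one must argue carefully that replacing $\hat\nu$ by the exact best response $\ber{\hat\mu}$ against the \emph{true} dynamics, while the policy $\hat\mu$ and the reference value $\hat V$ were computed from \emph{estimated} dynamics, only costs an extra $\Gamma^g_h$ per step — this requires simultaneously using the NE saddle-point inequality for $\hat Q_h$ and the concentration bound, and it is where the min-player's full response set (hence the $\max_\nu$ in $C^\dagger_g$ and the $\exists\nu$ in $L^\dagger_g$) enters the coverage requirement.
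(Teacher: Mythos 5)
Your concentration step and your final change-of-measure computation match the paper's, and you correctly flag the NE saddle-point inequality as the genuinely new ingredient. The gap is in the decomposition: you have swapped the roles of the two brackets in a way that breaks the argument. Writing $\text{MGGap} = (V^{\mu^*\times\nu^*}_1-\hat V_1) + (\hat V_1 - V^{\hat\mu\times\ber{\hat\mu}}_1)$, the term that pessimism kills is the \emph{second} one: the paper proves by backward induction, using that $\hat\nu$ is the min-player's NE response to $\hat Q_{h+1}$ so that $\hat V_{h+1}(s)=\hat Q_{h+1}(s,\hat\mu\times\hat\nu)\le \hat Q_{h+1}(s,\hat\mu\times\ber{\hat\mu})$, that $\hat V_h \le V^{\hat\mu\times\ber{\hat\mu},\Gc}_h$, hence the second bracket is $\le 0$. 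The term that must be telescoped is the \emph{first} one, and crucially it is telescoped along the occupancy of $\mu^*\times\bar\nu$ with $\bar\nu_h(s)=\argmin_{a^2}\Eb_{a^1\sim\mu^*_h(\cdot|s)}[\hat Q_h(s,(a^1,a^2))]$: the chain $V^{\mu^*\times\nu^*}_h - V^{\hat\mu\times\ber{\hat\mu},\Gc}_h \le V^{\mu^*\times\bar\nu,\Gc}_h - \hat V_h \le Q^{\mu^*\times\bar\nu,\Gc}_h(s,\mu^*\times\bar\nu) - \hat Q_h(s,\mu^*\times\bar\nu)$ uses the other half of the saddle point, namely $\hat V_h(s)\ge \hat Q_h(s,\mu^*\times\hat\nu)\ge\hat Q_h(s,\mu^*\times\bar\nu)$. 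You instead declare the first bracket ``$\ge 0$ by pessimism'' --- a lower bound, which contributes nothing to an upper bound on the gap --- and propose to telescope the second bracket along $d^{\hat\mu\times\ber{\hat\mu},\Gc}_h$.

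That choice of telescoping measure is fatal for your last step. The parameter $C^\dagger_g$ (and Assumption~\ref{asp:collective_game}) only controls occupancies of the form $d^{\mu^*\times\nu,\Gc}_h$; nothing constrains where $\hat\mu\times\ber{\hat\mu}$ goes, it can place mass on tuples with $\hat L_h(s,a)=0$ where $\Gamma^g_h(s,a)=H$, and your justification that ``pessimism already restricts attention to $(s,a,h)$ reachable under $\mu^*$ against some $\nu$'' is not something pessimism provides --- it is a consequence of telescoping along $\mu^*\times\bar\nu$, which is exactly the step you assigned to the wrong bracket. A secondary issue: your stated pessimism $\hat V_h \le V^{\mu^*\times\nu,\Gc}_h$ for all $\nu$ is true but your induction for it does not close, since $\hat V_h$ is an NE value (a $\max_\mu\min_{\nu'}$) and cannot be upper-bounded by the payoff of the fixed row strategy $\mu^*$ against an arbitrary $\nu$; the induction hypothesis that works is the entrywise bound $\hat Q_{h+1}\le Q^{\hat\mu\times\ber{\hat\mu},\Gc}_{h+1}$. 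With the brackets' roles restored and $\bar\nu$ introduced, the rest of your plan (burn-in, $\hat\Lc_h=\Lc_h$, Cauchy--Schwarz, clipping at $1/(SA_1)$) goes through as in the paper.
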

Thus, to obtain an $\varepsilon$-optimal policy, HetPEVI-Game requires $L^{\dagger}_g K = \tilde{O}(C^{\dagger}_g H^4SA_2/\varepsilon^2)$ and $L^{\dagger}_g  = \tilde{O}(H^4/\varepsilon^2)$ besides the burn-in requirement $K \gtrsim c/d^{\min}_g$. The conditions that $L^{\dagger}_g >0$ and $C^{\dagger}_g < \infty$ are further implied by the following collective unilateral coverage assumption.
\begin{assumption}[Collective unilateral coverage,  this work]\label{asp:collective_game}
    At each $(s,a,h)\in \Sc\times \Ac\times [H]$, if there exists $\nu$ that $d_h^{\mu^*\times \nu, \Gc}(s,a;\xi)>0$, then there \emph{exists} $l\in [L]$ such that the behavior policy $\rho_l$ satisfies that $d^{\rho_l, \Gc_l}_h(s,a;\xi_l)>0$.
\end{assumption}
The following individual coverage assumption is quoted, where dataset $\Dc$ is collected by behavior policy $\rho$ directly from the target MG $\Gc$ and initial state distribution $\xi'$.
\begin{assumption}[Individual unilateral coverage,  \citet{cui2022offline, cui2022provably, yan2022model}]\label{asp:individual_game}
    At each $(s,a,h)\in \Sc\times \Ac\times [H]$, if there exists $\nu$ that $d_h^{\mu^*\times \nu, \Gc}(s,a;\xi)>0$, then the behavior policy $\rho$ satisfies that $d_h^{\rho, \Gc}(s,a;\xi')>0$.
\end{assumption}
It can be observed that efficient learning in MG requires a stronger coverage assumption than that in MDP, i.e., the dataset needs to cover not only the NE pair $(\mu^*, \nu^*)$ but also the policy pair $(\mu^*, \nu)$ for any policy $\nu$ of the min-player. This requirement stated in Assumption~\ref{asp:individual_game} can be stringent as the only policy $\rho$ needs to satisfy it individually. On the other hand, Assumption~\ref{asp:collective_game} is more practical as multiple data sources can collectively provide coverage.

\section{Extension to Offline Robust RL}\label{sec:robust}
In addition to learning a single target task, it is often essential to learn a robustly good  policy in many practical applications. We are thus motivated to consider the distributionally robust RL problem \citep{zhou2021finite,shi2022distributionally}. In particular, it can be characterized by a nominal (i.e., center) MDP $\Mc^c:= \{H, \Sc, \Ac, \Pb^c, r\}$ and an associated uncertainty set $\Uc^\sigma(\Pb^c) = \otimes _{(s, a, h)\in \Sc\times \Ac\times [H]}\Uc^\sigma(\Pb^c_h(\cdot|s,a))$ for an uncertainty level $\sigma>0$, where $\otimes$ denotes the Cartesian product and $\Uc^\sigma(\Pb^c_h(\cdot|s,a)): = \{\Pb^\sigma_h(\cdot|s,a)\in \Delta(\Sc): \KL{\Pb^\sigma_h(\cdot|s,a)||\Pb^c_h(\cdot|s,a)} \leq \sigma\}$.
In other words, the uncertainty set contains the transition distributions whose KL-divergence from that of the nominal MDP at each $(s,a,h)\in \Sc\times \Ac\times [H]$ is at most $\sigma$. We further denote $\Rc:=\{\Mc^\sigma = \{H, \Sc, \Ac, \Pb^\sigma, r\}: \Pb^\sigma\in \Uc^\sigma(\Pb^c) \}$ as the collection of MDPs with transitions contained in the uncertainty set. Moreover, the robust value functions of a policy $\pi$ at $(s, a, h)\in \Sc\times \Ac\times [H]$ can be defined as $V^{\pi, \Rc}_h(s) = \inf\nolimits_{\Mc^\sigma \in \Rc} V^{\pi, \Mc^\sigma}_h(s)$ and $Q^{\pi, \Rc}_h(s, a) = \inf\nolimits_{\Mc^\sigma \in \Rc} Q^{\pi, \Mc^\sigma}_h(s, a)$,
which provide worst-case characterizations among the uncertainty set.

For this problem of robust RL, it has been established that there exists an optimal policy $\pi^*$ that is deterministic and maximizes the above value functions \citep{iyengar2005robust}. Thus, the quality of a learned policy $\hat{\pi}$ is measured by the following gap for a given initial state distribution $\xi$: $\text{RGap}(\hat{\mu}; \Rc, \xi) := V^{\pi^*, \Rc}_1(\xi) - V^{ \hat{\pi}, \Rc}_1(\xi)$.

Existing works considering the offline version of this robust RL problem typically assume a dataset sampled from the nominal MDP $\Mc^c$ \citep{shi2022distributionally,zhou2021finite}. Instead, we consider $L$ available datasets, i.e., $\{\Dc_l: l\in [L]\}$, and each $\Dc_l$ contains $K$ trajectories sampled independently by a behavior policy $\rho_l$ and an initial state distribution $\xi_l$ on the data source MDP $\Mc_l$. A stochastic relationship between the data sources $\{\Mc_l: l\in [L]\}$ and the nominal MDP $\Mc^c$ similar to Assumption~\ref{asp:relationship} is further considered, which is rigorously stated in Assumption~\ref{asp:relationship_robust}.

\subsection{Algorithm Design and Analysis}
A variant of HetPEVI is developed to find a robustly good policy with datasets from multiple perturbed data sources, termed HetPEVI-Robust. While the complete description of HetPEVI-Robust is deferred to Appendix~\ref{app_sub:alg_detail_robust}, it particularly performs the value iteration as follows:
\begin{equation*}
\small
    \begin{aligned}
        &\hat{Q}_h(s,a) = \max\left\{\hat{r}_h(s,a) + \left(\hat{\Pb}^{\inf}_h \hat{V}_{h+1}\right)(s,a) - \Gamma^\sigma_h(s,a), 0\right\},\\
        &\hat{\pi}_h(s)= \argmax\nolimits_{a\in \Ac} \hat{Q}_h(s,a), \quad  \hat{V}_h(s) = \hat{Q}_h(s,\hat{\pi}_h(s)), 
    \end{aligned}
\end{equation*}
where we define that
\begin{align*}
    &(\hat{\Pb}^{\inf}_h \hat{V}_{h+1})(s,a) \\
    &:= \inf\nolimits_{\hat{\Pb}^\sigma_h(\cdot|s,a) \sim \Uc^{\sigma}(\hat{\Pb}_h(\cdot|s,a))}\left(\hat{\Pb}^{\sigma}_h \hat{V}_{h+1}\right)(s,a)\\
    &=\sup_{\lambda \geq 0}\left\{-\lambda \log\left(\left[\hat{\Pb}_h\exp\left(-\hat{V}_{h+1}/\lambda\right)\right](s,a)\right) - \lambda \sigma\right\}.
\end{align*}
The above last equation holds due to the strong duality \citep{hu2013kullback} and can be efficiently solved \citep{panaganti2022sample,yang2021towards}. More importantly, the penalty term $\Gamma^\sigma_h(s,a)$ is specifically designed as
\begin{equation*}\small
    \begin{aligned}
        \Gamma^\sigma_h&(s,a):= \min\Bigg\{H, \\
        &\frac{c}{\sigma \hat{\Pb}^{\min}_{h}(s,a)}\sqrt{\sum_{l\in \hat{\Lc}_h(s,a)}\frac{H^2\log(SAH/\delta)}{(\hat{L}_h(s,a))^2 N_{h,l}(s,a)}}\\
        &+ \frac{c}{\sigma \hat{\Pb}^{\min}_{h}(s,a)}\sqrt{\frac{H^2\log(SAH/\delta)}{\hat{L}_h(s,a)}} + c\sqrt{\frac{\log(SAH/\delta)}{\hat{L}_h(s,a)}}\Bigg\},
    \end{aligned}
\end{equation*}
where $\hat{\Pb}^{\min}_{h}(s,a) = \min\{\hat{\Pb}_{h}(s'|s,a): s' \text{ s.t. } \hat{\Pb}_{h}(s'|s,a)>0\}$. First, it is noted that this penalty term once again has a two-part structure: the first term to aggregate sample uncertainties and the last two terms to compensate source uncertainties. Moreover, compared with $\Gamma_h(s,a)$, an additional factor $1/(\sigma \hat{\Pb}^{\min}_{h}(s,a))$ appears in the design of $\Gamma^\sigma_h(s,a)$, which is carefully crafted to maintain pessimism during the non-linear value iteration.

Following the same steps in the analyses of HetPEVI and HetPEVI-Game, the following two quantities are introduced as data coverage measurements regarding robust MDP:
\begin{equation*}\small
\begin{aligned}
    L^\dagger_\sigma:= \min\{&|\Lc_h(s,a)|: (s, a, h) \text{ s.t. } \\
    &\exists \Mc^\sigma\in \Rc, d^{\pi^*, \Mc^\sigma}_h(s,a;\xi) > 0\}, \\
        C^{\dagger}_\sigma:=\max_{(s, a, h)}&\max_{\Mc^\sigma\in \Rc}\left\{\sum_{l\in \Lc_h(s,a)}\frac{\min\left\{d^{\pi^*, \Mc^{\sigma}}_{h}(s,a;\xi), \frac{1}{S}\right\}}{|\Lc_h(s,a)|\cdot d^{\rho_l, \Mc_l}_{h}(s,a;\xi_l)}\right\},
\end{aligned}
\end{equation*}
where $\Lc_h(s,a):= \{l\in [L]: d^{\rho_l, \Mc_l}_h(s,a;\xi_l)>0\}$. With the following additional notations, 
\begin{equation*}\small
    \begin{aligned}
        L^{\min}_{\sigma}:=&\min\{|\Lc_h(s,a)|: (s,a, h) \text{ s.t. } |\Lc_h(s,a)|>0\},\\
        d^{\min}_{\sigma}: = &\min\{d^{\rho_l,\Mc_l}_h(s,a;\xi_l): (s, a,h, l) \\
        &\text{ s.t. } d^{\rho_l,\Mc_l}_h(s,a;\xi_l)>0\},\\
        \Pb^{\min}_{*}:= &\min\{\Pb^c_h(s'|s,a):  (s, a, h, s') \\
        &\text{ s.t. } d^{\pi^*, \Mc^c}_h(s,a;\xi) >0, \Pb^c_h(s'|s,a)>0\},\\
        \Pb^{\min}_{\sigma}:= &\min\{\Pb^c_h(s'|s,a): (s, a, h, s') \\
        & \text{ s.t. } \exists l\in [L], d^{\rho_l, \Mc_l}_h(s,a;\xi_l) >0, \Pb^c_h(s'|s,a)>0\},
    \end{aligned}
\end{equation*}
the following Theorem~\ref{thm:HetPEVI_robust} provides a characterization of the performance of HetPEVI-Robust.
\begin{theorem}[HetPEVI-Robust]\label{thm:HetPEVI_robust}
     Under Assumption~\ref{asp:relationship_robust}, with probability at least $1-\delta$, the output policy $\hat\mu$ of HetPEVI-Robust satisfies
    \begin{equation*}\small
    \begin{aligned}
        {\normalfont\text{RGap}}(\hat\pi; \Rc, \xi) = \tilde{O}\left(\frac{\sqrt{C^{\dagger}_{\sigma}H^4S}}{\sigma \Pb^{\min}_*\sqrt{L^\dagger_{\sigma} K }}+\frac{H + H^2/(\sigma \Pb^{\min}_{*})}{\sqrt{{L^\dagger_{\sigma}}}}\right),
    \end{aligned}
    \end{equation*}
    when  $K \gtrsim c/(d^{\min}_{\sigma}(\Pb^{\min}_{\sigma})^2)$ and $L^{\min}_{\sigma} \gtrsim c/(\Pb^{\min}_{\sigma})^2$.
\end{theorem}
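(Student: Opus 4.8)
The plan is to transplant the proof architecture of Theorem~\ref{thm:HetPEVI} to the nonlinear robust value iteration, with all of the new difficulty isolated in one sensitivity lemma. First I would fix a high-probability good event $\Ec$ on which, for every $(s,a,h)$ with $\hat{\Lc}_h(s,a)\neq\emptyset$: (i) each source estimate satisfies $\hat{\Pb}_{h,l}(s'|s,a)>0\iff\Pb_{h,l}(s'|s,a)>0$ and the two agree up to a factor $2$ on that support; (ii) for any fixed $g\in(0,1]^{\Sc}$ the error $\langle\hat{\Pb}_h(\cdot|s,a)-\Pb^c_h(\cdot|s,a),g\rangle$ (which, on $\Ec$, only involves coordinates in the support of $\Pb^c_h(\cdot|s,a)$) splits into a sample part of size $\tilde{O}\bigl(\max_{s'\in\mathrm{supp}}g(s')\cdot\sqrt{\sum_{l}1/((\hat{L}_h(s,a))^2N_{h,l}(s,a))}\bigr)$ (an average over independent sources of per-source empirical-mean fluctuations) and a source part of size $\tilde{O}\bigl(\max_{s'\in\mathrm{supp}}g(s')\cdot\sqrt{1/\hat{L}_h(s,a)}\bigr)$, and also $|\hat r_h(s,a)-r_h(s,a)|\lesssim\sqrt{\log(SAH/\delta)/\hat{L}_h(s,a)}$ (Hoeffding over the i.i.d.\ source draws from $g_h$ in Assumption~\ref{asp:relationship_robust}); (iii) $\hat\Pb^{\min}_h(s,a)$ is within a factor $2$ of the true support-minimum of $\Pb^c_h(\cdot|s,a)$, which is $\geq\Pb^{\min}_\sigma$ always and $\geq\Pb^{\min}_*$ on tuples reachable by $\pi^*$ under the nominal MDP (a KL ball preserves support). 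The two burn-in conditions are exactly what make (i)--(iii) hold: $K\gtrsim c/(d^{\min}_\sigma(\Pb^{\min}_\sigma)^2)$ forces every covered count $N_{h,l}(s,a)\gtrsim Kd^{\rho_l,\Mc_l}_h(s,a;\xi_l)$ large enough for multiplicative concentration down to probability scale $\Pb^{\min}_\sigma$, and $L^{\min}_\sigma\gtrsim c/(\Pb^{\min}_\sigma)^2$ gives enough covering sources that the support of the averaged transition cannot collapse below that scale.

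The key lemma controls the population robust operator $(\Pb^{c,\inf}_hV)(s,a):=\inf_{\Pb^\sigma\in\Uc^\sigma(\Pb^c_h(\cdot|s,a))}(\Pb^\sigma V)(s,a)$ under perturbation of the nominal transition, via strong duality $(\Pb^{\inf}V)(s,a)=\sup_{\lambda\ge0}\{-\lambda\log([\Pb\exp(-V/\lambda)](s,a))-\lambda\sigma\}$. Writing $g:=\exp(-V/\lambda^\star)$ at a dual optimizer $\lambda^\star$, replacing $\Pb^c_h$ by $\hat\Pb_h$ perturbs the operator value by $\approx\lambda^\star|\langle\hat\Pb_h-\Pb^c_h,g\rangle|/[\Pb^c_h g]$; using $\lambda^\star\le H/\sigma$ (from $V\le H$) and $[\Pb^c_h g]\ge\Pb^{\min}_h(s,a)\cdot\max_{s'\in\mathrm{supp}}g(s')$ together with (iii), the $\max g$ factors cancel against those in (ii), leaving exactly a $\Gamma^{\alpha}$-type sample contribution and a $\Gamma^{\beta}$-type source contribution, both amplified by $1/(\sigma\hat\Pb^{\min}_h(s,a))$; adding the reward error of (ii) gives the third, un-amplified term. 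Applying this to each $\hat V_{h+1}$ from the backward recursion (fixed conditionally on earlier randomness, with a union bound over $h$) yields, on $\Ec$,
\[
\bigl|(\hat\Bb^{\Rc}_h\hat V_{h+1})(s,a)-(\Bb^{\Rc}_h\hat V_{h+1})(s,a)\bigr|\ \le\ \Gamma^\sigma_h(s,a),
\]
with $(\hat\Bb^{\Rc}_hV)(s,a):=\hat r_h(s,a)+(\hat\Pb^{\inf}_hV)(s,a)$ and $(\Bb^{\Rc}_hV)(s,a):=r_h(s,a)+(\Pb^{c,\inf}_hV)(s,a)$.

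Given this one-step bound, the remainder follows the HetPEVI proof. Monotonicity of the robust Bellman operator and $\hat V_{H+1}=0$ give, by backward induction on $\Ec$, $\hat V_h(s)\le V^{\hat\pi,\Rc}_h(s)\le V^{\pi^*,\Rc}_h(s)$ for all $(s,h)$, so ${\normalfont\text{RGap}}(\hat\pi;\Rc,\xi)\le V^{\pi^*,\Rc}_1(\xi)-\hat V_1(\xi)$; and $\hat\pi$ being greedy gives $\hat V_h(s)\ge(\hat\Bb^{\Rc}_h\hat V_{h+1})(s,\pi^*_h(s))-\Gamma^\sigma_h(s,\pi^*_h(s))\ge(\Bb^{\Rc}_h\hat V_{h+1})(s,\pi^*_h(s))-2\Gamma^\sigma_h(s,\pi^*_h(s))$. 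Letting $\Pb^{\sigma,\dagger}_h$ denote the dual-optimal worst-case transition against $\hat V_{h+1}$, the recursion $V^{\pi^*,\Rc}_h(s)-\hat V_h(s)\le(\Pb^{\sigma,\dagger}_h(V^{\pi^*,\Rc}_{h+1}-\hat V_{h+1}))(s,\pi^*_h(s))+2\Gamma^\sigma_h(s,\pi^*_h(s))$ unrolls to ${\normalfont\text{RGap}}(\hat\pi;\Rc,\xi)\le2\sum_{h=1}^H\sum_{s,a}d^{\pi^*,\Mc^\sigma_\dagger}_h(s,a;\xi)\,\Gamma^\sigma_h(s,a)$, where $\Mc^\sigma_\dagger\in\Rc$ is assembled from $\{\Pb^{\sigma,\dagger}_h\}$. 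On those tuples $d^{\pi^*,\Mc^\sigma_\dagger}_h(s,a)>0$ forces $d^{\pi^*,\Mc^c}_h(s,a)>0$, hence $\hat\Pb^{\min}_h(s,a)\gtrsim\Pb^{\min}_*$ and $|\hat\Lc_h(s,a)|\ge L^\dagger_\sigma$; upper-bounding $d^{\pi^*,\Mc^\sigma_\dagger}_h$ by the maximum over $\Rc$, substituting $N_{h,l}(s,a)\gtrsim Kd^{\rho_l,\Mc_l}_h(s,a;\xi_l)$, applying Cauchy--Schwarz across $h$, and running the clipped single-policy coverage calculation of \citet{li2022settling} with parameter $C^\dagger_\sigma$ turns the first term of $\Gamma^\sigma_h$ into $\tilde{O}(\sqrt{C^\dagger_\sigma H^4S}/(\sigma\Pb^{\min}_*\sqrt{L^\dagger_\sigma K}))$ and the other two into $\tilde{O}((H+H^2/(\sigma\Pb^{\min}_*))/\sqrt{L^\dagger_\sigma})$, which is the claimed bound.

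The main obstacle is the sensitivity lemma together with the $\hat\Pb^{\min}_h(s,a)$ bookkeeping around it. The robust operator is only implicitly defined, so a crude Lipschitz estimate through the dual degenerates either when $V$ has small spread (small $\lambda^\star$) or when a nominal transition probability is tiny; one must combine $\lambda^\star\le H/\sigma$ with the support-level lower bound on $[\Pb^c_h g]$ precisely so as to produce the $1/(\sigma\hat\Pb^{\min}_h(s,a))$ factor and no worse, and — because the dual optimizer is data-dependent and lies in $[0,H/\sigma]$ — the concentration of $\langle\hat\Pb_h-\Pb^c_h,\exp(-\hat V_{h+1}/\lambda)\rangle$ has to be made uniform over $\lambda$ (a covering argument for $\lambda$ bounded away from $0$, plus a direct argument for very small $\lambda$, where the operator is essentially the support-minimum of $\hat V_{h+1}$ and is insensitive to perturbations that preserve support), as in single-source robust offline RL analyses. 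Establishing (iii) — that $\hat\Pb^{\min}_h(s,a)$ faithfully tracks the true support-minimum — is the other delicate point, and it is exactly what forces the burn-in requirements $K\gtrsim c/(d^{\min}_\sigma(\Pb^{\min}_\sigma)^2)$ and $L^{\min}_\sigma\gtrsim c/(\Pb^{\min}_\sigma)^2$. Everything downstream is essentially identical to the proof of Theorem~\ref{thm:HetPEVI}.
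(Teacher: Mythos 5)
Your proposal follows essentially the same route as the paper's proof: the same dual-based sensitivity lemma yielding the $1/(\sigma\hat{\Pb}^{\min}_h(s,a))$-amplified two-part penalty (with the degenerate small-$\lambda$ regime handled separately via the essential infimum, and the two burn-in conditions playing exactly the roles you assign them), followed by the same pessimism, recursion-unrolling, and concentrability steps. The only minor technical difference is that where you propose a covering argument over $\lambda$ to handle the data-dependent dual optimizer, the paper obtains uniformity for free by bounding the relative error $\left|[\hat{\Pb}_h g](s,a)-[\Pb^c_h g](s,a)\right|/[\Pb^c_h g](s,a)$ by the coordinate-wise maximum of $|\hat{\Pb}_h(s'|s,a)-\Pb^c_h(s'|s,a)|/\Pb^c_h(s'|s,a)$, which does not depend on $\lambda$.
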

It can be observed that besides the burn-in requirements on $K$ and $L^{\min}_{\sigma}$, with guarantees $L^\dagger_{\sigma} K = \tilde{O}(C^{\dagger}_{\sigma}H^4S(\sigma \Pb^{\min}_*)^{-2}/\varepsilon^2)$ and $L^\dagger_{\sigma} = \tilde{O}((H^2 + H^4(\sigma \Pb^{\min}_*)^{-2})/\varepsilon^2)$, HetPEVI-Robust can find an $\varepsilon$-optimal policy for the target robust RL.
To ensure $L^\dagger_\sigma>0$ and $C^\dagger_\sigma<\infty$, it is sufficient to have the following Assumption~\ref{asp:collective_robust} of collective robust coverage, which is also compared with the previously required Assumption~\ref{asp:individual_robust} on individual robust coverage \citep{shi2022distributionally}.
\begin{assumption}[Collective robust coverage, this work]\label{asp:collective_robust}
    At each $(s,a,h)\in \Sc\times \Ac\times [H]$, if there exists $\Mc^\sigma\in \Rc$ that $d^{\pi^* ,\Mc^\sigma}_{h}(s,a;\xi)>0$, then there \emph{exists} $l\in [L]$ such that the behavior policy $\rho_l$ satisfies that $d^{\rho_l, \Mc_l}_{h}(s,a;\xi_l)>0$.
\end{assumption}

\begin{assumption}[Individual robust coverage, \citet{shi2022distributionally}]\label{asp:individual_robust}
    At each $(s,a,h)\in \Sc\times \Ac\times [H]$, if there exists $\Mc^\sigma\in \Rc$ that $d^{\pi^* ,\Mc^\sigma}_{h}(s,a;\xi)>0$, then the behavior policy $\rho$ satisfies that $d^{\rho, \Mc^c}_{h}(s,a;\xi')>0$.
\end{assumption}
Once again, it can be observed that Assumption~\ref{asp:collective_robust} is more practical than Assumption~\ref{asp:individual_robust} as it leverages collective information from all data sources.

\shir{
\section{Discussions}\label{sec:discussion}
\textbf{Target-source Relationships.}
This work mainly targets one basic target-source relationship formulated in Assumption~\ref{asp:relationship}: the source MDPs are randomly perturbed versions of the target MDPs, where the expectation of the source generation process exactly matches the target MDP. This scenario itself captures key features of many applications as mentioned in Sections~\ref{sec:intro} and \ref{app:setting}, and the design ideas in HetPEVI can be similarly extended to other different task-source relationships. For example, instead of the stochastic relationship in Assumption~\ref{asp:relationship}, a static relationship can be considered such that the target MDP is a weighted average of source MDPs \citep{agarwal2022provable}: $r_h(s,a) = \sum_{l\in [L]}\omega_l(s,a) r_{h,l}(s,a)$ and $\Pb_h(s,a) = \sum_{l\in [L]}\omega_l(s,a) \Pb_{h,l}(s,a)$ for all $(s, a, h)\in \Sc\times \Ac\times [H]$. In this case, there is no need to consider source uncertainties and the penalty for sample uncertainties can be designed as $c\sqrt{\sum_{l\in[L]}(\omega_l(s,a))^2H^2/N_{h,l}(s,a)}$, and a corresponding performance guarantee can also be established following the procedure of Theorem~\ref{thm:HetPEVI}. 

We hope this work can be a starting point for further investigations into different target-source relationships. Especially, one interesting direction is to consider function approximation to accommodate large state/action spaces. For example, in linear MDP \cite{jin2020provably}, the random perturbations could potentially happen on the overall linear structures.

\textbf{Information Aggregation Schemes.} Another interesting direction to be further explored is how to aggregate information from different sources more effectively. As the first step to investigating this problem, we start with the simple approach of \emph{equally weighting} estimates from all sources that provide information. One limitation of this approach is that adding a data source with poor coverage would not necessarily improve the performance of HetPEVI since it is equally treated in information aggregation. It is conceivable that some other fine-tuned approaches might be more efficient in aggregating information. One promising idea is to aggregate the estimate from each source with weights according to their uncertainties, i.e., a small weight for a high-uncertainty source. This approach, intuitively, would be able to deal with sources with poor coverage more efficiently (by assigning a small weight to them) while additional investigations are needed to concretely design and analyze such algorithms.

}

\section{Conclusions}
This work studied a novel problem of offline RL with data sources that are randomly perturbed versions of the target task. An information-theoretic lower bound was derived, which reveals that guarantees on sample complexity and source diversity are simultaneously required for finding a good policy on the target task. Then, a novel HetPEVI algorithm was proposed, which adopts a two-part penalty term to jointly consider the uncertainties from the finite number of data samples and the limited amount of data sources. Theoretical analyses proved that as long as a good collective (as opposed to individual) data coverage can be provided by the data sources, HetPEVI can effectively solve the target task. Moreover, the required sample complexity and source diversity of HetPEVI is optimal up to a polynomial factor of the horizon length. Experimental results further illustrated the effectiveness of HetPEVI. At last, we extended the study to offline Markov games and offline robust RL with perturbed data sources with two generalized versions of HetPEVI proposed. These extensions further corroborated that offline RL with perturbed data sources is feasible given a good collective data coverage, while it requires sufficient source diversity besides adequate sample complexity.

\shir{\section*{Acknowledgements}
The work of CSs was supported in part by the US National Science Foundation (NSF) under awards  2143559, 2029978, 2002902, and 2132700, and the Bloomberg Data Science Ph.D. Fellowship. The work of JY was supported in part by the US NSF under awards 2003131, 1956276, 2030026, and 2133170.}


\bibliography{ref.bib}
\bibliographystyle{icml2023}

\newpage
\appendix
\onecolumn

\section{Additional Discussions}
\subsection{The Motivation and Setting of This Work}\label{app:setting}
\shir{The work is largely motivated by the need of finding generally applicable strategies in many real-world applications. Especially, concrete motivating examples include the chatbot training discussed in Section~\ref{sec:intro}, and the following ones (among many others) from the healthcare and recommendation systems:
\begin{itemize}
    \item In healthcare applications, standardized procedures for medical diagnosis and treatment are often valuable as they provide general guidelines for medical personnel to follow. The output policy in this work can serve this purpose as it is able to aggregate common information in historical medical records from different patients;
    \item In recommendation systems, when dealing with a new customer, online shopping platforms would need a generally effective mechanism for advertising and promoting. Our proposed approach can then leverage existing histories from different past customers to find the desired generic mechanism.
\end{itemize}
In other words, it would be helpful to interpret the target MDP in this work as the population-level response dynamics in these real-world applications (e.g., common reactions to certain medical treatments and promoting strategies), while the source MDPs characterize the sampled individuals. Motivated by the aforementioned practical needs, this work focuses on finding such generally applicable strategies via individually perturbed data sources.}

For a more detailed comparison with related works, we also recall the setting of this work and provide a step-by-step overview in Fig.~\ref{fig:model_full}. In particular, we note that the learning goal is the target MDP $\Mc$ but there is no direct access to it. Instead, a few data source MDPs $\{\Mc_l: l\in [L]\}$ are available, which are randomly perturbed from the target MDP $\Mc$. Datasets $\{\Dc_l: l\in [L]\}$ are collected from these data source MDPs via behavior policies $\{\rho_l: l\in [L]\}$. With these datasets, the agent finds (e.g., via the proposed HetPEVI algorithm) an output policy $\hat{\pi}$. Lastly, this learned policy $\hat{\pi}$ is intended to be deployed back to the target MDP $\Mc$, where its performance is measured.

\begin{figure}[htb]
    \centering
    \includegraphics[width = 
    \linewidth]{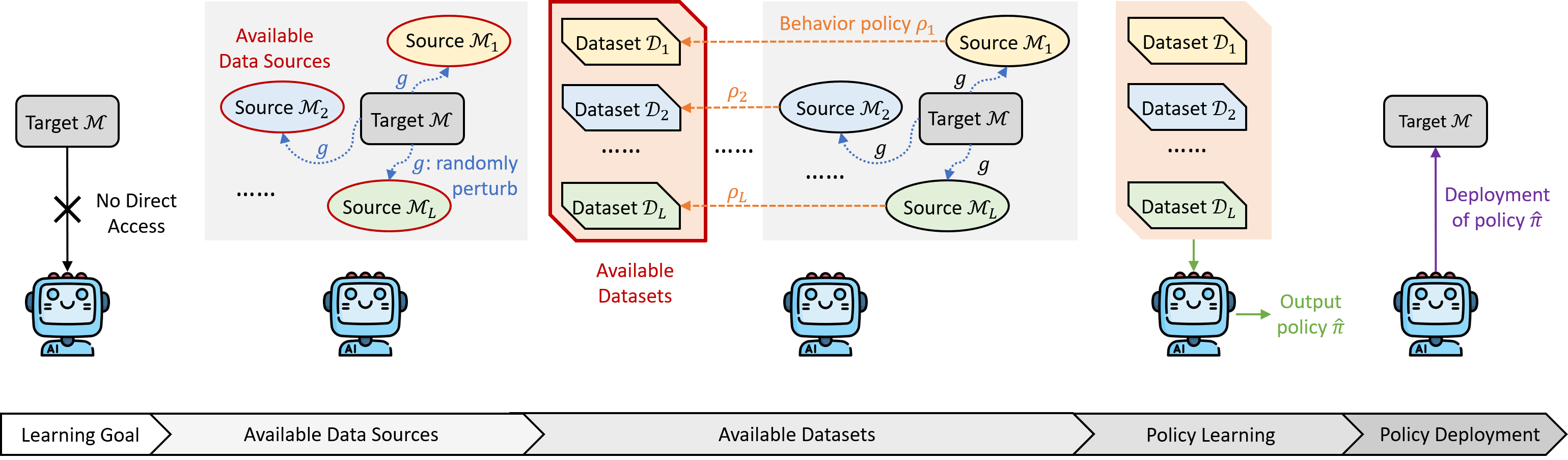}
    \caption{A step-by-step problem overview.}
    \label{fig:model_full}
\end{figure}

\shir{To further illustrate the considered setting, we specify a learning problem as follows.
\begin{itemize}
    \item \textbf{Target MDP.} Consider the target MDP to be the dynamics of picking up a bowl of a certain shape and weight.
    \item \textbf{Source MDPs.} One possible scenario is that we have four other bowls: one larger than the target bowl, one smaller, one heavier, and one lighter. Each data source is randomly given one bowl from these four and then its picking-up trajectories are collected.
    \item \textbf{Target-source Relationship.} The most basic setting to be considered is that the averaged shape and weight of the four other bowls match those of the target bowl. Then, the averaged picking-up dynamics at each location 
    and with each movement of these four bowls should conceivably match those of picking up the target bowl. Finally, since data sources are randomly selected from the four bowls, the expected source dynamics match the target dynamics, which is now stated as Assumption~\ref{asp:relationship}. With the proposed HetPEVI, one can learn how to pick the original bowl using these collected trajectories by picking other bowls.
\end{itemize}}

\subsection{Related Works}\label{app:related}
Reinforcement learning \citep{sutton2018reinforcement} has seen much progress over the past few years, particularly in its theoretical understanding; see the recent monograph \citep{agarwal2019reinforcement} for an overview. We will discuss the most related papers in the following, with a particular focus on the theoretical aspect as well as the offline setting. A compact summary of these topics and their comparisons with this work can be found in Table~\ref{tbl:related}. Graphical illustrations can be found in Fig.~\ref{fig:related}.

\textbf{Canonical offline RL.} Inspired by empirical advances \citep{yu2020mopo, kumar2020conservative}, the principle of ``pessimism'' is incorporated and proved efficient for offline RL \citep{jin2021pessimism,rashidinejad2021bridging,xie2021policy,li2022settling, shi2022pessimistic,yin2022near,xiong2022nearly,xie2021bellman,uehara2021pessimistic,zanette2021provable}, which is also adopted in the design of HetPEVI and its generalizations. However, these works focus on the classical setting of learning with a dataset sampled exactly from the target task, which is rather restricted for practical applications. 

\begin{figure}[tb]
    \centering
    \subfigure[Canonical offline RL]{\includegraphics[height = 1.6in]{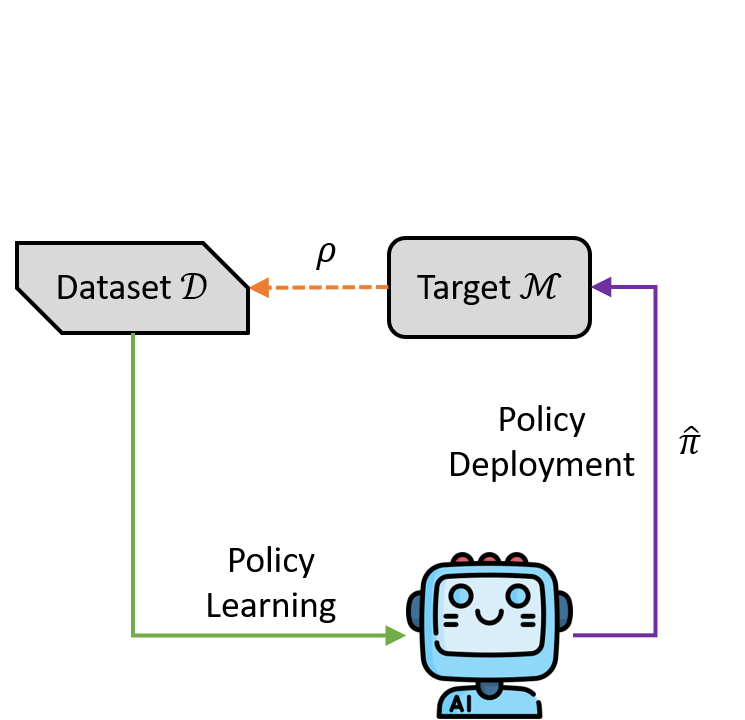}}
    \subfigure[Offline robust RL]{\includegraphics[height = 1.6in]{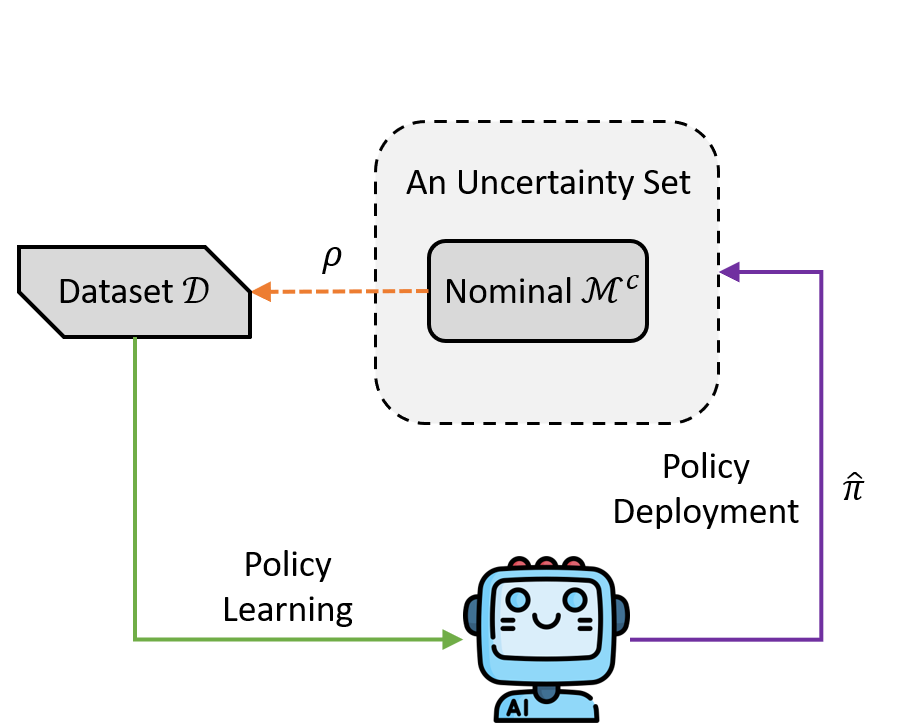}}
    \subfigure[Offline latent RL]{\includegraphics[height = 1.6in]{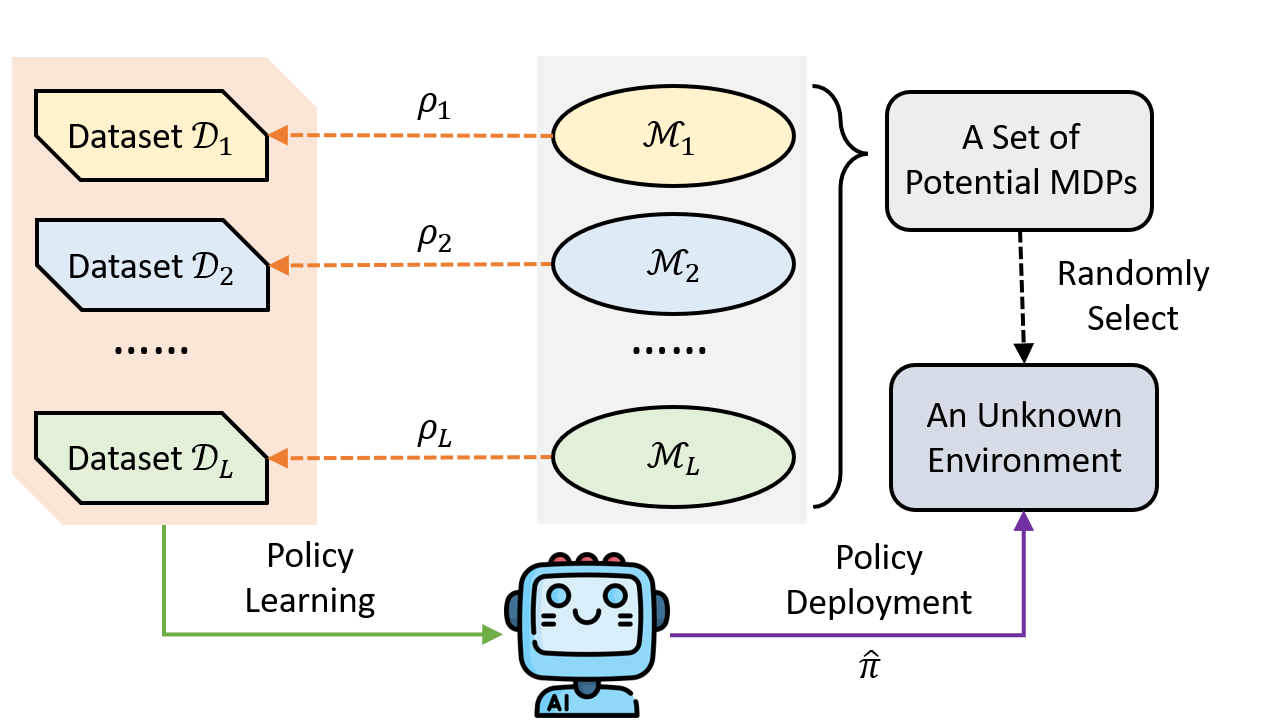}}
    \subfigure[Offline federated/multi-task RL]{\includegraphics[height = 1.6in]{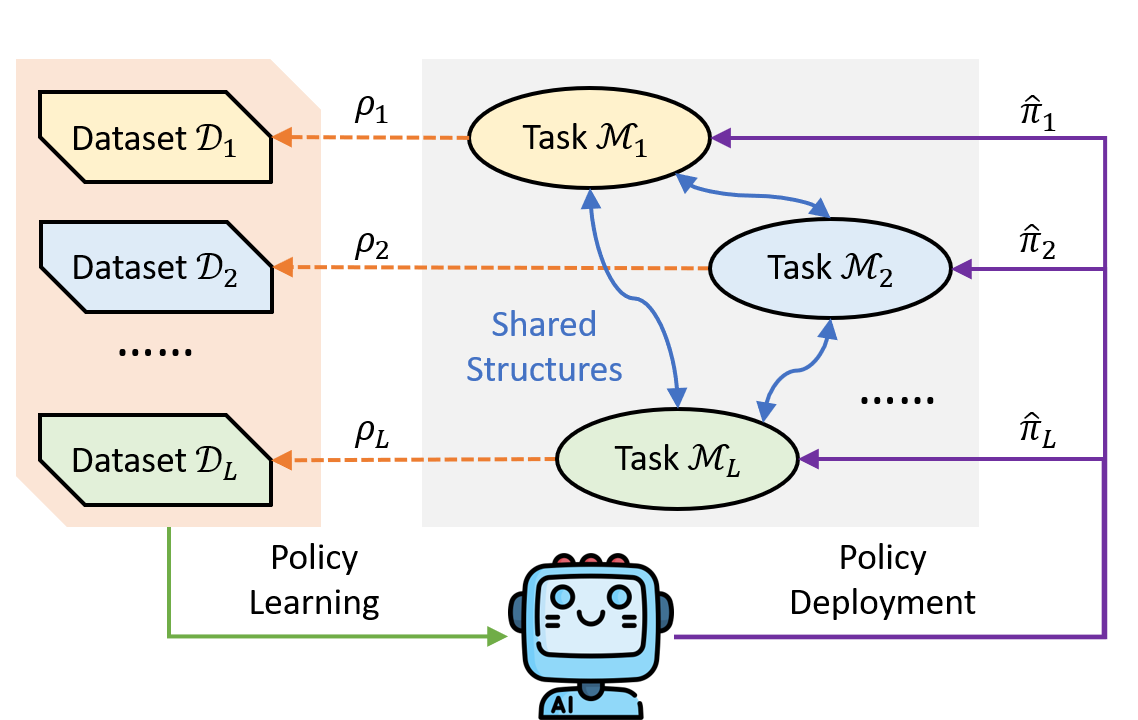}}
    \hspace{0.4in}
    \subfigure[Offline perturbed-source RL, this work]{\includegraphics[height = 1.6in]{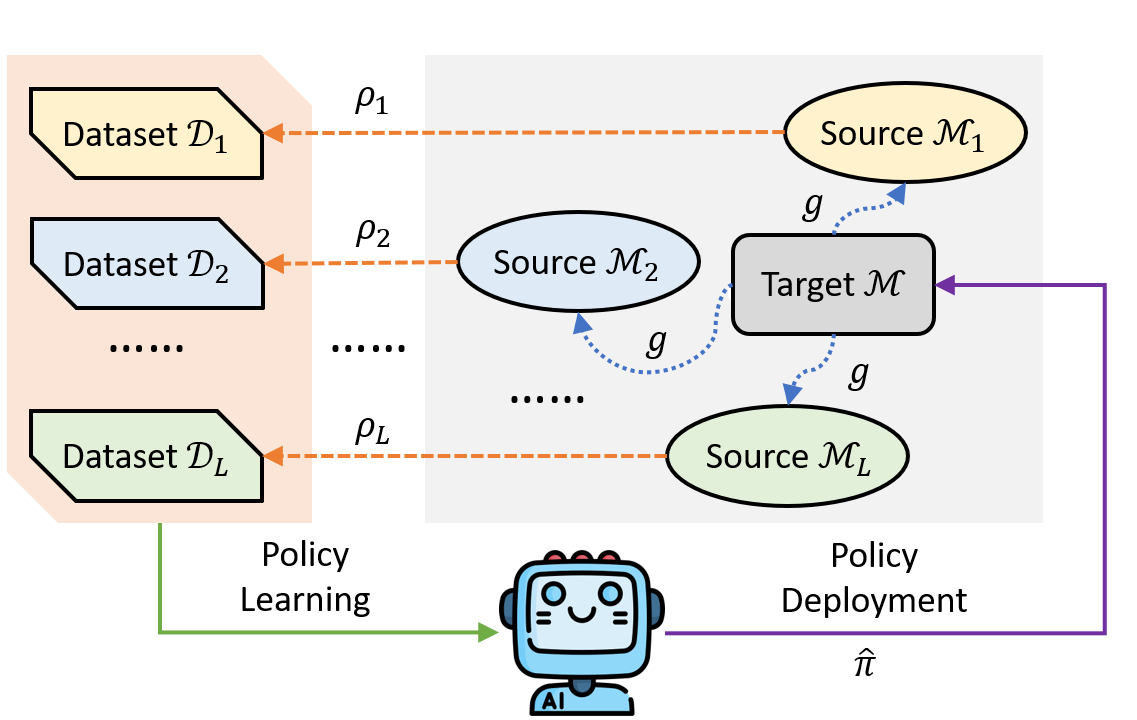}}
    \caption{Graphical illustrations of related research topics, which differ from this work in their studied data sources and evaluation criteria.}
    \label{fig:related}
\end{figure}

\textbf{Robust RL.} Recently, a series of work \citep{zhou2021finite,yang2021towards,panaganti2022robust,panaganti2022sample,shi2022distributionally,ma2022distributionally} has made theoretical advances on the topic of offline robust RL. In particular, a dataset from one data source, i.e., a nominal MDP, is collected, which is used by the learning agent to find an output policy. Then, the output policy is deployed to an uncertainty set around the nominal MDP, and its worst-case performance is adopted as the evaluation criteria. However, this work mainly considers multiple data sources while the learned policy is intended for deployment on the target task. Furthermore, in Section~\ref{sec:robust}, we generalize the study of offline robust RL to consider that the available data are not from multiple perturbed versions of the nominal MDP instead of itself.

\textbf{Latent RL.} Another related topic is latent RL \citep{kwon2021rl, zhou2022horizon}. These existing studies are mainly in the online setting. Following the same spirit, the corresponding offline version would require datasets from a set of potential MDPs. Then, the learning agent aims to find a good policy that performs well on average in an unknown environment randomly selected from the aforementioned set of potential MDPs (which is often modeled to be related to a latent variable). Thus, although both latent RL and this work need to deal with multiple data sources, this work considers data sources that are perturbed versions of a target MDP while latent RL poses no relationships among data sources. Moreover, this work evaluates the learned policy on the target MDP while latent RL targets at performing well on the potential MDPs on average.

\textbf{Federated and multi-task RL.} Federated RL has attracted much attention recently \citep{dubey2021provably,jin2022federated}, and \citet{zhou2022federated} studies its offline version. In particular, \citet{zhou2022federated}  considers datasets from different MDPs at different sites, which share certain representations. The design is to leverage the shared structure to accelerate learning of each individual site MDP, i.e., find a good policy for each site MDP. Similarly, multi-task RL attempts to leverage common structures of multiple tasks to facilitate learning each individual task \citep{zhang2021provably,lu2021power,yang2020impact,yang2022nearly}. However, this work considers a stochastic relationship between data source MDPs (Assumption~\ref{asp:relationship}) instead of explicitly shared structures, while aiming to find one good policy for the target MDP (but not for the data source MDPs).

\textbf{Meta-RL.} The most related literature of this work falls in the research domain of ``offline meta-RL'' \citep{mitchell2021offline,dorfman2021offline,lin2022model,li2020focal}, which however lacks rigorous theoretical understanding currently. Especially, the target MDP can be viewed as a learning target for the ``meta-training'' process of offline meta-RL \citep{mitchell2021offline}, which aims to extract information from the available data of multiple sources. In addition to ``meta-training'', the empirically studied offline meta-RL systems often feature another step of ``meta-testing'', which further utilizes the learned information and applies them to a specific task. Thus, we believe this work may contribute to the theoretical understanding of offline meta-RL systems, especially the meta-training process, which may also serve as the foundation for studies of the meta-testing process.

\textbf{Other related works.} Another conceptually related work is \citet{shrestha2020deepaveragers}. In particular, it looks for similar state-action pairs with small distances in the dataset, which can be thought of as available data sources in this work. Then, the Lipschitz continuity assumption is posed, which serves a similar role as Assumption~\ref{asp:relationship} to establish the connection between desired task information with the available datasets. From this perspective, the first term in Theorem 3.1 \citep{shrestha2020deepaveragers} can be interpreted as coming from the source uncertainty while the second term is from the sample uncertainty. However, we also note that the Lipschitz continuity assumption is a worst-case consideration that would not characterize the concentration of involving more data sources, which however is the key of this work.

\shir{Moreover, \citet{jeong2022calibrated} provided a set of results to jointly characterize the sample and source uncertainties. One distinction is that it focuses on leveraging multiple estimators on one randomly perturbed data source, while this work targets aggregating information from multiple heterogeneously perturbed data sources. Despite the difference, the methods proposed in \citet{jeong2022calibrated} may still be of value in the future study of offline RL with randomly perturbed data sources, especially its utilization of between-dataset information in quantifying uncertainties.}

\subsection{Future Works}\label{app:future}
Some discussions on future works are included in Section~\ref{sec:discussion}. A few other potential directions are discussed as follows.

\textbf{Coverage Assumptions.} While the collective coverage requirements of Assumptions~\ref{asp:collective_tabular}, \ref{asp:collective_game} and \ref{asp:collective_robust} are relatively weak, it is still of major interest to further explore how to perform offline RL (especially with heterogeneous data sources) under weaker conditions. This direction is particularly interesting with multiple data sources since the heterogeneity naturally enriches the data diversity.

\textbf{Unknown Source Identities.} This work considers the scenario where each data sample is known to belong to a particular source. One interesting direction is to investigate the scenarios without such information, i.e., unknown source identities. A potential solution is to first cluster the data samples and then adopt the algorithms proposed in this work. However, it is challenging to design clustering algorithms with provable performance guarantees. One candidate clustering technique is developed in \cite{kwon2021rl} for the study of latent MDP, which however relies on strong assumptions of prior knowledge about the source MDPs.

\textbf{Personalization.} As mentioned in the discussions of related work, this work can be viewed as targeting at the ``meta-training'' process of offline meta-RL \citep{mitchell2021offline}, which extracts common knowledge from available data of multiple sources. While the extracted common knowledge has individual values, in many applications, an additional step of personalization is performed to further use such knowledge to benefit a specific task, which is called the ``meta-testing'' process of offline meta-RL \citep{mitchell2021offline}. Based on this work, it would be valuable to further study how to perform such a personalization step with theoretical guarantees.

\section{Proof of Theorem~\ref{thm:lower_bound}}
In this section, we provide the proof of Theorem~\ref{thm:lower_bound}, which is inspired by \citet{li2022settling} and \citet{shi2022distributionally} but more complicated as the datasets are collected from data sources instead of the target task itself.
\subsection{Construction of hard problem instances}
Let us first introduce two MDPs to be used in the following proofs:
\begin{align*}
    \left\{\Nc^{\chi} = \left(H, \Sc, \Ac, \Qb^{\chi} = \{\Qb^{\chi}_h:h\in [H] \} , r = \{r_h:h\in [H]\}\right): \chi \in \{0, 1\}\right\},
\end{align*}
where the state space is $\Sc = \{0, 1, \cdots, S-1\}$, and the action space is $\Ac = \{0, 1, 2\}$. 

The transition kernel $\Qb^{0}$ is defined as
\begin{align*}
    \Qb^{0}_1(s'|s,a) &= 
    \begin{cases}
        p'\oneb\{s' = 0\} + (1-p') \oneb\{s' = 1\} & \text{if $(s,a) = (0, 0)$}\\
        q'\oneb\{s' = 0\} + (1-q') \oneb\{s' = 1\} & \text{if $(s,a) = (0, 1)$}\\
        q\oneb\{s' = 0\} + (1-q) \oneb\{s' = 1\} & \text{if $(s,a) = (0, 2)$}\\
        \oneb\{s' = s\} & \text{if $s \geq 1$}
    \end{cases}\\
    \Qb^{0}_h(s'|s,a) &= \oneb\{s' = s\}, \qquad \forall (h, s, ,a)\in \{2, \cdots, H\}\times \Sc\times \Ac.
\end{align*}

The transition kernel $\Qb^{1}$ is defined as
\begin{align*}
    \Qb^{1}_1(s'|s,a) &= 
    \begin{cases}
        q'\oneb\{s' = 0\} + (1-q') \oneb\{s' = 1\} & \text{if $(s,a) = (0, 0)$}\\
        p'\oneb\{s' = 0\} + (1-p')\oneb\{s' = 1\} & \text{if $(s,a) = (0, 1)$}\\
        q\oneb\{s' = 0\} + (1-q) \oneb\{s' = 1\} & \text{if $(s,a) = (0, 2)$}\\
        \oneb\{s' = s\} & \text{if $s \geq 1$}
    \end{cases}\\
    \Qb^{1}_h(s'|s,a) &= \oneb\{s' = s\}, \qquad \forall (h, s, ,a)\in \{2, \cdots, H\}\times \Sc\times \Ac.
\end{align*}

The parameters $p', p, q$ and $q'$ are set to be
\begin{align*}
    p' = \frac{3}{4} - \frac{1}{H} + \Delta; \qquad p = p' -\alpha \Delta; \qquad q'= p' - \Delta = \frac{3}{4} - \frac{1}{H}; \qquad q= q' + \alpha\Delta
\end{align*}
for some $H$, $\Delta$ and $\alpha$ obeying 
\begin{align*}
    \frac{1}{H}< \frac{1}{4}; \qquad \Delta \leq \frac{1}{8}; \qquad \alpha\leq \frac{1}{2}.
\end{align*}
Thus,
\begin{align*}
    \frac{7}{8} > p' > p > q > q' \geq \frac{1}{2}.
\end{align*}

Moreover, for any $(h, s, a)\in [H]\times \Sc\times \Ac$, the reward function is defined as
\begin{align*}
    r_h(s,a) =
    \begin{cases}
        1 & \text{if $s= 0$}\\
        0 & \text{otherwise}.
    \end{cases}
\end{align*}
\subsubsection{Construction of a collection of hard target MDPs} Let us introduce another two MDPs as target MDPs:
\begin{align*}
    \left\{\Mc^{\phi} = \left(H, \Sc, \Ac, \Pb^{\phi} = \{\Pb^{\phi}_h:h\in [H], \}  , r = \{r_h:h\in [H]\}\right): \phi \in \{0, 1\}\right\},
\end{align*}
where the state space is $\Sc = \{0, 1, \cdots, S-1\}$, and the action space is $\Ac = \{0, 1, 2\}$. 

The transition kernel $\Pb^{0}$ is defined as
\begin{align*}
    \Pb^{0}_1(s'|s,a) &= 
    \begin{cases}
        p\oneb\{s' = 0\} + (1 - p) \oneb\{s' = 1\} & \text{if $(s,a) = (0, 0)$}\\
        q\oneb\{s' = 0\} + (1 - q) \oneb\{s' = 1\} & \text{if $(s,a) = (0, 1)$}\\
        q\oneb\{s' = 0\} + (1-q) \oneb\{s' = 1\} & \text{if $(s,a) = (0, 2)$}\\
        \oneb\{s' = s\} & \text{if $s \geq 1$}
    \end{cases}\\
    \Pb^{0}_h(s'|s,a) &= \oneb\{s' = s\}, \qquad \forall (h, s, ,a)\in \{2, \cdots, H\}\times \Sc\times \Ac.
\end{align*}

The transition kernel $\Pb^{1}$ is defined as
\begin{align*}
    \Pb^{1}_1(s'|s,a) &= 
    \begin{cases}
        q\oneb\{s' = 0\} + (1 - q) \oneb\{s' = 1\} & \text{if $(s,a) = (0, 0)$}\\
        p\oneb\{s' = 0\} + (1 - p) \oneb\{s' = 1\} & \text{if $(s,a) = (0, 1)$}\\
        q\oneb\{s' = 0\} + (1-q) \oneb\{s' = 1\} & \text{if $(s,a) = (0, 2)$}\\
        \oneb\{s' = s\} & \text{if $s \geq 1$}
    \end{cases}\\
    \Pb^{1}_h(s'|s,a) &= \oneb\{s' = s\}, \qquad \forall (h, s, ,a)\in \{2, \cdots, H\}\times \Sc\times \Ac.
\end{align*}

It can be observed that
\begin{align*}
    \Mc^{0} = (1-\alpha)\cdot \Nc^{0} + \alpha\cdot \Nc^{1}, \qquad \Mc^{1} = \alpha\cdot \Nc^{0} + (1-\alpha)\cdot \Nc^{1},
\end{align*}
where the weighted average is w.r.t. rewards and transition kernels.

\subsubsection{Construction of a source MDP generation distribution}
If the target MDP is $\Mc^0$, then with probability $1-\alpha$, the generated source MDP is $\Nc^0$, and with probability $\alpha$, the generated source MDP is $\Nc^1$.  If the target MDP is $\Mc^0$, then with probability $\alpha$, the generated source MDP is $\Nc^0$, and with probability $1 - \alpha$, the generated source MDP is $\Nc^1$.

\subsubsection{Construction of the offline dataset}
In the environment $\Nc^{\chi}$, a batch dataset is generated consisting of $K$ independent sample trajectories each of length $H$ based on an initial distribution 
\begin{align*}
    \xi^d(s) = \mu(s),
\end{align*}
where
\begin{align*}
    \mu(s) = \frac{1}{CS}\oneb\{s = 0\} + \left(1 - \frac{1}{CS}\right)\oneb\{s = 1\}, \qquad \text{with } \frac{1}{CS} \leq \frac{1}{4},
\end{align*}
and a behavior policy, which is specified in the following.

\textbf{Good behavior policy.} The good behavior policy $\rho^g$ uniformly selects actions $\{0,1\}$ as follows:
\begin{align*}
    \rho^g_h(a|s) = \frac{1}{2}, \qquad \forall (s, a, h)\in \Sc \times \{0, 1\} \times [H].
\end{align*}

It turns out that for any MDP $\Nc^{\chi}$, the occupancy distributions of the above batch dataset admit the following characterization:
\begin{align*}
    d^{\rho^g, \Nc^{\chi}}_1(0, a;\xi^d) = \frac{1}{2}\mu(0), &\qquad \forall a\in \{0,1\};\qquad \frac{\mu(s)}{4} \leq  d^{\rho^g, \Nc^\chi}_h(s, a;\xi^d) \leq \mu(s), \qquad \forall (s, a, h) \in \Sc\times \{0,1\} \times [H].
\end{align*}
In particular, for any $\Nc^\chi$ with $\chi\in \{0,1\}$ and the initial distribution as $\xi^d(s) = \mu(s)$, we have that
\begin{align*}
    d^{\rho^g,\Nc^\chi}_1(s;\xi^d) = \mu(s), \qquad \forall s\in \Sc,
\end{align*}
which leads to
\begin{align*}
    d^{\rho^g,\Nc^\chi}_1(0, 0;\xi^d) = \mu(0)\rho^g_1(0|0) = \frac{\mu(0)}{2}; \qquad d^{\rho^g,\Nc^\chi}_1(0, 1;\xi^d) = \mu(0)\rho^g_1(1|0) = \frac{\mu(0)}{2}.
\end{align*}
The state occupancy distribution at step $h = 2$ obeys that
\begin{align*}
    d^{\rho^g, \Nc^\chi}_2(0;\xi^d) = \mu(0) \left[\rho^g_1(\chi|0)p' + \rho^g_1(1-\chi|0)q' + \rho^g_1(2|0)q\right] = \frac{\mu(0)(p'+q')}{2},
\end{align*}
and
\begin{align*}
    d^{\rho^g, \Nc^\chi}_2(1;\xi^d) = \mu(1) + \mu(0) \left[\rho^g_1(\chi|0)(1-p') + \rho^g_1(1-\chi|0)(1-q') + \rho^g_1(2|0)(1-q)\right] = \mu(1) + \frac{\mu(0)(2 - p' -q')}{2}.
\end{align*}
The above results can be further bounded as
\begin{align*}
    \frac{\mu(0)}{2} \leq  d^{\rho^g, \Nc^\chi}_2(0;\xi^d) \leq \mu(0), \qquad  \mu(1) \leq d^{\rho^g, \Nc^\chi}_2(1;\xi^d)\leq 2\mu(1),
\end{align*}
which leads to that
\begin{align*}
    &\frac{\mu(0)}{4} \leq  d^{\rho^g, \Nc^\chi}_h(0, a;\xi^d) \leq \frac{\mu(0)}{2}, \qquad \forall (a, h) \in \{0,1\} \times [2, H];\\
    &\frac{\mu(1)}{2} \leq  d^{\rho^g, \Nc^\chi}_h(1, a;\xi^d) \leq \mu(1), \qquad \forall (a, h) \in \{0,1\} \times [2, H].
\end{align*}

\noindent\textbf{Bad behavior policy.} The bad behavior policy $\rho^b$ always selects action $2$ as follows:
\begin{align*}
    \rho^b_h(2|s) = 1, \qquad \forall (s, a, h)\in \Sc \times \{2\} \times [H].
\end{align*}

Correspondingly, for any MDP $\Nc^{\chi}$, it is easy to observe that the occupancy distributions of the above batch dataset follow that
\begin{align*}
    d^{\rho^b, \Nc^{\chi}}_h(s, a;\xi^d) = 0, \qquad \forall (s, a, h)\in \Sc\times \{0, 1\}\times [H].
\end{align*}

We then specify the initial distributions of all data sources as $\xi^d$, the behavior policies of the first $L^\ddagger$ data sources as the good ones, i.e., $\rho^g$, and the behavior policies of the other $L - L^\ddagger$ as the bad ones, i.e., $\rho^b$. 

\subsubsection{Value functions and optimal policies}
We choose the initial state distribution to be tested on as
\begin{align*}
    \xi(s) = \begin{cases}
        1, & \text{if $s = 0$}\\
        0, &\text{otherwise}.
    \end{cases}
\end{align*}
Then, the following lemma can be established.
\begin{lemma}
    For any $\phi \in \{0, 1\}$ and any policy $\pi$, it holds that
    \begin{align*}
        V^{\pi, \Mc^\phi}_1(0) = 1 + \pi_1(\phi|0) p (H - 1) + \pi_1(1-\phi|0) q (H-1) + \pi_1(2|0) q(H-1).
    \end{align*}
    In addition, there exists an optimal policy $\pi^{*,\Mc^\phi}$ such that its optimal value functions obey
    \begin{align*}
        &V^{\pi^*,\Mc^\phi}_1(0) = 1 + p (H - 1),\\
        \forall h \in [2, H]: \qquad & V^{\pi^*,\Mc^\phi}_h(0) = H - h +1,\\
        \forall h \in [H]: \qquad & \pi^{*,\Mc^\phi}_h(\phi|0) = 1, \qquad  \pi^{*,\Mc^\phi}_h(\phi|1) = 1, \qquad V^{\pi^*, \Mc^{\phi}}_h(1) = 0,
    \end{align*}
    where we denote $V^{\pi^{*},\Mc^\phi}_h(s):=V^{\pi^{*,\Mc^{\phi}},\Mc^\phi}_h(s)$ for simplicity.
    
    Furthermore, it holds that
    \begin{align*}
        L^\dagger = L^\ddagger; \qquad C^\dagger \in [C, 4C].
    \end{align*}
\end{lemma}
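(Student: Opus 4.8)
The plan is to verify the claimed value-function and optimal-policy formulas by direct computation, exploiting the simple block structure of the MDPs $\Mc^\phi$, and then to translate the coverage definitions $L^\dagger$ and $C^\dagger$ into the combinatorial data of the construction. First I would observe that in any $\Mc^\phi$ the only step with nontrivial dynamics is $h=1$: the state at $h=1$ is $0$ (by the choice of $\xi$), from which action $\phi$ transits to $0$ with probability $p$, while actions $1-\phi$ and $2$ transit to $0$ with probability $q<p$; all subsequent steps are absorbing, and the reward is $1$ exactly at state $0$. Hence for any policy $\pi$, conditioning on the action chosen at $(0,1)$ and on the realized next state, a trajectory that lands in state $0$ at $h=2$ collects reward $H-1$ thereafter, while one that lands in state $1$ collects $0$; adding the deterministic reward $1$ at $h=1$ gives exactly
\[
V^{\pi,\Mc^\phi}_1(0) = 1 + \big(\pi_1(\phi|0)\,p + \pi_1(1-\phi|0)\,q + \pi_1(2|0)\,q\big)(H-1).
\]
Since $p>q$, this is maximized by putting all mass on action $\phi$, which yields $V^{\pi^*,\Mc^\phi}_1(0) = 1 + p(H-1)$; for $h\ge 2$ every state $0$ trajectory is absorbing, so $V^{\pi^*,\Mc^\phi}_h(0) = H-h+1$, and state $1$ is absorbing with zero reward so $V^{\pi^*,\Mc^\phi}_h(1)=0$, with the optimal action at $(0,h)$ and $(1,h)$ being arbitrary for $h\ge 2$ (we may fix it to $\phi$). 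This establishes the first three displayed equations.

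Next I would compute $L^\dagger$ and $C^\dagger$. The optimal policy visits $(0,1)$ with action $\phi$ and then state $1$ (with positive probability) or state $0$ for all later steps; importantly, it is only the \emph{first} $L^\ddagger$ sources (those with the good policy $\rho^g$) that visit any $(s,a,h)$ with $a\in\{0,1\}$, while the remaining $L-L^\ddagger$ bad sources visit only $(s,2,h)$. Since $\pi^{*,\Mc^\phi}$ induces occupancy only on state-action pairs with $a=\phi\in\{0,1\}$ (at $h=1$, the pair $(0,\phi,1)$; at later steps, $(0,\phi,h)$ and $(1,\phi,h)$), the set $\Lc_h(s,a)$ restricted to these pairs is exactly $\{1,\dots,L^\ddagger\}$, so $|\Lc_h(s,a)| = L^\ddagger$ for every pair in the minimization defining $L^\dagger$; hence $L^\dagger = L^\ddagger$. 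For $C^\dagger$, I would plug in the bounds $\frac{\mu(s)}{4}\le d^{\rho^g,\Nc^\chi}_h(s,a;\xi^d)\le \mu(s)$ established earlier, together with $d^{\pi^*,\Mc^\phi}_h(s,a;\xi)\le 1$ and $\mu(0)=\frac{1}{CS}$ (so $\min\{d^{\pi^*,\Mc^\phi}_h(0,a),\tfrac1S\}$ and the denominator $d^{\rho^g}\asymp \mu(0)=\tfrac1{CS}$ combine to give an expression of order $C$, after the $1/|\Lc_h(s,a)|$ cancels the sum of $L^\ddagger$ identical terms). A careful bookkeeping of the worst $(s,a,h)$—which will be the $h=1$ term where $d^{\pi^*,\Mc^\phi}_1(0,\phi)=1$ and $d^{\rho^g,\Nc^\chi}_1(0,\phi)=\tfrac{\mu(0)}{2}$—gives $C^\dagger\in[C,4C]$.

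The main obstacle I anticipate is the bookkeeping in the $C^\dagger$ computation: one must be careful that the clipping at $1/S$ in the numerator, the varying lower/upper bounds on $d^{\rho^g}$ across steps $h=1$ versus $h\ge 2$ and across states $0$ versus $1$, and the normalization by $|\Lc_h(s,a)| = L^\ddagger$ all interact correctly, and that the worst-case $(s,a,h)$ is correctly identified so that the resulting constant genuinely lies in $[C,4C]$ rather than a looser interval. Everything else is a routine consequence of the absorbing structure and of the occupancy bounds already derived in the construction; the value-function identities follow from a one-step expansion, and the identity $L^\dagger = L^\ddagger$ follows immediately from the fact that exactly the good sources cover the optimal policy's support.
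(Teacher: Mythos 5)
Your proposal is correct and follows essentially the same route as the paper: a one-step Bellman expansion exploiting the absorbing structure for the value-function identities, and direct substitution of the occupancy bounds $\mu(0)/4 \le d^{\rho^g,\Nc^\chi}_h(0,a;\xi^d)\le \mu(0)$ (together with the clipping $\min\{d^{\pi^*}_h(0,\phi),1/S\}=1/S$ and the check that the $(1,\phi,h)$ ratios are at most $2C/3$) for the coverage quantities. One small correction: the worst-case pair need not be $h=1$ (where the ratio equals exactly $2C$), since for $h\ge 2$ the denominator can drop to $\mu(0)/4$ and push the ratio up to $4C$ — but every candidate ratio at $(0,\phi,h)$ lies in $[C,4C]$, so the conclusion $C^\dagger\in[C,4C]$ stands regardless.
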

\begin{proof}
    For any policy $\phi$, it can be easily observed that
    \begin{align*}
        V^{\pi, \Mc^\phi}_h(0) = H - h + 1, &\qquad \forall h \in [2, H];\\
        V^{\pi, \Mc^\phi}_h(s) = 0, &\qquad \forall (s, h) \in \{1, \cdots, S-1\} \times [H],
    \end{align*}
    which immediately indicates that
    \begin{align*}
        &V^{\pi^*,\Mc^\phi}_1(0) = 1 + p (H - 1),\\
        \forall h \in [2, H]: \qquad & V^{\pi^*,\Mc^\phi}_h(0) = H - h +1,\\
        \forall h \in [H]: \qquad & \pi^{*,\Mc^\phi}_h(\phi|0) = 1, \qquad  \pi^{*,\Mc^\phi}_h(\phi|1) = 1, \qquad V^{\pi^*, \Mc^{\phi}}_h(1) = 0.
    \end{align*}
    For a policy $\pi$, it further holds that
    \begin{align*}
        V^{\pi, \Mc^\phi}_1(0) &= \Eb_{a\sim \pi_1(\cdot|0)}\left[r_h(0,a) + \left(\Pb_1 V^{\pi, \Mc^\phi}_1\right)(0, a)\right] \\
        &= 1 + \pi_1(\phi|0)\left(p V^{\pi, \Mc^\phi}_2(0) + (1-p)V^{\pi, \Mc^\phi}_2(1)\right) \\
        & + \pi_1(1-\phi|0)\left(q V^{\pi, \Mc^\phi}_2(0) + (1-q)V^{\pi, \Mc^\phi}_2(1)\right) \\
        & + \phi_1(2|0)\left(q V^{\pi, \Mc^\phi}_2(0) + (1-q)V^{\pi, \Mc^\phi}_2(1)\right)\\
        & = 1 + \pi_1(\phi|0) p (H - 1) + \pi_1(1-\phi|0) q (H-1) + \pi_1(2|0)q(H-1).
    \end{align*}
    Moreover, it holds that
    \begin{align*}
        d^{\pi^{*,\Mc^\phi},\Mc^\phi}_h(0, \phi; \xi) &= d^{\pi^{*,\Mc^\phi},\Mc^\phi}_h(0;\xi) = \bP\left\{s_h = 0| s_{h-1}\sim d^{\pi^{*,\Mc^\phi},\Mc^\phi}_{h-1}(\cdot|\xi), a_h\sim \pi^{*,\Mc^\phi}_h(\cdot|s_h)\right\}\\
        &= d^{\pi^{*,\Mc^\phi},\Mc^\phi}_{h-1}(0;\xi) = \cdots = d^{\pi^{*,\Mc^\phi},\Mc^\phi}_2(0;\xi) \geq p \xi(0) \geq \frac{1}{2}.
    \end{align*}
    Thus,
    \begin{align*}
        &\frac{\min\left\{d^{\pi^*, \Mc^{\phi}}_{h}(0,\phi; \xi), \frac{1}{S}\right\}}{d^{\rho^g, \Nc^{\chi}}_{h}(0,\phi; \xi^d)} = \frac{1/S}{d^{\rho^g, \Nc^{\chi}}_{h}(0,\phi; \xi^d)} \in \left[ \frac{1/S}{\mu(0)},  \frac{1/S}{\mu(0)/4}\right]= [C, 4C];\\
        &\frac{\min\left\{d^{\pi^*, \Mc^{\phi}}_{h}(1,\phi; \xi), \frac{1}{S}\right\}}{d^{\rho^g, \Nc^{\chi}}_{h}(1,\phi; \xi^d)} \leq \frac{1/S}{\mu(1)/2} = \frac{2}{S(1-\frac{1}{CS})} \leq \frac{8}{3S} \leq \frac{2C}{3},
    \end{align*}
    which concludes the proof.
\end{proof}

\subsection{Establishing the minimax lower bound}
\subsubsection{Converting the goal to estimate $\phi$}
We choose $\alpha$ and $\Delta$ such that
\begin{align*}
    (H-1)(1-2\alpha)\Delta \geq 2\varepsilon.
\end{align*}
Then, with the selected $\xi$, it holds that
\begin{align*}
    &V^{\pi^*, \Mc^\phi}_1(\xi) - V^{\hat{\pi}, \Mc^\phi}_1(\xi) \\
    &= V^{\pi^*, \Mc^\phi}_1(0) - V^{\hat{\pi}, \Mc^\phi}_1(0)\\
    &= 1 + p (H - 1) - \left[1 + \hat{\pi}_1(\phi|0) p (H - 1) +\hat{\pi}_1(1-\phi|0) q (H-1) + \hat{\pi}_1(2|0)q(H-1)\right]\\
    & = p (H - 1) - \hat{\pi}_1(\phi|0) p (H - 1) - \hat{\pi}_1(1-\phi|0) q (H-1) - \hat{\pi}_1(2|0)q(H-1)\\
    & = (H-1)(p- q) (1- \hat{\pi}_1(\phi|0))\\
    & = (H-1)(1-2\alpha)\Delta (1- \hat{\pi}_1(\phi|0))\\
    & \geq 2\varepsilon (1- \hat{\pi}_1(\phi|0)).
\end{align*}
Suppose that for any $\phi \in \{0,1\}$,
\begin{align*}
    \bP_{\phi}\left\{V^{\pi^*, \Mc^\phi}_1(\xi) - V^{\hat{\pi}, \Mc^\phi}_1(\xi)\leq \varepsilon\right\} \geq \frac{7}{8},
\end{align*}
we necessarily have that
\begin{align*}
    \bP_{\phi}\left\{\hat{\phi} = \phi\right\} =\bP_{\phi}\left\{\hat{\pi}_1(\phi|0) \geq \frac{1}{2}\right\} \geq \frac{7}{8},
\end{align*}
where
\begin{align*}
    \hat{\phi} = \begin{cases}
        \phi, & \text{if $\hat{\pi}_1(\phi|0)\geq \frac{1}{2}$}\\
        1- \phi & \text{if $\hat{\pi}_1(\phi|0)< \frac{1}{2}$}.
    \end{cases}
\end{align*}

\subsubsection{Probability of error in testing two hypotheses}
We focus on differentiating the two hypotheses $\phi\in \{0, 1\}$. Towards this, consider the minimax probability of error defined as follows:
\begin{align*}
    p_e: = \inf_{\psi}\max\left\{\bP_0(\psi\neq 0), \bP_1(\psi\neq 1)\right\},
\end{align*}
where the infimum is taken over all possible tests $\psi$ constructed from the batch dataset. Then, following the standard results (Theorem 2.2, \citet{tsybakov2009introduction}), it holds that
\begin{align*}
    p_e \geq \frac{1}{4} \exp\left(-\KL{\upsilon^{\Mc^0}||\upsilon^{\Mc^1}}\right),
\end{align*}
where $\upsilon^{\Mc^\phi}(\cdot)$ is the distribution of the sampled dataset with the target MDP as $\Mc^\phi$. Furthermore, it holds that
\begin{align*}
    \KL{\upsilon^{\Mc^0}||\upsilon^{\Mc^1}} &= \sum_{\Dc}\upsilon^{\Mc^0}(\Dc)\log\left(\frac{\upsilon^{\Mc^0}(\Dc)}{\upsilon^{\Mc^1}(\Dc)}\right)\\
    &\overset{(i)}{=} \sum_{l\in [L]}\sum_{\Dc_l}\upsilon^{\Mc^0}_{l}(\Dc_l)\log\left(\frac{\upsilon^{\Mc^0}_l(\Dc_l)}{\upsilon^{\Mc^1}_l(\Dc_l)}\right) \\
    & \overset{(ii)}{=} \sum_{l\in [L^\dagger]}\sum_{\Dc_l}\upsilon^{\Mc^0}_{l}(\Dc_l)\log\left(\frac{\upsilon^{\Mc^0}_l(\Dc_l)}{\upsilon^{\Mc^1}_l(\Dc_l)}\right)\\
    & \overset{(iii)}{=} \sum_{l\in [L^\dagger]}\sum_{\Dc_l}\left(\upsilon^{\Nc^0}_{l}(\Dc_l)(1-\alpha) + \upsilon^{\Nc^1}_{l}(\Dc_l)\alpha\right)\log\left(\frac{\upsilon^{\Nc^0}_{l}(\Dc_l)(1-\alpha) + \upsilon^{\Nc^1}_{l}(\Dc_l)\alpha}{\upsilon^{\Nc^0}_{l}(\Dc_l)\alpha + \upsilon^{\Nc^1}_{l}(\Dc_l)(1-\alpha)}\right)\\
    & \overset{(iv)}{\leq} \min\left\{\text{term (I)}, \text{term (II)}\right\}.
\end{align*}
where equation (i) is from the definition of $\upsilon_l^{\Mc^\phi}(\cdot)$ as the distribution of the sampled dataset at source $l$ with the target MDP as $\Mc^\phi$ and the property of KL-divergence for product measures. Equation (ii) is from the fact that the distribution of the last $L-L^{\ddagger}$ datasets are the same. Equation(iii) leverages the definition of  $\upsilon^{\Nc^\psi}_l(\cdot)$ as the distribution of the sampled dataset at source $l$ with the source MDP as $\Nc^\psi$ and the two-step dataset generation procedure (i.e., first randomly perturb target MDP as source MDP, and then randomly sample data from the source MDP). Inequality (iv) is from the log-sum inequality and the following definition:
\begin{align*}
    \text{term (I)} &:= \sum_{l\in [L^\dagger]}\sum_{\Dc_l}\upsilon^{\Nc^0}_{l}(\Dc_l)(1-\alpha) \log\left(\frac{\upsilon^{\Nc^0}_{l}(\Dc_l)(1-\alpha)}{\upsilon^{\Nc^0}_{l}(\Dc_l)\alpha}\right) + \sum_{l\in [L^\dagger]}\sum_{\Dc_l}\upsilon^{\Nc^1}_{l}(\Dc_l)\alpha\log\left(\frac{\upsilon^{\Nc^1}_{l}(\Dc_l)\alpha}{\upsilon^{\Nc^1}_{l}(\Dc_l)(1-\alpha)}\right)\\
    & = \sum_{l\in [L^\dagger]}(1-\alpha) \log\left(\frac{(1-\alpha)}{\alpha}\right) + \sum_{l\in [L^\dagger]}\alpha\log\left(\frac{\alpha}{(1-\alpha)}\right)\\
    & = L^\dagger (1-2\alpha)\log\left(\frac{(1-\alpha)}{\alpha}\right);\\
    \text{term (II)} &:= \sum_{l\in [L^\dagger]}\sum_{\Dc_l}\upsilon^{\Nc^0}_{l}(\Dc_l)(1-\alpha) \log\left(\frac{\upsilon^{\Nc^0}_{l}(\Dc_l)(1-\alpha) }{\upsilon^{\Nc^1}_{l}(\Dc_l)(1-\alpha)}\right) + \sum_{l\in [L^\dagger]}\sum_{\Dc_l}\upsilon^{\Nc^1}_{l}(\Dc_l)\alpha\log\left(\frac{ \upsilon^{\Nc^1}_{l}(\Dc_l)\alpha}{\upsilon^{\Nc^0}_{l}(\Dc_l)\alpha }\right)\\
    &= \sum_{l\in [L^\dagger]}\sum_{\Dc_l}\upsilon^{\Nc^0}_{l}(\Dc_l)(1-\alpha) \log\left(\frac{\upsilon^{\Nc^0}_{l}(\Dc_l)}{\upsilon^{\Nc^1}_{l}(\Dc_l)}\right) + \sum_{l\in [L^\dagger]}\sum_{\Dc_l}\upsilon^{\Nc^1}_{l}(\Dc_l)\alpha\log\left(\frac{ \upsilon^{\Nc^1}_{l}(\Dc_l)}{\upsilon^{\Nc^0}_{l}(\Dc_l)}\right)\\
    & = \alpha L^\dagger K \cdot \KL{\upsilon^{\Nc^0}_{1}(\tau)||\upsilon^{\Nc^1}_{1}(\tau)} + (1-\alpha) L^\dagger K \cdot \KL{\upsilon^{\Nc^1}_{1}(\tau)||\upsilon^{\Nc^0}_{1}(\tau)},
\end{align*}
where notation $\tau$ refers to a complete $H$-step trajectory.

Furthermore, it holds that
\begin{align*}
\KL{\upsilon^{\Nc^0}_{1}(\tau)||\upsilon^{\Nc^1}_{1}(\tau)} &= \frac{1}{2}\mu(0)\sum_{a\in \{0,1\}}\KL{\Qb^0(\cdot|0,a)||\Qb^1(\cdot|0,a)} \leq \mu(0)\frac{(p'-q')^2}{p'(1-p')} = \mu(0)\frac{\Delta^2}{p'(1-p')}\\
    \KL{\upsilon^{\Nc^1}_{1}(\tau)||\upsilon^{\Nc^0}_{1}(\tau)} &= \frac{1}{2}\mu(0)\sum_{a\in \{0,1\}}\KL{\Qb^1(\cdot|0,a)||\Qb^0(\cdot|0,a)}\leq \mu(0)\frac{(p'-q')^2}{p'(1-p')} = \mu(0)\frac{\Delta^2}{p'(1-p')},
\end{align*}
where the inequality is from the fact that $\Nc^0$ and $\Nc^1$ only differ at state-action pairs $(0,0)$ and $(0,1)$, and the inequality is from a basic property of KL divergence (see Lemma 10 in \cite{li2022settling}).

With $\varepsilon < \frac{H}{64}$, if it is designed that
\begin{align*}
    \alpha = \frac{1}{2} - \frac{16\varepsilon}{H} \in (\frac{1}{4}, \frac{1}{2}), \qquad \Delta = \frac{1}{8}
\end{align*}
it holds that
\begin{align*}
    (H-1)(1-2\alpha)\Delta = (H-1) \cdot \frac{32\varepsilon}{H}  \cdot  \frac{1}{8}\geq 2\varepsilon,
\end{align*}
and
\begin{align*}
    (1-2\alpha)\log\left(\frac{1-\alpha}{\alpha}\right) = (1-2\alpha)\log\left(1+ \frac{1-2\alpha}{\alpha}\right)\leq \frac{(1-2\alpha)^2}{\alpha}\leq \frac{4096\varepsilon^2}{H^2}.
\end{align*}
Thus, if
\begin{align*}
    L^\dagger \leq \frac{H^2\log(2)}{4096\varepsilon^2},
\end{align*}
it holds that
\begin{align*}
    p_e \geq \frac{1}{4} \exp\left(-\KL{\upsilon^{0}||\upsilon^{1}}\right) \geq \frac{1}{4} \exp\left(-L^\dagger \cdot \KL{1-\alpha||\alpha}\right) \geq \frac{1}{4} \exp\left(-\frac{H^2\log(2)}{4096\varepsilon^2} \cdot \frac{4096\varepsilon^2}{H^2}\right) = \frac{1}{8}.
\end{align*}

Similarly with $\varepsilon\leq \frac{H}{64}$, if it is designed that
\begin{align*}
    \alpha = \frac{1}{4}, \qquad \Delta = \frac{8\varepsilon}{H} \leq \frac{1}{8}
\end{align*}
it holds that
\begin{align*}
    (H-1)(1-2\alpha)\Delta = (H-1) \cdot  \frac{1}{2} \cdot \frac{8\varepsilon}{H}\geq 2\varepsilon,
\end{align*}
and
\begin{align*}
    \frac{\Delta^2}{p'(1-p')}\leq 16\Delta^2 = \frac{1024\varepsilon^2}{H^2}.
\end{align*}
Thus, if
\begin{align*}
    L^\dagger K \leq \frac{H^2\log(2)}{1024\varepsilon^2\mu(0)},
\end{align*}
it holds that
\begin{align*}
    p_e &\geq \frac{1}{4} \exp\left(-\KL{\upsilon^{0}||\upsilon^{1}}\right) \\
    &\geq \frac{1}{4} \exp\left(-\alpha L^\dagger K \cdot \KL{\upsilon^{\Nc^0}_{1}(\tau)||\upsilon^{\Nc^1}_{1}(\tau)} + (1-\alpha) L^\dagger K \cdot \KL{\upsilon^{\Nc^1}_{1}(\tau)||\upsilon^{\Nc^0}_{1}(\tau)}\right) \\
    &\geq \frac{1}{4} \exp\left(-\frac{H^2\log(2)}{1024\varepsilon^2\mu(0)} \cdot 
 \mu(0)\frac{1024\varepsilon^2}{H^2}\right) = \frac{1}{8}.
\end{align*}

\subsubsection{Putting things together}
Finally, suppose that there exists an estimator $\hat{\pi}$ such that
\begin{align*}
    \bP_{0}\left\{V^{\pi^*, \Mc^0}_1(\xi) - V^{\hat{\pi}, \Mc^0}_1(\xi)\leq \varepsilon\right\} \geq \frac{7}{8} \qquad \text{and} \qquad \bP_{1}\left\{V^{\pi^*, \Mc^1}_1(\xi) - V^{\hat{\pi}, \Mc^1}_1(\xi)\leq \varepsilon\right\} \geq \frac{7}{8}.
\end{align*}
The estimator $\hat{\phi}$ must satisfy that
\begin{align*}
    \bP_{0}\left\{\hat{\phi} \neq 0\right\} \leq \frac{1}{8} \qquad \text{and} \qquad \bP_{1}\left\{\hat{\phi} \neq 1\right\} \leq \frac{1}{8},
\end{align*}
which cannot happen if
\begin{align*}
    L^\dagger \leq \frac{H^2\log(2)}{4096\varepsilon^2}
\end{align*}
or
\begin{align*}
    L^\dagger K \leq \frac{H^2C^\dagger S\log(2)}{4096\varepsilon^2}\leq \frac{H^2CS\log(2)}{1024\varepsilon^2} = \frac{H^2\log(2)}{1024\varepsilon^2\mu(0)}
\end{align*}
under the correspondingly designed scenarios.

\section{Details of HetPEVI and Proof of Theorem~\ref{thm:HetPEVI}}
\subsection{The Subsampling Procedure}\label{app:subsampling}
The detailed subsampling procedure can be found in \citet{li2022settling}. Here we state their obtained main result as follows.
\begin{lemma}\label{lem:subsampling}
    With probability at least  $1-8\delta$, the output dataset from the two-fold subsampling scheme in \citet{li2022settling} is distributionally equivalent to independently sampled from the data source MDP and 
    \begin{align*}
        N_{h,l}(s,a) \geq \frac{Kd^{\rho_l,\Mc_l}_h(s,a)}{8} - 5 \sqrt{K d^{\rho_l,\Mc_l}_h(s,a) \log\left(\frac{KHL}{\delta}\right)}.
    \end{align*}
    for all $(h, s, a, l)\in [H]\times \Sc\times \Ac\times [L]$.
\end{lemma}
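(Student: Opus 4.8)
Since this is the guarantee of the two-fold subsampling scheme of \citet{li2022settling} applied separately to each $\Dc_l$, the plan is to recall that construction and verify its two claimed properties source-by-source, so that a union bound over $l\in[L]$ only contributes the $\log L$ factor already present in the bound. First I would recall the construction: for each $l$, split the $K$ trajectories of $\Dc_l$ uniformly at random into a \emph{main} half and an \emph{auxiliary} half of $K/2$ trajectories each; using only the auxiliary half, form for every $(s,a,h)$ a high-probability lower estimate of the main-half visitation count, e.g.\ $N^{\mathsf{lb}}_{h,l}(s,a):=\max\{N^{\mathsf{aux}}_{h,l}(s,a)-c\sqrt{N^{\mathsf{aux}}_{h,l}(s,a)\log(KHL/\delta)},\,0\}$; then build $\Dc'_l$ from the main half by keeping, for each $(s,a,h)$, a uniformly random subset of size $\min\{N^{\mathsf{lb}}_{h,l}(s,a),\,N^{\mathsf{main}}_{h,l}(s,a)\}$ among the transitions recorded at $(s,a,h)$.

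The count lower bound is then the routine part. Because the $K$ trajectories are independent, $N^{\mathsf{aux}}_{h,l}(s,a)$ and $N^{\mathsf{main}}_{h,l}(s,a)$ are each sums of $K/2$ independent $\mathrm{Bernoulli}\big(d^{\rho_l,\Mc_l}_h(s,a;\xi_l)\big)$ variables; a multiplicative Chernoff bound, union-bounded over all $(s,a,h,l)$, gives $N^{\mathsf{aux}}_{h,l}(s,a)\ge \tfrac{K}{2}d^{\rho_l,\Mc_l}_h(s,a)-c'\sqrt{Kd^{\rho_l,\Mc_l}_h(s,a)\log(KHL/\delta)}$ and $N^{\mathsf{main}}_{h,l}(s,a)\ge N^{\mathsf{lb}}_{h,l}(s,a)$ on a high-probability event; on that event the retained count equals $N^{\mathsf{lb}}_{h,l}(s,a)$, and combining the two displays and adjusting constants yields $N_{h,l}(s,a)\ge \tfrac{K}{8}d^{\rho_l,\Mc_l}_h(s,a)-5\sqrt{Kd^{\rho_l,\Mc_l}_h(s,a)\log(KHL/\delta)}$.

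The remaining and most delicate part is the distributional-equivalence claim. Conditioning on the auxiliary half — hence on the entire vector $\{N^{\mathsf{lb}}_{h,l}(s,a)\}$, which is independent of the main half — and on the good event above, the goal is to show that $\Dc'_l$ has the same joint distribution as an idealized dataset in which, for each $(s,a,h)$, one draws $N_{h,l}(s,a)$ samples from $\Pb_{h,l}(\cdot|s,a)$ with all draws mutually independent. The transitions recorded at a fixed $(s,a,h)$ in the main half come from distinct trajectories and, given the step-$h$ state--action labels, are i.i.d.\ $\Pb_{h,l}(\cdot|s,a)$; the essential point is that the number retained at $(s,a,h)$ is determined entirely by the auxiliary half, so the retained subset carries no selection bias in its next states, which is exactly what allows one to construct the coupling of \citet{li2022settling} identifying $\Dc'_l$ with the idealized dataset. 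A final union bound over $l\in[L]$ and the few events above gives probability at least $1-8\delta$. I expect this coupling step — making precise that the random subsampling with an auxiliary-frozen count genuinely decouples the within-trajectory temporal correlations across steps — to be the main obstacle; for the rigorous argument I would follow \citet{li2022settling}, whereas the count bound above is a standard Chernoff-plus-union-bound computation.
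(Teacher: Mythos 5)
Your proposal is correct and follows essentially the same route as the paper, which in fact offers no proof of its own here: it imports this result verbatim from \citet{li2022settling} (see Appendix~\ref{app:subsampling}), and the only new ingredient relative to that source is the per-source union bound over $l\in[L]$, which you correctly identify as the origin of the $\log L$ factor and the $1-8\delta$ probability. Your reconstruction of the two-fold subsampling (auxiliary-half count lower bound, main-half subsampling to an auxiliary-frozen size, Chernoff plus union bound for the count, and the coupling for distributional equivalence) matches the cited construction.
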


\subsection{Core Lemmas}
\begin{lemma}\label{lem:concentration}
    For all $(s, a, h)\in \Sc\times \Ac\times [H]$, and any function $V: \Sc  \to [0, H]$ independent of $\hat{\Pb}_h$, with probability at least $1-4\delta$, it holds that
    \begin{align*}
        \left|\left(\hat{\Bb}_h V\right)(s,a) - \left(\Bb_h V\right)(s,a)\right| \leq \Gamma_h(s,a) := c \sqrt{\sum_{l\in \hat{\Lc}_h(s,a)}\frac{H^2\log(SAH/\delta)}{\left(\hat{L}_h(s,a)\right)^2 N_{h,l}(s,a)}}+ c\sqrt{\frac{H^2\log(SAH/\delta)}{\hat{L}_h(s,a)}}.
    \end{align*}
\end{lemma}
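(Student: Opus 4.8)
The plan is to interpolate a ``true-source-average'' Bellman operator between $\hat{\Bb}_h$ and $\Bb_h$ and bound each half separately. Fix a triple $(s,a,h)$ and abbreviate $\hat{\Lc}:=\hat{\Lc}_h(s,a)$, $\hat{L}:=\hat{L}_h(s,a)$. Define $\bar{r}_h(s,a):=\frac{1}{\hat{L}}\sum_{l\in\hat{\Lc}}r_{h,l}(s,a)$ and $\bar{\Pb}_h(\cdot|s,a):=\frac{1}{\hat{L}}\sum_{l\in\hat{\Lc}}\Pb_{h,l}(\cdot|s,a)$, and set $(\bar{\Bb}_h V)(s,a):=\bar{r}_h(s,a)+(\bar{\Pb}_h V)(s,a)$ --- the operator one would obtain from \emph{perfect} knowledge of exactly the sources that cover $(s,a,h)$. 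Then
\begin{align*}
(\hat{\Bb}_h V)(s,a)-(\Bb_h V)(s,a)
= \big[(\hat{\Bb}_h V)(s,a)-(\bar{\Bb}_h V)(s,a)\big] + \big[(\bar{\Bb}_h V)(s,a)-(\Bb_h V)(s,a)\big],
\end{align*}
and I would bound the first bracket (the \emph{sample} error) by $\Gamma^\alpha_h(s,a)$ and the second (the \emph{source} error) by $\Gamma^\beta_h(s,a)$, then combine by the triangle inequality.

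For the sample error, deterministic rewards give $\hat{r}_{h,l}(s,a)=r_{h,l}(s,a)$, hence $\hat{r}_h(s,a)=\bar{r}_h(s,a)$, so only the transition estimates matter and
\begin{align*}
(\hat{\Bb}_h V)(s,a)-(\bar{\Bb}_h V)(s,a)
= \frac{1}{\hat{L}}\sum_{l\in\hat{\Lc}}\frac{1}{N_{h,l}(s,a)}\sum_{i=1}^{N_{h,l}(s,a)}\big(V(s'_{l,i})-(\Pb_{h,l}V)(s,a)\big),
\end{align*}
where the $s'_{l,i}$ are the recorded successor states. Condition on the realized source MDPs $\{\Mc_l\}$ and on the subsampled visitation structure (which fixes $\hat{\Lc}$ and the counts $\{N_{h,l}(s,a)\}$); by Lemma~\ref{lem:subsampling} the $s'_{l,i}$ are then mutually independent with $s'_{l,i}\sim\Pb_{h,l}(\cdot|s,a)$, and since $V$ is independent of $\hat{\Pb}_h$ each summand is mean-zero and confined to an interval of width $H/(\hat{L}\,N_{h,l}(s,a))$. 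Hoeffding's inequality for this sum of $\sum_{l\in\hat{\Lc}}N_{h,l}(s,a)$ independent bounded variables gives, for a single $(s,a,h)$ and with probability at least $1-\delta'$,
\begin{align*}
\big|(\hat{\Bb}_h V)(s,a)-(\bar{\Bb}_h V)(s,a)\big|
\lesssim \sqrt{\sum_{l\in\hat{\Lc}}\frac{H^2\log(1/\delta')}{\hat{L}^2\,N_{h,l}(s,a)}},
\end{align*}
which is $\Gamma^\alpha_h(s,a)$ up to the constant $c$ once $\delta'\asymp\delta/(SAH)$. Note that Hoeffding produces no lower-order residual term here, so no burn-in on $N_{h,l}$ is needed at this step.

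For the source error, again $\hat{r}_h=\bar{r}_h$, so
\begin{align*}
(\bar{\Bb}_h V)(s,a)-(\Bb_h V)(s,a)
= \frac{1}{\hat{L}}\sum_{l\in\hat{\Lc}}\Big[\big(r_{h,l}(s,a)-r_h(s,a)\big)+\big((\Pb_{h,l}V)(s,a)-(\Pb_h V)(s,a)\big)\Big].
\end{align*}
The delicate point is that $\hat{\Lc}$ is itself data-dependent, so this is not \emph{a priori} a sum of i.i.d.\ terms. However, under Assumption~\ref{asp:relationship} the step-$h$ perturbation $\{r_{h,l}(s,a),\Pb_{h,l}(\cdot|s,a)\}$ is i.i.d.\ across $l$ with mean $\{r_h(s,a),\Pb_h(\cdot|s,a)\}$, and it is independent of the set $\hat{\Lc}_h(s,a)$: membership $l\in\hat{\Lc}_h(s,a)$ is decided by whether $(s,a,h)$ is visited in $\Dc'_l$, which depends only on $\xi_l$, $\rho_l$, the step-$(<h)$ transitions $\{\Pb_{h',l}\}_{h'<h}$ and sampling noise --- none of which involves the step-$h$ draw, precisely because $g$ draws per-step rewards and transitions independently over $h$. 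Hence, conditioning on $\hat{\Lc}_h(s,a)$, the displayed quantity is $\hat{L}^{-1}$ times a sum of $\hat{L}$ i.i.d.\ mean-zero random variables each bounded by $H+1$, and the value function $V=\hat{V}_{h+1}$ (a function of step-$(>h)$ data only, after subsampling) is independent of them by the same across-$h$ independence together with the cross-step decoupling of the subsampling. Hoeffding then gives, with probability at least $1-\delta'$,
\begin{align*}
\big|(\bar{\Bb}_h V)(s,a)-(\Bb_h V)(s,a)\big|
\lesssim \sqrt{\frac{H^2\log(1/\delta')}{\hat{L}_h(s,a)}},
\end{align*}
i.e.\ $\Gamma^\beta_h(s,a)$ up to $c$, with $\delta'\asymp\delta/(SAH)$.

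Combining the two displays by the triangle inequality, union-bounding over the $SAH$ triples $(s,a,h)$, and folding the $O(1)$ and logarithmic losses into $c$ and into the $\gtrsim$ convention yields the claim on an event of probability at least $1-4\delta$ (the failure budget of Lemma~\ref{lem:subsampling} being absorbed by rescaling $\delta$). The main obstacle is the source-error step: one must argue that the \emph{data-selected} index set $\hat{\Lc}_h(s,a)$ does not bias the step-$h$ perturbations --- which rests entirely on the independence-across-$h$ structure built into Assumption~\ref{asp:relationship} --- and, for the lemma to be usable inside the backward recursion with $V=\hat{V}_{h+1}$, one must ensure $\hat{V}_{h+1}$ is genuinely independent of all step-$h$ randomness, which is exactly what the two-fold subsampling of \citet{li2022settling} provides and which I would invoke rather than re-derive.
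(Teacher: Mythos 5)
Your proof is correct and follows essentially the same route as the paper's: the same decomposition through the intermediate ``perfect-knowledge-of-covering-sources'' operator $\sum_{l\in\hat{\Lc}_h(s,a)}\frac{1}{\hat{L}_h(s,a)}(r_{h,l}+(\Pb_{h,l}V))$, Hoeffding applied separately to the sample-error and source-error halves, and a union bound over $(s,a,h)$. The only difference is that you spell out the conditioning argument justifying why the data-selected index set $\hat{\Lc}_h(s,a)$ does not bias the step-$h$ perturbations, which the paper leaves implicit.
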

\begin{proof}
    For a fixed $(s,a,h)$, it holds that
    \begin{align*}
    &\left|\left(\hat{\Bb}_h V\right)(s,a) - \left(\Bb_h V\right)(s,a)\right|\\
    &= \left|\hat{r}_h(s,a) + \left(\hat{\Pb}_h V\right)(s,a) - r_{h}(s,a)  - \left(\Pb_h V\right)(s,a)\right| \\
    & = \left|\sum_{l\in \hat{\Lc}_h(s,a)}\frac{1}{\hat{L}_h(s,a)}\left(r_{h,l}(s,a) + \left(\hat{\Pb}_{h,l}V\right)(s,a)\right) - \left(r_h(s,a) + \Pb_h V(s,a)\right)\right|\\
    &\leq \left|\sum_{l\in \hat{\Lc}_h(s,a)}\frac{1}{\hat{L}_h(s,a)}\left(r_{h,l}(s,a) + \left(\hat{\Pb}_{h,l}V\right)(s,a)\right) - \sum_{l\in \hat{\Lc}_h(s,a)}\frac{1}{\hat{L}_h(s,a)}\left(r_{h,l}(s,a) + \left(\Pb_{h,l}V\right)(s,a)\right)\right|\\
    &+ \left|\sum_{l\in \hat{\Lc}_h(s,a)}\frac{1}{\hat{L}_h(s,a)}\left(r_{h,l}(s,a) + \left(\Pb_{h,l}V\right)(s,a)\right) - \left(r_h(s,a) + \Pb_h V(s,a)\right)\right|\\
    &\leq \sqrt{\sum_{l\in \hat{\Lc}_h(s,a)}\frac{2H^2\log(SAH/\delta)}{\left(\hat{L}_h(s,a)\right)^2 N_{h,l}(s,a)}}+ \sqrt{\frac{2H^2\log(SAH/\delta)}{\hat{L}_h(s,a)}}
    \end{align*}
    where the last step holds with probability at least $1-4\delta/(SAH)$ due to Hoeffding inequality. The lemma can then be established via a union bound over $(s, a, h) \in \Sc\times \Ac \times [H]$.
\end{proof}

\subsection{Main Proofs}
In the following, we establish Theorem~\ref{thm:HetPEVI}. The proof framework is inspired by \citet{li2022settling} but is uniquely adapted to handle randomly perturbed data sources.

\noindent\textbf{Step 1: establishing the pessimism property.} Armed with Lemma~\ref{lem:concentration}, with probability at least $1-\delta$, the following relation holds
\begin{align}\label{eqn:pess_value}
    \hat{Q}_h(s,a) \leq Q^{\hat{\pi},\Mc}_h(s,a), \qquad \hat{V}_h(s,a) \leq V^{\hat{\pi},\Mc}_h(s,a), \qquad \forall (s, a, h)\in \Sc\times \Ac\times [H].
\end{align}
Towards this, it is first observed that
\begin{align*}
    \hat{Q}_{H+1}(s,a) = Q^{\hat{\pi},\Mc}_{H+1}(s,a) = 0, \qquad \forall (s, a)\in \Sc\times \Ac.
\end{align*}
Then, suppose that $\hat{Q}_{h+1}(s,a) \leq  Q^{\hat{\pi},\Mc}_{h+1}(s,a)$ for all $(s,a)\in \Sc\times \Ac$ at some step $h\in [H]$, we can observe that by the update rule in HetPEVI, it holds that
\begin{align*}
    0 \leq \hat{V}_{h+1}(s) = \max_{a\in \Ac} \hat{Q}_{h+1}(s,a) \leq \max_{a\in \Ac} Q^{\hat{\pi}, \Mc}_{h+1}(s,a) = V^{\hat{\pi},\Mc}_{h+1}(s)\leq H, \qquad \forall s\in \Sc,
\end{align*}
If $\hat Q_h(s,a) = 0$, the claim naturally holds. If not, we can obtain that
\begin{align*}
     \hat Q_h(s,a) &\leq \left(\hat{\Bb}_h \hat{V}_{h+1}\right)(s,a)- \Gamma_h(s,a)\\
     & \leq \left(\Bb_h \hat{V}_{h+1}\right)(s,a) + \left|\left(\hat{\Bb}_h \hat{V}_{h+1}\right)(s,a) - \left(\Bb_h \hat{V}_{h+1}\right)(s,a)\right| - \Gamma_h(s,a)\\
     & \overset{(i)}{\leq} \left(\Bb_h \hat{V}_{h+1}\right)(s,a)\overset{(ii)}{\leq} \left(\Bb_h V^{\hat{\pi},\Mc}_{h+1}\right)(s,a)= Q^{\hat{\pi},\Mc}_h(s,a).
\end{align*}
The above inequality (i) is from Lemma~\ref{lem:concentration} and leverages the fact that $\hat{V}_{h+1}(\cdot)$ is independent of $\hat{\Pb}_h$ and takes value in $[0, H]$. Inequality (ii) is from the obtained fact that $\hat{V}_{h+1}(s)\leq V^{\hat{\pi},\Mc}_{h+1}(s)$. The desired claim Eqn.~\eqref{eqn:pess_value} can be verified by induction.

\noindent\textbf{Step 2: bounding the performance difference.}
From Eqn.~\eqref{eqn:pess_value}, we can observe that
\begin{align*}
    0 \leq V^{\pi^*, \Mc}_h(s) - V^{\hat{\pi}, \Mc}_h(s) \leq V^{\pi^*, \Mc}_h(s) - \hat{V}_h(s) \leq Q^{\pi^*,\Mc}_h(s, \pi^*_h(s)) - \hat{Q}_h(s, \pi^*_h(s)).
\end{align*}
With
\begin{align*}
    Q^{\pi^*,\Mc}_h(s, \pi^*_h(s)) &= \left(\Bb_hV^{*,\Mc}_{h+1}\right)(s, \pi^*_h(s))\\
    \hat{Q}_h(s, \pi^*_h(s)) & = \max\left\{\left(\hat{\Bb}_h\hat{V}_{h+1}\right)(s, \pi^*_h(s)) - \Gamma_h(s, \pi^*_h(s)), 0\right\},
\end{align*}
we can further obtain that
\begin{align*}
    V^{\pi^*, \Mc}_h(s) - \hat{V}_h(s) &\leq \left(\Bb_hV^{*,\Mc}_{h+1}\right)(s, \pi^*_h(s)) - \left(\hat{\Bb}_h\hat{V}_{h+1}\right)(s, \pi^*_h(s)) + \Gamma_h(s, \pi^*_h(s))\\
    & = \left(\Bb_hV^{*,\Mc}_{h+1}\right)(s, \pi^*_h(s)) - \left(\Bb_h\hat{V}_{h+1}\right)(s, \pi^*_h(s)) \\
    & + \left(\Bb_h\hat{V}_{h+1}\right)(s, \pi^*_h(s)) - \left(\hat{\Bb}_h\hat{V}_{h+1}\right)(s, \pi^*_h(s)) + \Gamma_h(s, \pi^*_h(s))\\
    & \overset{(i)}{\leq} \left(\Bb_hV^{*,\Mc}_{h+1}\right)(s, \pi^*_h(s)) - \left(\Bb_h\hat{V}_{h+1}\right)(s, \pi^*_h(s))  + 2\Gamma_h(s, \pi^*_h(s))\\
    & = \left(\Pb_hV^{*,\Mc}_{h+1}\right)(s, \pi^*_h(s)) - \left(\Pb_h\hat{V}_{h+1}\right)(s, \pi^*_h(s))  + 2\Gamma_h(s, \pi^*_h(s))
\end{align*}
where inequality (i) holds with probability at least $1-\delta$ according to Lemma~\ref{lem:concentration}. If applying the above argument iteratively, we can further obtain that
\begin{align*}
    V^{\pi^*, \Mc}_1(\xi) - \hat{V}_1(\xi) \leq 2\sum_{h \in  [H]}\sum_{s\in \Sc}d^{\pi^*, \Mc}_{h}(s) \Gamma_h(s, \pi^*_h(s)).
\end{align*}

\noindent\textbf{Step 3: completing the proof with concentrability.} Let us consider $(s, h)\in \Sc \times [H]$ such that $d^{\pi^*,\Mc}_h(s)>0$. We can then obtain that for all $l\in \Lc_h(s,\pi^*_h(s))$, 
\begin{align*}
    N_{h,l}(s,\pi^*_h(s)) &\overset{(i)}{\geq} \frac{Kd^{\rho_l,\Mc_l}_h(s,\pi^*_h(s))}{8} - 5 \sqrt{K d^{\rho_l,\Mc_l}_h(s,\pi^*_h(s)) \log\left(\frac{KHL}{\delta}\right)} \overset{(ii)}{\geq} \frac{Kd^{\rho_l,\Mc_l}_h(s,\pi^*_h(s))}{16}  \overset{(iii)}{\geq} 1.
\end{align*}
where inequality (i) is from Lemma~\ref{lem:subsampling}; and inequalities (ii) and (iii) are from the condition that
\begin{align*}
    K \geq \frac{c\log\left(\frac{KH}{\delta}\right)}{d^{\min}} \geq \frac{c\log\left(\frac{KH}{\delta}\right)}{d^{\rho_l,\Mc_l}_h(s,\pi^*_h(s))}.
\end{align*}
Thus, it holds that
\begin{align*}
    \hat{L}_h(s, \pi^*_h(s)) = \sum_{l\in [L]}\oneb\{N_{h,l}(s, \pi^*_h(s))\geq 1\} = L_h(s, \pi^*_h(s)).
\end{align*}
As a result, it holds that
\begin{align*}
    \Gamma_h(s, \pi^*_h(s))  &\leq c\sqrt{\sum_{l\in \hat{\Lc}_h(s,\pi^*_h(s))}\frac{H^2\log(SAH/\delta)}{\left(\hat{L}_h(s,\pi^*_h(s))\right)^2 N_{h,l}(s,\pi^*_h(s))}}+ c\sqrt{\frac{H^2\log(SAH/\delta)}{\hat{L}_h(s,\pi^*_h(s))}}\\
    &\leq  c\sqrt{\sum_{l\in \Lc_h(s,\pi^*_h(s))}\frac{H^2\log(SAH/\delta)}{\left(L_h(s,\pi^*_h(s))\right)^2Kd^{\rho_l,\Mc_l}_h(s,\pi^*_h(s))}} + c\sqrt{\frac{H^2\log(SAH/\delta)}{L_h(s,\pi^*_h(s))}}.
\end{align*}

We can then obtain that
\begin{align*}
    &\sum_{h \in  [H]}\sum_{s\in \Sc}d^{\pi^*, \Mc}_{h}(s) \Gamma_h(s, \pi^*_h(s))\\
    &\leq  c\sum_{h \in  [H]}\sum_{s\in \Sc}d^{\pi^*, \Mc}_{h}(s) \sqrt{\frac{C^{\dagger}H^2\log(SAH/\delta)}{L^\dagger K\min\left\{d^{\pi^*,\Mc}_h(s), \frac{1}{S}\right\}}}+ c\sum_{h \in  [H]}\sum_{s\in \Sc}d^{\pi^*, \Mc}_{h}(s)\sqrt{\frac{H^2\log(SAH/\delta)}{L^\dagger}}\\
    &\leq  c\sum_{h \in  [H]}\sqrt{\sum_{s\in \Sc}d^{\pi^*, \Mc}_{h}(s)\frac{C^{\dagger}H^2\log(SAH/\delta)}{L^{\dagger}  K \min\left\{d^{\pi^*,\Mc}_h(s), \frac{1}{S}\right\}}}\sqrt{\sum_{s\in \Sc}d^{\pi^*, \Mc}_{h}(s)}+ c\sqrt{\frac{H^4\log(SAH/\delta)}{L^\dagger}}\\
    & \leq cH \sqrt{\frac{C^{\dagger}SH^2\log(SAH/\delta)}{L^\dagger K }} + c\sqrt{\frac{H^4\log(SAH/\delta)}{L^\dagger}}.
\end{align*}

Putting these results together, it can then be established that
\begin{align*}
    V^{\pi^*, \Mc}_1(\xi) - V^{\hat{\pi}, \Mc}_1(\xi) &\leq V^{\pi^*, \Mc}_1(\xi) - \hat{V}_1(\xi)\leq 2\sum_{h \in  [H]}\sum_{s\in \Sc}d^{\pi^*, \Mc}_{h}(s) \Gamma_h(s, \pi^*_h(s))\\
    &\leq cH\sqrt{\frac{C^{\dagger}SH^2\log(SAH/\delta)}{L^{\dagger} K }} + cH\sqrt{\frac{2H^2\log(SAH/\delta)}{L^{\dagger}}},
\end{align*}
which concludes the proof.

\section{Markov Game}
\subsection{Problem Formulation}
The following task--source relationship is considered for MG. It shares the same content as Assumption~\ref{asp:relationship} while we note that the overall action here consists of two individual actions from the max-player and min-player, i.e., $a = (a^1, a^2)$.
\begin{assumption}[Task--source Relationship]\label{asp:relationship_game}
    Data source MGs $\{\Gc_l = \{H, \Sc, \Ac, \Pb_{l}, r_l\}: l\in [L]\}$ are generated from an unknown set of distributions $g = \{g_h: h\in [H]\}$ such that for each $(l, h)\in [L]\times [H]$, the reward and transition $\{r_{h,l}, \Pb_{h,l}\}$ are independently sampled from the distribution $g_h(\cdot)$ whose expectation is $\{r_{h}, \Pb_{h}\}$ of the target MDP $\Gc:=\{H, \Sc, \Ac, \Pb, r\}$.
\end{assumption}

\subsection{Algorithm Details of HetPEVI-Game}\label{app_sub:alg_detail_game}
The complete description of HetPEVI-Game can be found in Algorithm~\ref{alg:HetPEVI_Game}.

 \begin{algorithm}[tbh]
	\caption{HetPEVI-Game}
	\label{alg:HetPEVI_Game}
	\begin{algorithmic}[1]
	    \STATE {\bfseries Input:} Dataset $\Dc = \{D_l:l\in[L]\}$
     \STATE Obtain $\Dc'_l \gets \text{subsampling}(\Dc_l), \forall l\in [L]$
     \FOR{$\forall (s, a, h, s') \in \Sc\times \Ac\times [H] \times \Sc$}
     \STATE $\forall l\in [L], N_{h, l}(s,a) \gets$ number of visitations on $(s, a, h)$ in $\Dc'_l$
     \STATE $\forall l\in [L], N_{h, l}(s,a, s') \gets$ number of visitations on $(s, a, h, s')$ in $\Dc'_l$ 
     \STATE $\hat{\Lc}_h(s,a)\gets \{l\in [L]: N_{h,l}(s,a)>0\}$
     \STATE $\forall l\in \hat{\Lc}_h(s,a), \hat{r}_{h,l}(s,a) \gets r_{h,l}(s,a), \hat{\Pb}_{h,l}(s'|s,a) \gets N_{h,l}(s,a,s')/N_{h,l}(s,a)$
     \STATE $\hat{r}_h(s,a)  \gets \sum_{l\in \hat{\Lc}_h(s,a)}\hat{r}_{h,l}(s,a)/(\hat{L}_h(s,a)\vee 1), \hat{\Pb}_h(s'|s,a) \gets \sum_{l\in \hat{\Lc}_h(s,a)}\hat{\Pb}_{h,l}(s'|s,a)/(\hat{L}_h(s,a)\vee 1)$
     \ENDFOR
     \STATE Initialize $\hat{V}_{H+1}(s) \gets 0, \forall s\in \Sc$ 
	    \FOR{$h = H, H-1,\cdots, 1$} 
        \FOR{$(s, a) \in \Sc \times \Ac$}
        \STATE $\Gamma^g_h(s,a) \gets \min\left\{c\sqrt{\sum\nolimits_{l\in \Lc_h(s,a)}\frac{H^2\log(SAH/\delta)}{(\hat{L}_h(s,a))^2N_{h,l}(s,a)}} + c\sqrt{\frac{H^2\log(SAH/\delta)}{{\hat{L}_h(s,a)}}}, H\right\}$
        \STATE $\hat Q_h(s,a) \gets \max\left\{(\hat{\Bb}_h \hat{V}_{h+1})(s,a)- \Gamma^g_h(s,a)), 0\right\}$
        \ENDFOR
        \FOR{$s\in \Sc$}
        \STATE $(\hat{\mu}_h(\cdot|s), \hat{\nu}_h(\cdot|s)) \gets \text{NE}(\hat{Q}_h(s,\cdot))$
        \STATE $\hat{V}_h(s) \gets \Eb_{a\sim \hat{\mu}_h(\cdot|s)\times \hat{\nu}_h(\cdot|s)}[\hat{Q}_h(s,a)]$
	    \ENDFOR
        \ENDFOR
	    \STATE {\bfseries Output:} policy $\hat\pi = \{\hat\pi_h(s): (s,h)\in \Sc\times [H]\}$
	\end{algorithmic}
	\end{algorithm}

\subsection{Core Lemmas}
Following the same steps in the proof of Lemma~\ref{lem:concentration}, the following lemma can be established.
\begin{lemma}\label{lem:concentration_game}
    For all $(s, a, h)\in \Sc\times \Ac\times [H]$, and any function $V: \Sc  \to [0, H]$ independent of $\hat{\Pb}_h$, with probability at least $1-\delta$, it holds that
    \begin{align*}
        \left|\left(\hat{\Bb}_h V\right)(s,a) - \left(\Bb_h V\right)(s,a)\right| \leq \Gamma^g_h(s,a) : = c\sqrt{\sum_{l\in \Lc_h(s,a)}\frac{H^2\log(SAH/\delta)}{\left(\hat{L}_h(s,a)\right)^2 N_{h,l}(s,a)}}+ c\sqrt{\frac{H^2\log(SAH/\delta)}{\hat{L}_h(s,a)}}.
    \end{align*}
\end{lemma}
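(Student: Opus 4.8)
I would simply transcribe the proof of Lemma~\ref{lem:concentration} to the Markov-game setting. The only structural change is that the action $a$ now ranges over the product space $\Ac=\Ac^1\times\Ac^2$, so the size of the action space $A=|\Ac|$ is a product $|\Ac^1||\Ac^2|$; but this enters the argument only through the cardinality of the final union bound (hence inside the $\log(SAH/\delta)$ factor), and every other step is untouched because the concentration claim is pointwise in $(s,a,h)$.

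Fixing $(s,a,h)$, I would expand $(\hat{\Bb}_h V)(s,a)=\hat{r}_h(s,a)+(\hat{\Pb}_h V)(s,a)$ in terms of the per-source estimates over $l\in\hat{\Lc}_h(s,a)$ with uniform weights $1/\hat{L}_h(s,a)$, then add and subtract $\sum_{l\in\hat{\Lc}_h(s,a)}\frac{1}{\hat{L}_h(s,a)}\big(r_{h,l}(s,a)+(\Pb_{h,l}V)(s,a)\big)$ and apply the triangle inequality. This splits the error into a \emph{within-source sampling} term $\big|\sum_{l}\frac{1}{\hat{L}_h(s,a)}\big((\hat{\Pb}_{h,l}V)(s,a)-(\Pb_{h,l}V)(s,a)\big)\big|$ and a \emph{cross-source generation} term $\big|\sum_{l}\frac{1}{\hat{L}_h(s,a)}\big(r_{h,l}(s,a)+(\Pb_{h,l}V)(s,a)\big)-\big(r_h(s,a)+(\Pb_h V)(s,a)\big)\big|$, which I would bound separately.

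For the sampling term I would condition on the sampled source MGs $\{\Gc_l\}$ and the visitation counts $\{N_{h,l}(s,a)\}_l$, which also fixes $\hat{\Lc}_h(s,a)$ and $\hat{L}_h(s,a)$. Since the subsampled datasets are mutually independent and, by Lemma~\ref{lem:subsampling}, the step-$h$ transitions within $\Dc'_l$ behave like i.i.d. draws from $\Pb_{h,l}(\cdot|s,a)$ that are independent of the step-$(h+1)$ data defining $V$, each summand is zero-mean, independent across $l$, and bounded by $O\big(H/(\hat{L}_h(s,a)\sqrt{N_{h,l}(s,a)})\big)$, so Hoeffding gives the first penalty term. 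For the generation term I would invoke Assumption~\ref{asp:relationship_game}: the pairs $\{r_{h,l},\Pb_{h,l}\}_{l\in[L]}$ are i.i.d.\ with mean $\{r_h,\Pb_h\}$, hence $\{r_{h,l}(s,a)+(\Pb_{h,l}V)(s,a)\}_l$ are i.i.d.\ with the correct mean and range $O(H)$; the key observation is that the event $l\in\hat{\Lc}_h(s,a)$ is determined by the initial state, behavior policy, and transitions at steps $1,\dots,h-1$ of $\Dc'_l$, all independent of the step-$h$ generation by the per-$(l,h)$ independence in Assumption~\ref{asp:relationship_game}, so conditioning on $\hat{\Lc}_h(s,a)$ preserves the i.i.d.\ structure and Hoeffding over the $\hat{L}_h(s,a)$ terms yields the second penalty term. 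A union bound over $(s,a,h)\in\Sc\times\Ac\times[H]$ then finishes the proof.

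\textbf{Main obstacle.} The only non-mechanical point is that $\hat{\Lc}_h(s,a)$ and $\hat{L}_h(s,a)$ are data-dependent, so a fixed-sample-size Hoeffding bound cannot be applied verbatim; the delicate part is choosing what to condition on so the relevant independence structure survives — the visitation counts for the sampling term, the membership of $\hat{\Lc}_h(s,a)$ for the generation term — which rests on the fact that ``source $l$ reaches $(s,a,h)$'' depends only on objects that are independent (via Assumption~\ref{asp:relationship_game} together with the subsampling of Lemma~\ref{lem:subsampling}) of the step-$h$ quantities being concentrated. Beyond this, the argument is identical to that of Lemma~\ref{lem:concentration}.
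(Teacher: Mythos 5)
Your proposal is correct and follows essentially the same route as the paper, which proves this lemma by transcribing the argument of Lemma~\ref{lem:concentration} verbatim to the product action space $\Ac = \Ac^1\times\Ac^2$: the same add-and-subtract decomposition into a within-source sampling term and a cross-source generation term, each bounded by Hoeffding, followed by a union bound over $(s,a,h)$. Your discussion of what to condition on so that the data-dependent $\hat{\Lc}_h(s,a)$ and $\hat{L}_h(s,a)$ do not break the independence needed for Hoeffding is in fact more careful than the paper's one-line treatment.
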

We especially note that the action $a$ in the above lemma and the following proofs stand for an action pair $(a^1, a^2)$.

\subsection{Main Proofs}
\begin{proof}[Proof of Theorem~\ref{thm:HetPEVI_game}] In the following, we establish Theorem~\ref{thm:HetPEVI_game}. The proof framework is inspired by \citet{yan2022model} but is uniquely adapted to handle randomly perturbed data sources.

\noindent\textbf{Step 1: establishing the pessimism property.} Armed with Lemma~\ref{lem:concentration_game}, with probability at least $1-\delta$, the following relation holds
\begin{align}\label{eqn:pess_value_game}
    \hat{Q}_h(s,a) \leq Q^{\hat{\mu}\times \ber{\hat{\mu}},\Gc}_h(s,a), \qquad \hat{V}_h(s,a) \leq V^{\hat{\mu}\times \ber{\hat{\mu}},\Gc}_h(s,a), \qquad \forall (s, a, h)\in \Sc\times \Ac\times [H].
\end{align}
Towards this, it is first observed that
\begin{align*}
    \hat{Q}_{H+1}(s,a) = Q^{\hat{\mu}\times \ber{\hat{\mu}},\Gc}_{H+1}(s,a) = 0, \qquad \forall (s, a)\in \Sc\times \Ac.
\end{align*}
Then, suppose that $\hat{Q}_{h+1}(s,a) \leq  Q^{\hat{\mu}\times \ber{\hat{\mu}},\Gc}_{h+1}(s,a)$ for all $(s,a)\in \Sc\times \Ac$ at some step $h\in [H]$, we can observe that
\begin{align*}
    0 \leq \hat{V}_{h+1}(s) &= \hat{Q}_{h+1}(s,\hat{\mu}\times \hat{\nu}) \leq \hat{Q}_{h+1}(s,\hat{\mu}\times \ber{\hat{\mu}}) \leq Q^{\hat{\mu}\times \ber{\hat{\mu}},\Gc}_{h+1}(s,\hat{\mu}\times \ber{\hat{\mu}}) = V^{\hat{\mu}\times \ber{\hat{\mu}},\Gc}_{h+1}(s)\leq H, \qquad \forall s\in \Sc.
\end{align*}
If $\hat Q_h(s,a) = 0$, the claim naturally holds. If not, we can obtain that
\begin{align*}
     \hat Q_h(s,a) &= \left(\hat{\Bb}_h \hat{V}_{h+1}\right)(s,a)- \Gamma_h(s,a)\\
     & \leq \left(\Bb_h \hat{V}_{h+1}\right)(s,a) + \left|\left(\hat{\Bb}_h \hat{V}_{h+1}\right)(s,a) - \left(\Bb_h \hat{V}_{h+1}\right)(s,a)\right| - \Gamma^g_h(s,a)\\
     & \overset{(i)}{\leq} \left(\Bb_h \hat{V}_{h+1}\right)(s,a)\overset{(ii)}{\leq} \left(\Bb_h V^{\hat{\mu}\times \ber{\hat{\mu}},\Gc}_{h+1}\right)(s,a)= Q^{\hat{\mu}\times \ber{\hat{\mu}},\Gc}_{h+1}(s,a).
\end{align*}
The above inequality (i) is from Lemma~\ref{lem:concentration_game} and leverages the fact that $\hat{V}_{h+1}(\cdot)$ is independent of $\hat{\Pb}_h$ and takes value in $[0, H]$. Inequality (ii) is from the obtained fact that $\hat{V}_{h+1}(s)\leq V^{\hat{\mu}\times \ber{\hat{\mu}},\Gc}_{h+1}(s)$. The desired claim Eqn.~\eqref{eqn:pess_value_game} can be verified by induction.
\end{proof}

\noindent\textbf{Step 2: bounding the performance difference.}
From Eqn.~\eqref{eqn:pess_value_game}, we can observe that
\begin{align*}
    0 &\leq V^{\mu^*\times\nu^*, \Gc}_h(s) - V^{\hat{\mu}\times \ber{\hat{\mu}}, \Gc}_h(s) \leq V^{\mu^*\times\bar{\nu}, \Gc}_h(s)- \hat{V}_h(s) \\
    &\leq  Q^{\mu^*\times\bar{\nu}, \Gc}_h(s, \mu^*\times\bar{\nu}) - \hat{Q}_h(s, \mu^*\times \hat{\nu}) \leq Q^{\mu^*\times\bar{\nu}, \Gc}_h(s, \mu^*\times\bar{\nu}) - \hat{Q}_h(s, \mu^*\times \bar{\nu}),
\end{align*}
where
\begin{align*}
    \bar{\nu} = \{\bar{\nu}_h: h\in [H]\}, \qquad \bar{\nu}_h(s) = \argmin_{a^2\in \Ac^2} \Eb_{a^1\sim \mu^*_h(\cdot|s)}\hat{Q}_h(s, (a^1, a^2)).
\end{align*}
We particularly note that $\bar{\nu}$ is a deterministic policy.

With
\begin{align*}
    Q^{\mu^*\times\bar{\nu}, \Gc}_h(s, \mu^*\times\bar{\nu}) &= \Eb_{a\sim \mu^*_h(\cdot|s)\times \bar{\nu}_h(s)}\left[\left(\Bb_hV^{\mu^*\times \bar{\nu},\Gc}_{h+1}\right)(s, a)\right]\\
    \hat{Q}_h(s, \mu^*\times\bar{\nu}) & = \Eb_{a\sim \mu^*_h(\cdot|s)\times \bar{\nu}_h(s)}\left[\max\left\{\left(\hat{\Bb}_h\hat{V}_{h+1}\right)(s,a) - \Gamma^g_h(s, a), 0\right\}\right],
\end{align*}
we can further obtain that
\begin{align*}
    V^{\mu^*\times\bar{\nu}, \Gc}_h(s)- \hat{V}_h(s) &\leq \Eb_{a\sim \mu^*_h(\cdot|s)\times \bar{\nu}_h(s)}\left[\left(\Bb_hV^{\mu^*\times \bar{\nu},\Gc}_{h+1}\right)(s, a) - \left(\hat{\Bb}_h\hat{V}_{h+1}\right)(s,a) + \Gamma^g_h(s, a)\right]\\
    & = \Eb_{a\sim \mu^*_h(\cdot|s)\times \bar{\nu}_h(s)}\left[\left(\Bb_hV^{\mu^*\times \bar{\nu},\Gc}_{h+1}\right)(s, a) - \left(\Bb_h\hat{V}_{h+1}\right)(s,a)\right]\\
    & + \Eb_{a\sim \mu^*_h(\cdot|s)\times \bar{\nu}_h(s)}\left[\left(\Bb_h \hat{V}_{h+1}\right)(s, a) - \left(\hat{\Bb}_h\hat{V}_{h+1}\right)(s,a) + \Gamma^g_h(s, a)\right]\\
    & \overset{(i)}{\leq} \Eb_{a\sim \mu^*_h(\cdot|s)\times \bar{\nu}_h(s)}\left[\left(\Bb_hV^{\mu^*\times \bar{\nu},\Gc}_{h+1}\right)(s, a) - \left(\Bb_h\hat{V}_{h+1}\right)(s,a) + 2\Gamma^g_h(s,a)\right]\\
    & = \Eb_{a\sim \mu^*_h(\cdot|s)\times \bar{\nu}_h(s)}\left[\left(\Pb_hV^{\mu^*\times\bar{\nu},\Gc}_{h+1}\right)(s, a) - \left(\Pb_h\hat{V}_{h+1}\right)(s,a) + 2\Gamma^g_h(s,a)\right]
\end{align*}
where inequality (i) holds with probability at least $1-\delta$ according to Lemma~\ref{lem:concentration_game}. If applying the above argument iteratively, we can further obtain that
\begin{align*}
    \sum_{s\in \Sc} d^{\mu^*\times\bar{\nu}, \Gc}_h(s)\left(V^{\mu^*\times\bar{\nu}, \Gc}_h(s)- \hat{V}_h(s)\right) \leq 2\sum_{h' = h}^H\sum_{s\in \Sc}d^{\mu^*\times \bar{\nu}, \Gc}_{h'}(s) \Gamma_{h'}^g(s, \mu^*\times \bar{\nu}),
\end{align*}
which indicates that
\begin{align*}
   V^{\mu^*\times \bar{\nu}, \Gc}_1(\xi)- \hat{V}_1(\xi) \leq 2\sum_{h \in  [H]}\sum_{s\in \Sc}d^{\mu^*\times \bar{\nu}, \Gc}_{h}(s) \Gamma_h(s, \mu^*\times\bar{\nu}).
\end{align*}

\noindent\textbf{Step 3: completing the proof with concentrability.} Let us consider $(s, a^1, h)\in \Sc \times \Ac^1 \times [H]$ such that $d^{\mu^*\times \bar{\nu},\Gc}_h(s, (a^1, \bar{\nu}_h(s)))>0$. We can then obtain that for all $l\in \Lc_h(s,(a^1, \bar{\nu}_h(s)))$, 
\begin{align*}
    N_{h,l}(s, (a^1, \bar{\nu}_h(s))) &\overset{(i)}{\geq} \frac{Kd^{\rho_l,\Gc_l}_h(s, (a^1, \bar{\nu}_h(s)))}{8} - 5 \sqrt{K d^{\rho_l,\Gc_l}_h(s, (a^1, \bar{\nu}_h(s))) \log\left(\frac{KHL}{\delta}\right)}\\
    &\overset{(ii)}{\geq} \frac{Kd^{\rho_l,\Gc_l}_h(s, (a^1, \bar{\nu}_h(s)))}{16}  \overset{(iii)}{\geq} 1.
\end{align*}
where inequality (i) is from Lemma~\ref{lem:subsampling}; and inequalities (ii) and (iii) are from the condition that
\begin{align*}
    K \geq \frac{c\log\left(\frac{KH}{\delta}\right)}{d^{\min}_g} \geq \frac{c\log\left(\frac{KH}{\delta}\right)}{d^{\rho_l,\Gc_l}_h(s, (a^1, \bar{\nu}_h(s)))}.
\end{align*}
Thus, it holds that
\begin{align*}
    \hat{L}_h(s, (a^1, \bar{\nu}_h(s))) = \sum_{l\in [L]}\oneb\{N_{h,l}(s,(a^1, \bar{\nu}_h(s)))\geq 1\} = L_h(s,(a^1, \bar{\nu}_h(s))).
\end{align*}
As a result, it holds that
\begin{align*}
    \Gamma_h(s, (a^1, \bar{\nu}_h(s)))  &\leq c\sqrt{\sum_{l\in \hat{\Lc}_h(s, (a^1, \bar{\nu}_h(s)))}\frac{H^2\log(SAH/\delta)}{\left(\hat{L}_h(s, (a^1, \bar{\nu}_h(s)))\right)^2 N_{h,l}(s, (a^1, \bar{\nu}_h(s)))}}+ c\sqrt{\frac{H^2\log(SAH/\delta)}{\hat{L}_h(s, (a^1, \bar{\nu}_h(s)))}}\\
    &\leq  c\sqrt{\sum_{l\in \Lc_h(s, (a^1, \bar{\nu}_h(s)))}\frac{H^2\log(SAH/\delta)}{\left(L_h(s, (a^1, \bar{\nu}_h(s)))\right)^2Kd^{\rho_l,\Gc_l}_h(s, a)}} + c\sqrt{\frac{H^2\log(SAH/\delta)}{L_h(s, (a^1, \bar{\nu}_h(s)))}}.
\end{align*}

We can then obtain that
\begin{align*}
    &\sum_{h \in  [H]}\sum_{(s, a^1)\in \Sc\times \Ac^1}d^{\mu^*\times \bar{\nu}, \Gc}_{h}(s, (a^1, \bar{\nu}_h(s))) \Gamma_h(s, (a^1, \bar{\nu}_h(s)))\\
    &\leq  c\sum_{h \in  [H]}\sum_{(s, a^1)\in \Sc\times \Ac^1}d^{\mu^*\times \bar{\nu}, \Gc}_{h}(s, (a^1, \bar{\nu}_h(s))) \sqrt{\frac{C^{\dagger}_g H^2\log(SAH/\delta)}{L^\dagger_g K\min\left\{d^{\mu^*\times \bar{\nu},\Gc}_h(s, (a^1, \bar{\nu}_h(s))), \frac{1}{SA_1}\right\}}}\\
    & + c\sum_{h \in  [H]}\sum_{(s, a^1)\in \Sc\times \Ac^1}d^{\mu^*\times \bar{\nu}, \Gc}_{h}(s, (a^1, \bar{\nu}_h(s)))\sqrt{\frac{H^2\log(SAH/\delta)}{L^{\dagger}_g}}\\
    &\leq  c\sum_{h \in  [H]}\sqrt{\sum_{(s, a^1)\in \Sc\times \Ac^1}\frac{d^{\mu^*\times \bar{\nu}, \Gc}_{h}(s, (a^1, \bar{\nu}_h(s)))C^{\dagger}_g H^2\log(SAH/\delta)}{L^{\dagger}_g  K \min\left\{d^{\mu^*\times \bar{\nu},\Gc}_h(s, (a^1, \bar{\nu}_h(s))), \frac{1}{SA_1}\right\}}}\sqrt{\sum_{(s,a^1)\in \Sc\times \Ac^1}d^{\pi^*, \Mc}_{h}((a^1, \bar{\nu}_h(s)))}\\
    & + c\sqrt{\frac{H^4\log(SAH/\delta)}{L^{\dagger}_g}}\\
    & \leq cH \sqrt{\frac{C^{\dagger}_g SA_1H^2\log(SAH/\delta)}{L^{\dagger}_g K }} + c\sqrt{\frac{H^4\log(SAH/\delta)}{L^{\dagger}_g}}.
\end{align*}

Putting these results together, it can then be established that
\begin{align*}
    V^{\mu^*\times \nu^*, \Gc}_1(\xi) - V^{\hat{\mu}\times \ber{\hat{\mu}}, \Gc}_1(\xi) &\leq V^{\mu^*\times \bar{\nu}, \Gc}_1(\xi) - \hat{V}_1(\xi)\leq 2\sum_{h \in  [H]}\sum_{s\in \Sc}d^{\mu^*\times \bar{\nu}, \Gc}_{h}(s) \Gamma_h(s, \mu^*\times\bar{\nu})\\
    &\leq  cH \sqrt{\frac{C^{\dagger}_g SA_1H^2\log(SAH/\delta)}{L^{\dagger}_g K }} + c\sqrt{\frac{H^4\log(SAH/\delta)}{L^{\dagger}_g}},
\end{align*}
which concludes the proof.

\section{Robust RL}
\subsection{Problem Formulation}
The following task--source relationship is considered for offline robust RL with perturbed data sources. It shares a similar content as Assumption~\ref{asp:relationship} while an additional mild constraint is added to have the transition probabilities of data source MDPs bounded in a regime around that of the nominal MDP. This constraint simplifies later analysis while it is left for future works to investigate its necessity.
\begin{assumption}[Task--source Relationship, Robust MDP]\label{asp:relationship_robust}
    Data source MDPs $\{\Mc_l = \{H, \Sc, \Ac, \Pb_{l}, r_l\}: l\in [L]\}$ are generated from an unknown set of distributions $g = \{g_h: h\in [H]\}$ such that for each $(l, h)\in [L]\times [H]$, the reward and transition $\{r_{h,l}, \Pb_{h,l}\}$ are independently sampled from the distribution $g_h(\cdot)$ whose expectation is $\{r_{h}, \Pb^c_{h}\}$ of the nominal MDP $\Mc^c = \{H, \Sc, \Ac, \Pb^c, r\}$. In addition, $\{r'_{h}, \Pb'_{h}\}\sim g_h(\cdot)$ satisfies that $\Pb'_h(s'|s,a) \in [T_l \cdot \Pb_h(s'|s,a), T_u \cdot \Pb_h(s'|s,a)]$ for constants $T_u<1, T_l>1$ at each $(s, a, h, s') \in \Sc\times \Ac\times [H]\times \Sc$.
\end{assumption}

\subsection{Algorithm Details of HetPEVI-Robust}\label{app_sub:alg_detail_robust}
The complete description of HetPEVI-Game can be found in Algorithm~\ref{alg:HetPEVI_Robust}.
 \begin{algorithm}[tbh]
	\caption{HetPEVI-Robust}
	\label{alg:HetPEVI_Robust}
	\begin{algorithmic}[1]
	    \STATE {\bfseries Input:} Dataset $\Dc = \{D_l:l\in[L]\}$
     \STATE Obtain $\Dc'_l \gets \text{subsampling}(\Dc_l), \forall l\in [L]$
     \FOR{$\forall (s, a, h, s') \in \Sc\times \Ac\times [H] \times \Sc$}
     \STATE $\forall l\in [L], N_{h, l}(s,a) \gets$ number of visitations on $(s, a, h)$ in $\Dc'_l$
     \STATE $\forall l\in [L], N_{h, l}(s,a, s') \gets$ number of visitations on $(s, a, h, s')$ in $\Dc'_l$ 
     \STATE $\hat{\Lc}_h(s,a)\gets \{l\in [L]: N_{h,l}(s,a)>0\}$
     \STATE $\forall l\in \hat{\Lc}_h(s,a), \hat{r}_{h,l}(s,a) \gets r_{h,l}(s,a), \hat{\Pb}_{h,l}(s'|s,a) \gets N_{h,l}(s,a,s')/N_{h,l}(s,a)$
     \STATE $\hat{r}_h(s,a)  \gets \sum_{l\in \hat{\Lc}_h(s,a)}\hat{r}_{h,l}(s,a)/(\hat{L}_h(s,a)\vee 1), \hat{\Pb}_h(s'|s,a) \gets \sum_{l\in \hat{\Lc}_h(s,a)}\hat{\Pb}_{h,l}(s'|s,a)/(\hat{L}_h(s,a)\vee 1)$
     \ENDFOR
     \STATE Initialize $\hat{V}_{H+1}(s) \gets 0, \forall s\in \Sc$ 
	    \FOR{$h = H, H-1,\cdots, 1$} 
        \FOR{$s \in \Sc$}
        \STATE $\Gamma^\sigma_h(s,a) \gets \min\left\{\frac{c}{\sigma \hat{\Pb}^{\min}_{h}(s,a)}\sqrt{\sum_{l\in \hat{\Lc}_h(s,a)}\frac{H^2\log(SAH/\delta)}{(\hat{L}_h(s,a))^2 N_{h,l}(s,a)}}
        + \frac{c}{\sigma \hat{\Pb}^{\min}_{h}(s,a)}\sqrt{\frac{H^2\log(SAH/\delta)}{\hat{L}_h(s,a)}} + c\sqrt{\frac{\log(SAH/\delta)}{\hat{L}_h(s,a)}}, H\right\}$
        \STATE $\hat{Q}_h(s,a) \gets \max\left\{\hat{r}_h(s,a) + \sup_{\lambda \geq 0}\left\{-\lambda \log\left(\left[\hat{\Pb}_h\exp\left(-\hat{V}_{h+1}/\lambda\right)\right](s,a)\right) - \lambda \sigma\right\} - \Gamma^\sigma_h(s,a), 0\right\}$
        \ENDFOR
        \FOR{$s\in \Sc$}
        \STATE $\hat{\pi}_h(s) \gets \argmax_{a\in \Ac}(s,a)$
        \STATE $\hat{V}_h(s) \gets \hat{Q}_h(s,\hat{\pi}_h(s))$
	    \ENDFOR
        \ENDFOR
	    \STATE {\bfseries Output:} policy $\hat\pi = \{\hat\pi_h(s): (s,h)\in \Sc\times [H]\}$
	\end{algorithmic}
	\end{algorithm}

\subsection{Core Lemmas}
First, we introduce the following notations:
\begin{align*}
    \Cc &:= \left\{(s, a, h): \exists l\in [L] \text{ s.t. } d^{\rho_l, \Mc_l}_h(s,a) >0\right\};\\
    \Pb^{\min}_{h}(s,a)&:= \min\left\{\Pb^c_h(s'|s,a): s' \text{ s.t. } \Pb^c_h(s'|s,a)>0\right\};\\
    \Pb^{\min}_{h,l}(s,a)&:= \min\left\{\Pb_{h,l}(s'|s,a): s' \text{ s.t. } \Pb_{h,l}(s'|s,a)>0\right\};\\
    \Pb^{\min}_{\sigma}&:= \min\left\{\Pb^c_h(s'|s,a): (s, a, h, s') \text{ s.t. } \exists l\in [L], d^{\rho_l, \Mc_l}_h(s,a) >0, \Pb^c_h(s'|s,a)>0\right\}\\
    & = \min\left\{\Pb^c_h(s'|s,a): (s, a, h, s' ) \text{ s.t. } (s, a, h) \in \Cc, \Pb^c_h(s'|s,a)>0\right\};\\
    \Pb^{\min}_{*}&:= \min\left\{\Pb^c_h(s'|s,a): (s, a, h, s') \text{ s.t. } d^{\pi^*, \Mc^c}_h(s,a) >0, \Pb^c_h(s'|s,a)>0\right\}\\
    d^{\pi^*, \Rc}_h(s)&: = \left\{d^{\pi^*,\Mc^\sigma}_h(s): \Mc^\sigma \in \Rc\right\};\\
    d^{\min}_{\sigma}&: = \min\left\{d^{\rho_l,\Mc_l}_h(s,a): (s, a,h, l) \text{ s.t. } d^{\rho_l,\Mc_l}_h(s,a)>0\right\}.
\end{align*}

A core lemma is then presented in the following.
\begin{lemma}\label{lem:concentration_robust}
For all $(s, a, h)\in \Sc\times \Ac\times [H]$, and any function $V: \Sc\to [0, H]$ independent of $\hat{\Pb}_h$, with probability at least $1-\delta$, it holds that
    \begin{align}\label{eqn:concentration_robust}
        \left|\hat{r}_h(s,a) + \inf_{\hat{\Pb}^\sigma_h(\cdot|s,a)\in \Uc^\sigma(\hat{\Pb}_h(\cdot|s,a))} \left(\hat{\Pb}^\sigma_h V\right)(s,a) - r_h(s,a) - \inf_{\Pb^\sigma_h(\cdot|s,a)\in \Uc^\sigma(\Pb^c_h(\cdot|s,a))} \left(\Pb^\sigma_h V\right)(s,a)\right| \leq \Gamma^\sigma_h(s,a).
    \end{align}
    Moreover, for all $(s, a, h)\in \Cc$, with probability at least $1-\delta$, it holds that
    \begin{align}\label{eqn:p_min}
    \Pb^{\min}_h(s,a) \geq \frac{1}{T_u}\frac{\hat{\Pb}^{\min}_h(s,a)}{e^2} \geq \frac{T_l}{T_u}\frac{\Pb^{\min}_h(s,a)}{8e^2\log(KHLSA/\delta)}.
    \end{align}
\end{lemma}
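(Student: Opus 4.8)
The plan is to establish the two displayed bounds in sequence, since the transition‑count estimate \eqref{eqn:p_min} is exactly what supplies the Lipschitz control needed for the concentration bound \eqref{eqn:concentration_robust}. A preliminary fact used throughout: for $(s,a,h)\in\Cc$, the burn‑in conditions $K\gtrsim c/(d^{\min}_\sigma(\Pb^{\min}_\sigma)^2)$ and $L^{\min}_\sigma\gtrsim c/(\Pb^{\min}_\sigma)^2$ together with Lemma~\ref{lem:subsampling} force every active source $l\in\hat\Lc_h(s,a)$ to be well sampled --- $N_{h,l}(s,a)\ge1$ already implies $d^{\rho_l,\Mc_l}_h(s,a)\ge d^{\min}_\sigma$, hence $N_{h,l}(s,a)\gtrsim Kd^{\min}_\sigma\gtrsim\log(KHLSA/\delta)/(\Pb^{\min}_\sigma)^2$ --- so that $N_{h,l}(s,a)\,\Pb^{\min}_h(s,a)\gtrsim\log(KHLSA/\delta)$ for each active $l$ (using $\Pb^{\min}_\sigma\le\Pb^{\min}_h(s,a)\le1$ on $\Cc$).

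For \eqref{eqn:p_min} I would argue by standard multiplicative (Chernoff) concentration of empirical transition counts combined with the two‑sided sandwich in Assumption~\ref{asp:relationship_robust}. For the upper bound on $\hat\Pb^{\min}_h(s,a)$, pick $s'_\star$ attaining $\Pb^{\min}_h(s,a)$; since each active source has $\Pb_{h,l}(s'_\star|s,a)\in[T_l\Pb^{\min}_h(s,a),\,T_u\Pb^{\min}_h(s,a)]>0$ and is well sampled, a one‑sided Chernoff bound gives $N_{h,l}(s,a,s'_\star)\ge1$ and $\hat\Pb_{h,l}(s'_\star|s,a)\le e\,\Pb_{h,l}(s'_\star|s,a)\le eT_u\Pb^{\min}_h(s,a)$ for every active $l$; averaging over $\hat\Lc_h(s,a)$ then yields $\hat\Pb^{\min}_h(s,a)\le\hat\Pb_h(s'_\star|s,a)\le e^2T_u\Pb^{\min}_h(s,a)$. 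For the lower bound, pick $s'$ attaining $\hat\Pb^{\min}_h(s,a)$; since $\hat\Pb_h(s'|s,a)>0$ some active source observed the transition to $s'$, so $\Pb^c_h(s'|s,a)\ge\Pb^{\min}_h(s,a)$ and $\Pb_{h,l}(s'|s,a)\ge T_l\Pb^{\min}_h(s,a)$ for every active $l$, and the matching lower Chernoff bound gives $\hat\Pb_{h,l}(s'|s,a)\ge\tfrac12\Pb_{h,l}(s'|s,a)$, hence $\hat\Pb^{\min}_h(s,a)\ge\tfrac{T_l}{2}\Pb^{\min}_h(s,a)\ge\tfrac{T_l e^2}{8\log(KHLSA/\delta)}\Pb^{\min}_h(s,a)$. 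A union bound over $(s,a,h)$ (absorbed into $\gtrsim$) finishes this part.

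For \eqref{eqn:concentration_robust} I would split the left side into a reward part and a robust‑transition part. The reward part $|\hat r_h(s,a)-r_h(s,a)|$ is an average of $\hat L_h(s,a)$ independent $[0,1]$‑valued samples with mean $r_h(s,a)$ (Assumption~\ref{asp:relationship_robust}), so Hoeffding plus a union bound control it by $c\sqrt{\log(SAH/\delta)/\hat L_h(s,a)}$, the last summand of $\Gamma^\sigma_h(s,a)$. For the robust‑transition part I would use strong duality \citep{hu2013kullback} to rewrite both infima as $\sup_{\lambda\ge0}\{-\lambda\log([\Pb\exp(-V/\lambda)](s,a))-\lambda\sigma\}$ with $\Pb\in\{\hat\Pb_h,\Pb^c_h\}$; following the dual analysis of \citet{shi2022distributionally} the supremum can be restricted to a range of $\lambda$ on which $\lambda e^{H/\lambda}\lesssim H/\sigma$ (the complementary regime being absorbed by the $\min\{\cdot,H\}$ clip), and on that range the map $p\mapsto-\lambda\log\langle p,\exp(-V/\lambda)\rangle$ has gradient $-\lambda\exp(-V/\lambda)/\langle p,\exp(-V/\lambda)\rangle$ of $\ell_\infty$‑norm at most $\lambda/\langle p,\exp(-V/\lambda)\rangle\le\lambda e^{H/\lambda}/\Pb^{\min}_h(s,a)\lesssim H/(\sigma\,\hat\Pb^{\min}_h(s,a))$, where the last inequality invokes \eqref{eqn:p_min}. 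Hence the transition part is at most $\frac{cH}{\sigma\hat\Pb^{\min}_h(s,a)}\sup_\lambda|(\hat\Pb_h-\Pb^c_h)\exp(-V/\lambda)|$, and since $\exp(-V/\lambda)\in[0,1]$ the inner deviation is bounded exactly as in Lemma~\ref{lem:concentration} --- route through the per‑source empirical transitions, then apply Hoeffding once for within‑source sample uncertainty and once for across‑source uncertainty --- giving $c\sqrt{\sum_{l\in\hat\Lc_h(s,a)}\log(SAH/\delta)/(\hat L_h(s,a)^2N_{h,l}(s,a))}+c\sqrt{\log(SAH/\delta)/\hat L_h(s,a)}$; multiplying by $H/(\sigma\hat\Pb^{\min}_h(s,a))$ reproduces the first two summands of $\Gamma^\sigma_h(s,a)$. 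Uniformity over the continuum of $\lambda$ is handled by an $\varepsilon$‑net (the family $\{\exp(-V/\lambda)\}$ is coordinatewise monotone in $\lambda$, so a polynomially sized net costs only a logarithmic factor), and a final union bound over $(s,a,h)$ completes the proof.

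The step I expect to be the main obstacle is the dual‑variable handling in the last paragraph: unlike the linear Bellman operator of Lemma~\ref{lem:concentration}, here one maximizes a nonlinear log‑sum‑exp functional over an unbounded $\lambda\ge0$. The delicate points are (i) truncating $\lambda$ to a range without losing more than the $\min\{\cdot,H\}$ slack already built into $\Gamma^\sigma_h$, (ii) keeping the Lipschitz constant of the functional in the transition vector uniformly $\lesssim1/(\sigma\hat\Pb^{\min}_h(s,a))$ on that range --- which is precisely what forces the two‑sided sandwich in Assumption~\ref{asp:relationship_robust} and the estimate \eqref{eqn:p_min} --- and (iii) upgrading pointwise‑in‑$\lambda$ concentration to a uniform statement. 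Everything else reduces to the two‑level Hoeffding argument already carried out for Lemma~\ref{lem:concentration}.
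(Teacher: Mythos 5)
Your overall skeleton matches the paper's: burn-in conditions plus Lemma~\ref{lem:subsampling} to lower-bound the visitation counts, multiplicative binomial concentration (Lemma~\ref{lem:binomial_concentration}) combined with the two-sided sandwich of Assumption~\ref{asp:relationship_robust} to get Eqn.~\eqref{eqn:p_min}, Hoeffding for the reward average, strong duality for the robust term, and the two-level (within-source plus across-source) Hoeffding bound on $\|\hat{\Pb}_h(\cdot|s,a)-\Pb^c_h(\cdot|s,a)\|_\infty$. However, the step you yourself flag as the main obstacle --- controlling the dual functional --- is where your argument genuinely breaks, and your proposed repair does not work. Your Lipschitz bound $\|\nabla_p(-\lambda\log\langle p,e^{-V/\lambda}\rangle)\|_\infty\leq \lambda e^{H/\lambda}/\Pb^{\min}_h(s,a)$ diverges as $\lambda\to 0^+$, and you cannot discard the small-$\lambda$ regime by appealing to the $\min\{\cdot,H\}$ clip: the clip only caps the penalty, it does not license ignoring part of the feasible set of a supremum that may be attained there. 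The paper avoids the $e^{H/\lambda}$ factor entirely by never separating numerator and denominator: it evaluates the difference of the two suprema at the (at most two) optimal dual variables $\hat\lambda_h(s,a),\lambda_h(s,a)\in[0,H/\sigma]$ (Lemma~\ref{lem:dual_bound}) and uses the weighted-ratio inequality
\begin{align*}
\frac{\left|\left[(\hat{\Pb}_h-\Pb^c_h)\exp(-V/\lambda)\right](s,a)\right|}{\left[\Pb^c_h\exp(-V/\lambda)\right](s,a)}\leq \max_{s'\in\mathrm{supp}(\Pb^c_h(\cdot|s,a))}\frac{|\hat{\Pb}_h(s'|s,a)-\Pb^c_h(s'|s,a)|}{\Pb^c_h(s'|s,a)}\leq\frac{1}{2},
\end{align*}
so that $|\log(1+x)|\leq 2|x|$ applies and the only $\lambda$-dependence left is the linear factor $\lambda\leq H/\sigma$. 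This is why the burn-in conditions are calibrated to make the ratio at most $1/2$, and it also removes your concern (iii): concentration is invoked only coordinatewise on the finitely many $\hat{\Pb}_h(s'|s,a)$, so no $\varepsilon$-net over $\lambda$ is needed.

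A second omission: the boundary case $\lambda=0$. When one or both optimal dual variables vanish, the dual value degenerates to $\essinf_{s'\sim\Pb}V(s')$, for which the ratio argument says nothing. The paper handles this with a separate case analysis (Cases (II)--(III)), using Lemma~\ref{lem:essinf} to show that $\essinf_{s'\sim\hat{\Pb}_h(\cdot|s,a)}V(s')=\essinf_{s'\sim\Pb^c_h(\cdot|s,a)}V(s')$ exactly --- this is precisely why the intermediate bound $N_{h,l}(s,a)\geq -\log(2KHLSA/\delta)/\log(1-\Pb^{\min}_{h,l}(s,a))$ is established, guaranteeing every supported successor state is observed. Your proposal never addresses this regime, and without it inequality~\eqref{eqn:concentration_robust} is not proved. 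Your treatment of Eqn.~\eqref{eqn:p_min} and of the reward term is essentially the paper's argument and is fine.
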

\begin{proof}[Proof of Lemma~\ref{lem:concentration_robust}]
We first prove the following fact that
\begin{align}\label{eqn:robust_visit_number}
    \forall (s, a, h) \in \Cc, l\in \Lc_h(s,a): \qquad N_{h,l}(s,a) \geq \frac{cT_l^2\log(KHLSA/\delta)}{16(\Pb^{\min}_{h,l}(s,a))^2} \geq - \frac{\log(2KHLSA/\delta)}{\log(1-\Pb^{\min}_{h,l}(s,a))}
\end{align}
In particular, with
\begin{align*}
    K \geq \frac{c\log(KHLSA/\delta)}{d^{\min}_{\sigma}(\Pb^{\min}_{\sigma})^2},
\end{align*}
it holds that
\begin{align*}
    K d^{\rho_l, \Mc_l}_h(s,a) \geq \frac{cd^{\rho_l, \Mc_l}_h(s,a)\log(KHLSA/\delta)}{d^{\min}_{\sigma} (\Pb^{\min}_{\sigma})^2} \geq \frac{c\log(KHLSA/\delta)}{(\Pb^{\min}_{\sigma})^2} \geq \frac{c\log(KHLSA/\delta)}{(\Pb^{\min}_h(s,a))^2} \geq \frac{cT_l^2\log(KHLSA/\delta)}{(\Pb^{\min}_{h,l}(s,a))^2}.
\end{align*}
Lemma~\ref{lem:subsampling} then indicates that with probability at least $1-8\delta$, 
\begin{align*}
    N_{h,l}(s,a) &\geq \frac{Kd^{\rho_l, \Mc_l}_h(s,a)}{8} - 5\sqrt{Kd^{\rho_l, \Mc_l}_h(s,a)\log\left(\frac{KHL}{\delta}\right)}\geq \frac{Kd^{\rho_l, \Mc_l}_h(s,a)}{16} \geq \frac{cT_l^2\log(KHLSA/\delta)}{16(\Pb^{\min}_{h,l}(s,a))^2}.
\end{align*}
Furthermore, with $x\leq - \log(1-x)$ for all $x\in [0, 1]$ and a suitable $c$, it holds that
\begin{align*}
    \frac{cT_l^2\log(KHLSA/\delta)}{16(\Pb^{\min}_{h,l}(s,a))^2} \geq \frac{cT_l^2\log(KHLSA/\delta)}{16\Pb^{\min}_{h,l}(s,a)} \geq - \frac{\log(2KHLSA/\delta)}{\log(1-\Pb^{\min}_{h,l}(s,a))},
\end{align*}
which completes the proof of Eqn.~\eqref{eqn:robust_visit_number}.

Then, with Lemma~\ref{lem:binomial_concentration}, we can obtain that
\begin{align*}
    \Pb_{h,l}(s'|s,a) \geq \frac{\hat{\Pb}_{h,l}(s'|s,a)}{e^2} \geq \frac{\Pb_{h,l}(s'|s,a)}{8e^2\log(KHLSA/\delta)},
\end{align*}
This result further indicates that
\begin{align*}
    \Pb^{\min}_{h}(s,a) &= \Pb^c_{h}(s_1|s,a) \geq \frac{1}{T_u}\sum_{l\in \hat{\Lc}_h(s,a)}\frac{\Pb_{h,l}(s_1|s,a)}{\hat{L}_h(s,a)} \geq \frac{1}{T_u}\sum_{l\in \hat{\Lc}_h(s,a)}\frac{\hat{\Pb}_{h,l}(s_1|s,a)}{e^2\hat{L}_h(s,a)}\\
    &\geq \frac{1}{T_u}\sum_{l\in \hat{\Lc}_h(s,a)}\frac{\hat{\Pb}_{h,l}(s_2|s,a)}{e^2\hat{L}_h(s,a)} = \frac{1}{T_u}\frac{\hat{\Pb}^{\min}_{h}(s,a)}{e^2} \geq \frac{1}{T_u}\sum_{l\in \hat{\Lc}_h(s,a)}\frac{\Pb_{h,l}(s_2|s,a)}{8e^2\hat{L}_h(s,a)\log(KHLSA/\delta)} \\
    &\geq \frac{T_l}{T_u}\sum_{l\in \hat{\Lc}_h(s,a)}\frac{\Pb^c_{h}(s_2|s,a)}{8e^2\hat{L}_h(s,a)\log(KHLSA/\delta)} \geq \frac{T_l}{T_u}\frac{\Pb^c_{\min,h}(s,a)}{8e^2\log(KHLSA/\delta)},
\end{align*}
where $\Pb^c_{h}(s_1|s,a) = \Pb^{\min}_{h}(s,a)$ and $\hat{\Pb}_{h}(s_2|s,a) = \hat{\Pb}^{\min}_{h}(s,a)$. Eqn.~\eqref{eqn:p_min} is thus proved.

Then, we prove the first part in this lemma, i.e., Eqn~\eqref{eqn:concentration_robust}. It can be first observed that
\begin{align}
    \left|\hat{\Pb}_h(s'|s,a) - \Pb_h(s'|s,a)\right| &= \left|\sum_{l\in \hat{\Lc}_h(s,a)} \frac{1}{\hat{L}_h(s,a)}\hat{\Pb}_{h,l}(s'|s,a) - \Pb_h(s'|s,a)\right| \notag\\
    &\leq \left|\sum_{l\in \hat{\Lc}_h(s,a)} \frac{1}{\hat{L}_h(s,a)}\hat{\Pb}_{h,l}(s'|s,a) - \sum_{l\in \hat{\Lc}_h(s,a)} \frac{1}{\hat{L}_h(s,a)}\Pb_{h,l}(s'|s,a)\right| \notag\\
    & + \left|\sum_{l\in \hat{\Lc}_h(s,a)} \frac{1}{\hat{L}_h(s,a)}\Pb_{h,l}(s'|s,a) - \Pb_h(s'|s,a)\right|\notag \\
    &\leq \sqrt{\sum_{l\in \hat{\Lc}_h(s,a)}\frac{2\log(S^2AH/\delta)}{(\hat{L}_h(s,a))^2 N_{h,l}(s,a)}}+ \sqrt{\frac{2\log(S^2AH/\delta)}{\hat{L}_h(s,a)}} \label{eqn:im_concentration_robust}.
\end{align}
where the last step holds with probability at least $1-2\delta/(S^2AH)$ according to the Hoeffidng inequality. This inequality thus holds with probability at least $1-2\delta$ for all $(s, a, h, s')\in \Sc\times \Ac\times [H] \times \Sc$. It is further indicated that for a function $V: \Sc\to [0, H]$
\begin{align*}
    &\frac{\left|\left[\hat{\Pb}_h\exp\left(-\frac{V}{\lambda}\right)\right](s,a) - \left[\Pb^c_h\exp\left(-\frac{V}{\lambda}\right)\right](s,a)\right|}{\left[\Pb^c_h\exp\left(-\frac{V}{\lambda}\right)\right](s,a)} \\
    &\leq \max_{s' \in \text{supp}(\Pb^c_h(\cdot|s,a))}\frac{\left|\hat{\Pb}_h(s'|s,a) - \Pb^c_h(s'|s,a)\right|}{\Pb^c_h(s'|s,a)}\\
    &\leq \frac{1}{\Pb^{\min}_{h}(s,a)}\left(\sqrt{\sum_{l\in \hat{\Lc}_h(s,a)}\frac{2\log(S^2AH/\delta)}{(\hat{L}_h(s,a))^2 N_{h,l}(s,a)}}+ \sqrt{\frac{2\log(S^2AH/\delta)}{\hat{L}_h(s,a)}}\right)\\
    &\leq\frac{1}{2}
\end{align*}
where the last inequality holds due to that with a suitable $c$,
\begin{align*}
    &N_{h,l}(s,a) \geq \frac{cT_l^2\log(KHLSA/\delta)}{16(\Pb^{\min}_{h,l}(s,a))^2}\geq \frac{8\log(S^2AH/\delta)}{(\Pb^{\min}_{h,l}(s,a))^2}\geq 1, \qquad \forall l\in \Lc_h(s,a) \qquad \Rightarrow \Lc_h(s,a) = \hat{\Lc}_h(s,a);\\
    &|\hat{\Lc}_h(s,a)| = |\Lc_h(s,a)| \geq \frac{8\log(S^2AH/\delta)}{(\Pb^{\min}_{h}(s,a))^2}.
\end{align*}

From Lemma~\ref{lem:dual}, it can then be observed that
\begin{align*}
        &\left|\inf_{\Pb^\sigma_h(\cdot|s,a)\in \Uc^\sigma(\hat{\Pb}_h(\cdot|s,a))} \left(\Pb^\sigma_h V\right)(s,a) - \inf_{\Pb^\sigma_h(\cdot|s,a)\in \Uc^\sigma(\Pb^c_h(\cdot|s,a))} \left(\Pb^\sigma_h V\right)(s,a)\right|\\
        &= \left|\sup_{\lambda >0}\left\{-\lambda \log\left(\left[\hat{\Pb}_h\exp\left(-\frac{V}{\lambda}\right)\right](s,a)\right) - \lambda \sigma\right\} - \sup_{\lambda >0}\left\{-\lambda \log\left(\left[\Pb^c_h\exp\left(-\frac{V}{\lambda}\right)\right](s,a)\right) - \lambda \sigma\right\}\right|.
    \end{align*}
Denote 
\begin{align*}
    \hat{\lambda}_h(s,a) = \argmax_{\lambda \geq 0}\left\{-\lambda \log\left(\left[\hat{\Pb}_h\exp\left(-\frac{V}{\lambda}\right)\right](s,a)\right) - \lambda \sigma\right\}\\
        \lambda_h(s,a) = \argmax_{\lambda \geq 0}\left\{-\lambda \log\left(\left[\Pb^c_h\exp\left(-\frac{V}{\lambda}\right)\right](s,a)\right) - \lambda \sigma\right\},
\end{align*}
with Lemma~\ref{lem:dual_bound}, we can further obtain that
\begin{align*}
        \hat{\lambda}_h(s,a) \in \left[0, \frac{H}{\sigma}\right], \qquad \lambda_h(s,a) \in \left[0, \frac{H}{\sigma}\right].
\end{align*}
In the following, we consider several different cases.

\textit{Case (I): $ \hat{\lambda}_h(s,a)> 0$ and $ \lambda_h(s,a)>0$.} In this case, it follows that
\begin{align*}
    &\left|\sup_{\lambda >0}\left\{-\lambda \log\left(\left[\hat{\Pb}_h\exp\left(-\frac{V}{\lambda}\right)\right](s,a)\right) - \lambda \sigma\right\} - \sup_{\lambda >0}\left\{-\lambda \log\left(\left[\Pb^c_h\exp\left(-\frac{V}{\lambda}\right)\right](s,a)\right) - \lambda \sigma\right\}\right|\\
    &\leq \max\Bigg\{-\hat{\lambda}_h(s,a) \log\left(\left[\hat{\Pb}_h\exp\left(-\frac{V}{\hat{\lambda}_h(s,a)}\right)\right](s,a)\right) + \hat{\lambda}_h(s,a) \log\left(\left[\Pb^c_h\exp\left(-\frac{V}{\hat{\lambda}_h(s,a)}\right)\right](s,a)\right),\\
    & -\lambda_h(s,a) \log\left(\left[\Pb^c_h\exp\left(-\frac{V}{\lambda_h(s,a)}\right)\right](s,a)\right) + \lambda_h(s,a) \log\left(\left[\hat{\Pb}_h\exp\left(-\frac{V}{\lambda_h(s,a)}\right)\right](s,a)\right)\Bigg\}\\
    &\leq \max_{\lambda\in \{\hat{\lambda}_h(s,a), \lambda_h(s,a)\}} \lambda \left|\log\left(\left[\hat{\Pb}_h\exp\left(-\frac{V}{\lambda}\right)\right](s,a)\right) - \log\left(\left[\Pb^c_h\exp\left(-\frac{V}{\lambda}\right)\right](s,a)\right)\right|\\
    & \leq \max_{\lambda\in \{\hat{\lambda}_h(s,a), \lambda_h(s,a)\}} \lambda \left|\log\left(1 + \frac{\left[\hat{\Pb}_h\exp\left(-\frac{V}{\lambda}\right)\right](s,a) - \left[\Pb^c_h\exp\left(-\frac{V}{\lambda}\right)\right](s,a)}{\left[\Pb^c_h\exp\left(-\frac{V}{\lambda}\right)\right](s,a)}\right)\right|\\
    &\leq \max_{\lambda\in \{\hat{\lambda}_h(s,a), \lambda_h(s,a)\}} 2\lambda \cdot \frac{\left|\left[\hat{\Pb}_h\exp\left(-\frac{V}{\lambda}\right)\right](s,a) - \left[\Pb^c_h\exp\left(-\frac{V}{\lambda}\right)\right](s,a)\right|}{\left[\Pb^c_h\exp\left(-\frac{V}{\lambda}\right)\right](s,a)}\\
    & \leq \frac{2H}{\sigma}\cdot \frac{1}{\Pb^{\min}_{h}(s,a)}\left(\sqrt{\sum_{l\in \hat{\Lc}_h(s,a)}\frac{2\log(S^2AH/\delta)}{(\hat{L}_h(s,a))^2 N_{h,l}(s,a)}}+ \sqrt{\frac{2\log(S^2AH/\delta)}{\hat{L}_h(s,a)}}\right)\\
    &\leq \frac{2H}{\sigma}\cdot \frac{T_u e^2}{\hat{\Pb}^{\min}_{h}(s,a)}\left(\sqrt{\sum_{l\in \Lc_h(s,a)}\frac{2\log(S^2AH/\delta)}{(\hat{L}_h(s,a))^2 N_{h,l}(s,a)}}+ \sqrt{\frac{2\log(S^2AH/\delta)}{\hat{L}_h(s,a)}}\right).
\end{align*}

\textit{Case (II): $ \hat{\lambda}_h(s,a)> 0$ and $ \lambda_h(s,a) = 0$; $ \hat{\lambda}_h(s,a)= 0$ and $ \lambda_h(s,a) > 0$.}
We consider the sub-case that $ \hat{\lambda}_h(s,a)>0$ and $ \lambda_h(s,a) = 0$ while the other sub-case can be proved similarly. In particular, with Lemma~\ref{lem:dual_bound} and Lemma~\ref{lem:essinf}, we can obtain that
\begin{align*}
    \sup_{\lambda \geq 0}\left\{-\lambda \log\left(\left[\hat{\Pb}_h\exp\left(-\frac{V}{\lambda}\right)\right](s,a)\right) - \lambda \sigma\right\} &\geq 
    \lim_{\lambda \to 0}\left\{-\lambda \log\left(\left[\hat{\Pb}_h\exp\left(-\frac{V}{\lambda}\right)\right](s,a)\right) - \lambda \sigma\right\}\\
    & = \essinf_{s' \sim \hat{\Pb}_h(\cdot|s,a)} V(s') = \inf \essinf_{s'\sim \hat{\Pb}_{h,l}(\cdot|s,a)}V(s')\\
    & = \inf \essinf_{s'\sim \Pb_{h,l}(\cdot|s,a)}V(s') = \essinf_{s'\sim \Pb^c_{h}(\cdot|s,a)}V(s') \\
    & = \sup_{\lambda \geq 0}\left\{-\lambda \log\left(\left[\Pb^c_h\exp\left(-\frac{V}{\lambda}\right)\right](s,a)\right) - \lambda \sigma\right\}.
\end{align*}
As a result, it holds that
\begin{align*}
    &\left|\sup_{\lambda >0}\left\{-\lambda \log\left(\left[\hat{\Pb}_h\exp\left(-\frac{V}{\lambda}\right)\right](s,a)\right) - \lambda \sigma\right\} - \sup_{\lambda >0}\left\{-\lambda \log\left(\left[\Pb^c_h\exp\left(-\frac{V}{\lambda}\right)\right](s,a)\right) - \lambda \sigma\right\}\right|\\
    &= \sup_{\lambda >0}\left\{-\lambda \log\left(\left[\hat{\Pb}_h\exp\left(-\frac{V}{\lambda}\right)\right](s,a)\right) - \lambda \sigma\right\} - \sup_{\lambda >0}\left\{-\lambda \log\left(\left[\Pb^c_h\exp\left(-\frac{V}{\lambda}\right)\right](s,a)\right) - \lambda \sigma\right\}\\
    & \leq -\hat{\lambda}_{h}(s,a) \log\left(\left[\hat{\Pb}_h\exp\left(-\frac{V}{\hat{\lambda}_{h}(s,a)}\right)\right](s,a)\right) - \hat{\lambda}_{h}(s,a) \sigma \\
    & +\hat{\lambda}_{h}(s,a) \log\left(\left[\Pb^c_h\exp\left(-\frac{V}{\hat{\lambda}_{h}(s,a)}\right)\right](s,a)\right) + \hat{\lambda}_{h}(s,a) \sigma\\
    & = \hat{\lambda}_h(s,a)\left[\log\left(\left[\Pb^c_h\exp\left(-\frac{V}{\hat{\lambda}_{h}(s,a)}\right)\right](s,a)\right) - \hat{\lambda}_{h}(s,a) \log\left(\left[\hat{\Pb}_h\exp\left(-\frac{V}{\hat{\lambda}_{h}(s,a)}\right)\right](s,a)\right)\right],
\end{align*}
which can be bounded via the same steps in Case (I).

\textit{Case (III): $ \hat{\lambda}_h(s,a)= \lambda_h(s,a) = 0$.} With Lemma~\ref{lem:dual_bound} and Lemma~\ref{lem:essinf}, it holds that
\begin{align*}
    &\left|\sup_{\lambda >0}\left\{-\lambda \log\left(\left[\hat{\Pb}_h\exp\left(-\frac{V}{\lambda}\right)\right](s,a)\right) - \lambda \sigma\right\} - \sup_{\lambda >0}\left\{-\lambda \log\left(\left[\Pb^c_h\exp\left(-\frac{V}{\lambda}\right)\right](s,a)\right) - \lambda \sigma\right\}\right|\\
    & = \left|\essinf_{s' \sim \hat{\Pb}_h(\cdot|s,a)} V(s') - \essinf_{s'\sim \Pb^c_{h}(\cdot|s,a)}V(s')\right| = \left|\inf_{l\in [L]}\essinf_{s' \sim \hat{\Pb}_{h,l}(\cdot|s,a)} V(s') - \essinf_{s'\sim \Pb^c_{h}(\cdot|s,a)}V(s')\right|\\
    & = \left|\inf_{l\in [L]}\essinf_{s' \sim {\Pb}_{h,l}(\cdot|s,a)} V(s') - \essinf_{s'\sim \Pb^c_{h}(\cdot|s,a)}V(s')\right| = 0.
\end{align*}

Together with the fact that with probability at least $1-\delta$,
\begin{align*}
    \hat{r}_h(s,a) - r_h(s,a)\leq \sqrt{\frac{\log(1/(HSA\delta))}{\hat{L}_h(s,a)}}, \qquad \forall (s, a, h)\in \Sc\times \Ac\times [H].
\end{align*}
Eqn.~\eqref{eqn:concentration_robust} is shown to be valid.
\end{proof}

\subsection{Main Proofs}
\begin{proof}[Proof of Theorem~\ref{thm:HetPEVI_robust}]
In the following, we establish Theorem~\ref{thm:HetPEVI_robust}. The proof framework is inspired by \citet{shi2022distributionally} but is uniquely adapted to handle randomly perturbed data sources.

\noindent\textbf{Step 1: establishing the pessimism property.} With Lemma~\ref{lem:concentration_robust}, we first show that the following inequalities hold with probability at least $1-\delta$:
\begin{align}\label{eqn:pess_value_robust}
    \hat{Q}_h(s,a) \leq Q^{\hat{\pi},\Rc}_h(s,a), \qquad \hat{V}_h(s,a) \leq V^{\hat{\pi},\Rc}_h(s,a), \qquad \forall (s, a, h)\in \Sc\times \Ac\times [H].
\end{align}
Towards this, it is first observed that
\begin{align*}
    \hat{Q}_{H+1}(s,a) = Q^{\hat{\pi},\Rc}_{H+1}(s,a) = 0, \qquad \forall (s, a)\in \Sc\times \Ac.
\end{align*}
Then, suppose that $\hat{Q}_{h+1}(s,a) \leq  Q^{\hat{\pi},\Rc}_{h+1}(s,a)$ for all $(s,a)\in \Sc\times \Ac$ at some step $h\in [H]$, we can observe that by the update rule in HetPEVI-Game, it holds that
\begin{align*}
    0 \leq \hat{V}_{h+1}(s) = \max_{a\in \Ac} \hat{Q}_{h+1}(s,a) \leq \max_{a\in \Ac} Q^{\hat{\pi}, \Rc}_{h+1}(s,a) = V^{\hat{\pi},\Rc}_{h+1}(s)\leq H, \qquad \forall s\in \Sc,
\end{align*}
If $\hat Q_h(s,a) = 0$, the claim naturally holds. If not, we can obtain that
\begin{align*}
     \hat Q_h(s,a) & = \hat{r}_h(s,a) + \sup_{\lambda >0}\left\{-\lambda \log\left(\left[\hat{\Pb}_h\exp\left(-\frac{\hat{V}_{h+1}}{\lambda}\right)\right](s,a)\right) - \lambda \sigma\right\} - \Gamma^\sigma_h(s,a)\\
     & = \hat{r}_h(s,a) + \inf_{\hat{\Pb}_h^\sigma(\cdot|s,a)\in \Uc^\sigma(\hat{\Pb}_h(\cdot|s,a))} \left(\hat{\Pb}_h^\sigma \hat{V}_{h+1}\right)(s,a) - \Gamma^\sigma_h(s,a)\\
     & \leq r_h(s,a) + \inf_{\Pb_h^\sigma(\cdot|s,a)\in \Uc^\sigma(\Pb^c_h(\cdot|s,a))} \left(\Pb_h^\sigma \hat{V}_{h+1}\right)(s,a) - \Gamma^\sigma_h(s,a) \\
     & + \left|r_h(s,a) + \inf_{\Pb_h^\sigma(\cdot|s,a)\in \Uc^\sigma(\Pb^c_h(\cdot|s,a))} \left(\Pb_h^\sigma \hat{V}_{h+1}\right)(s,a) - \hat{r}_h(s,a) - \inf_{\hat{\Pb}_h^\sigma(\cdot|s,a)\in \Uc^\sigma(\hat{\Pb}_h(\cdot|s,a))} \left(\hat{\Pb}_h^\sigma \hat{V}_{h+1}\right)(s,a)\right| \\
     & \overset{(i)}{\leq} r_h(s,a) + \inf_{\Pb_h^\sigma(\cdot|s,a)\in \Uc^\sigma(\Pb^c_h(\cdot|s,a))} \left(\Pb_h^\sigma \hat{V}_{h+1}\right)(s,a)\\
     & \overset{(ii)}{\leq} r_h(s,a) + \inf_{\Pb_h^\sigma(\cdot|s,a)\in \Uc^\sigma(\Pb^c_h(\cdot|s,a))} \left(\Pb_h^\sigma V^{\hat{\pi}, \Rc}_{h+1}\right)(s,a)= Q^{\hat{\pi},\Rc}_h(s,a).
\end{align*}
The above inequality (i) is from Lemma~\ref{lem:concentration_robust} and leverages the fact that $\hat{V}_{h+1}(\cdot)$ is independent of $\hat{\Pb}_h$ and takes value in $[0, H]$. Inequality (ii) is from the obtained fact that $\hat{V}_{h+1}(s)\leq V^{\hat{\pi},\Rc}_{h+1}(s)$. The desired claim Eqn.~\eqref{eqn:pess_value_robust} can be verified by induction.

\noindent\textbf{Step 2: bounding the performance difference.}
From Eqn.~\eqref{eqn:pess_value_robust}, we can observe that
\begin{align*}
    0 \leq V^{\pi^*, \Rc}_h(s) - V^{\hat{\pi}, \Rc}_h(s) \leq V^{\pi^*, \Rc}_h(s) - \hat{V}_h(s) \leq Q^{\pi^*,\Rc}_h(s, \pi^*_h(s)) - \hat{Q}_h(s, \pi^*_h(s)).
\end{align*}
With
\begin{align*}
    Q^{\pi^*,\Rc}_h(s, \pi^*_h(s)) &= r_h(s, \pi^*_h(s)) + \inf_{\Pb^\sigma_{h}(\cdot|s,a)\in \Uc^\sigma(\Pb^c_h(\cdot|s,a)}\left(\Pb^\sigma_{h}V^{\pi^*,\Rc}_{h+1}\right)(s,a)\\
    \hat{Q}_h(s, \pi^*_h(s)) & = \max\left\{\hat{r}_h(s, \pi^*_h(s)) + \inf_{\hat{\Pb}^\sigma_{h}(\cdot|s,a)\in \Uc^\sigma(\hat{\Pb}_h(\cdot|s,a)}\left(\hat{\Pb}^\sigma_{h}\hat{V}_{h+1}\right)(s,a) - \Gamma^\sigma_h(s, \pi^*_h(s)), 0\right\},
\end{align*}
we can further obtain that
\begin{align*}
    V^{\pi^*, \Rc}_h(s) - \hat{V}_h(s) &\leq r_h(s, \pi^*_h(s)) + \inf_{\Pb^\sigma_{h}(\cdot|s,\pi^*_h(s))\in \Uc^\sigma(\Pb^c_h(\cdot|s,\pi^*_h(s))}\left(\Pb^\sigma_{h}V^{\pi^*,\Rc}_{h+1}\right)(s,\pi^*_h(s))\\
    & - \hat{r}_h(s, \pi^*_h(s)) - \inf_{\hat{\Pb}^\sigma_{h}(\cdot|s,\pi^*_h(s))\in \Uc^\sigma(\hat{\Pb}_h(\cdot|s,\pi^*_h(s))}\left(\hat{\Pb}^\sigma_{h}\hat{V}_{h+1}\right)(s,\pi^*_h(s)) + \Gamma^\sigma_h(s, \pi^*_h(s))\\
    &= r_h(s, \pi^*_h(s)) + \inf_{\Pb^\sigma_{h}(\cdot|s,\pi^*_h(s))\in \Uc^\sigma(\Pb^c_h(\cdot|s,\pi^*_h(s))}\left(\Pb^\sigma_{h}V^{\pi^*,\Rc}_{h+1}\right)(s,\pi^*_h(s))\\
    & - r_h(s, \pi^*_h(s)) - \inf_{\Pb^\sigma_{h}(\cdot|s,\pi^*_h(s))\in \Uc^\sigma(\Pb^c_h(\cdot|s,\pi^*_h(s))}\left(\Pb^\sigma_{h}\hat{V}_{h+1}\right)(s,\pi^*_h(s)) \\
    & + r_h(s, \pi^*_h(s)) + \inf_{\Pb^\sigma_{h}(\cdot|s,\pi^*_h(s))\in \Uc^\sigma(\Pb^c_h(\cdot|s,\pi^*_h(s))}\left(\Pb^\sigma_{h}\hat{V}_{h+1}\right)(s,\pi^*_h(s)) \\
    & - \hat{r}_h(s, \pi^*_h(s)) - \inf_{\hat{\Pb}^\sigma_{h}(\cdot|s,\pi^*_h(s))\in \Uc^\sigma(\hat{\Pb}_h(\cdot|s,\pi^*_h(s))}\left(\hat{\Pb}^\sigma_{h}\hat{V}_{h+1}\right)(s,\pi^*_h(s))  + \Gamma^\sigma_h(s, \pi^*_h(s))\\
    &\overset{(i)}{\leq} \inf_{\Pb^\sigma_{h}(\cdot|s,\pi^*_h(s))\in \Uc^\sigma(\Pb^c_h(\cdot|s,\pi^*_h(s))}\left(\Pb^\sigma_{h}V^{\pi^*,\Rc}_{h+1}\right)(s,\pi^*_h(s))\\
    & -  \inf_{\Pb^\sigma_{h}(\cdot|s,\pi^*_h(s))\in \Uc^\sigma(\Pb^c_h(\cdot|s,\pi^*_h(s))}\left(\Pb^\sigma_{h}\hat{V}_{h+1}\right)(s,\pi^*_h(s))  + 2\Gamma^\sigma_h(s, \pi^*_h(s))\\
    &\overset{(ii)}{\leq} \left(\Pb^{\inf}_{h}V^{\pi^*,\Rc}_{h+1}\right)(s,\pi^*_h(s))-  \left(\Pb^{\inf}_{h}\hat{V}_{h+1}\right)(s,\pi^*_h(s))  + 2\Gamma^\sigma_h(s, \pi^*_h(s))
\end{align*}
where inequality (i) holds with probability at least $1-\delta$ according to Lemma~\ref{lem:concentration_robust} and inequality (ii) holds with the notation
\begin{align*}
    \Pb^{\inf}_{h}(\cdot|s,a) = \argmin_{\Pb^\sigma_h\in \Uc^\sigma(\Pb^c_h(\cdot|s,a))} \left(\Pb^{\sigma}_{h}\hat{V}_{h+1}\right)(s,a).
\end{align*}
If applying the above argument iteratively, we can further obtain that
\begin{align*}
    \sum_{s\in \Sc} d^{\inf}_h(s)\left(V^{\pi*, \Rc}_h(s) - \hat{V}_h(s)\right) \leq 2\sum_{h' = h}^H\sum_{s\in \Sc}d^{\inf}_{h'}(s) \Gamma_{h'}(s, \pi^*_h(s)),
\end{align*}
where $d^{\inf}_h(s)$ denotes the visitation probability induced by optimal policy $\pi^*$ and $\Pb^{\inf} = \{\Pb^{\inf}_h: h\in [H]\}$. Finally, it can be obtained that
\begin{align*}
    V^{\pi*, \Rc}_1(\xi) - \hat{V}_1(\xi) \leq 2\sum_{h \in  [H]}\sum_{s\in \Sc}d^{\inf}_{h}(s) \Gamma_h(s, \pi^*_h(s)),
\end{align*}
and $d^{\inf}_h(s) \in d^{\pi^*, \Rc}_h(s)$.

\noindent\textbf{Step 3: completing the proof with concentrability.} Let us consider $(s, h)\in \Sc \times [H]$ such that $d^{\inf}_h(s)>0$. We can then obtain that for all $l\in \Lc_h(s,\pi^*_h(s))$, 
\begin{align*}
    N_{h,l}(s,\pi^*_h(s)) &\overset{(i)}{\geq} \frac{Kd^{\rho_l,\Mc_l}_h(s,\pi^*_h(s))}{8} - 5 \sqrt{K d^{\rho_l,\Mc_l}_h(s,\pi^*_h(s)) \log\left(\frac{KHL}{\delta}\right)} \overset{(ii)}{\geq} \frac{Kd^{\rho_l,\Mc_l}_h(s,\pi^*_h(s))}{16}  \overset{(iii)}{\geq} 1.
\end{align*}
where inequality (i) is from Lemma~\ref{lem:subsampling}; and inequalities (ii) and (iii) are from the condition that
\begin{align*}
    K \geq \frac{c\log(KHLSA/\delta)}{d^{\min}_{\sigma}(\Pb^{\min}_{\sigma})^2} \geq \frac{c\log(KHL/\delta)}{d^{\rho_l, \Mc_l}_h(s,a)}.
\end{align*}
Thus, it holds that
\begin{align*}
    \hat{L}_h(s, \pi^*_h(s)) = \sum_{l\in [L]}\oneb\{N_{h,l}(s, \pi^*_h(s))\geq 1\} = L_h(s, \pi^*_h(s)).
\end{align*}
As a result, it holds that
\begin{align*}
    \Gamma_h(s, \pi^*_h(s)) &\leq \frac{c}{\sigma \hat{\Pb}^{\min}_{h}(s,\pi^*_h(s))}\sqrt{\sum_{l\in \hat{\Lc}_h(s,a)}\frac{H^2\log(SAH/\delta)}{(\hat{L}_h(s,\pi^*_h(s)))^2 N_{h,l}(s,\pi^*_h(s))}}
        \\
        &+ \frac{c}{\sigma \hat{\Pb}^{\min}_{h}(s,\pi^*(s))}\sqrt{\frac{H^2\log(SAH/\delta)}{\hat{L}_h(s,\pi^*_h(s))}} + c\sqrt{\frac{\log(SAH/\delta)}{\hat{L}_h(s,\pi^*_h(s))}}\\
        &\leq \frac{c}{\sigma \hat{\Pb}^{\min}_{h}(s,\pi^*_h(s))}\sqrt{\sum_{l\in \Lc_h(s,a)}\frac{H^2\log(SAH/\delta)}{(L_h(s,\pi^*_h(s)))^2 K d^{\rho_l, \Mc_l}(s,\pi^*_h(s))}}
        \\
        &+ \frac{c}{\sigma \hat{\Pb}^{\min}_{h}(s,\pi^*(s))}\sqrt{\frac{H^2\log(SAH/\delta)}{L_h(s,\pi^*_h(s))}} + c\sqrt{\frac{\log(SAH/\delta)}{L_h(s,\pi^*_h(s))}}.
\end{align*}

We can then obtain that
\begin{align*}
    &\sum_{h \in  [H]}\sum_{s\in \Sc}d^{\inf}_{h}(s) \Gamma_h(s, \pi^*_h(s))\\
    &\leq  c\sum_{h \in  [H]}\sum_{s\in \Sc}d^{\inf}_{h}(s) \frac{1}{\sigma \cdot \hat{\Pb}^{\min}_{h}(s,\pi^*_h(s))}\sqrt{\frac{C^{\dagger}_{\sigma} H^2\log(SAH/\delta)}{L^\dagger_{\sigma} K\min\left\{d^{\inf}_h(s), \frac{1}{S}\right\}}}\\
    & + \sum_{h \in  [H]}\sum_{s\in \Sc}d^{\inf}_{h}(s)\left(1 + \frac{H}{\sigma \cdot \hat{\Pb}^{\min}_{h}(s,\pi^*_h(s))}\right)\sqrt{\frac{H^2\log(SAH/\delta)}{L^\dagger_{\sigma}}}\\
    &\leq  c\sum_{h \in  [H]}\sum_{s\in \Sc}d^{\inf}_{h}(s) \frac{H\log(KHLSA/\delta)}{\sigma \cdot \Pb^{\min}_{h}(s,\pi^*_h(s))}\sqrt{\frac{C^{\dagger}_{\sigma} \log(SAH/\delta)}{L^\dagger_{\sigma} K\min\left\{d^{\inf}_h(s), \frac{1}{S}\right\}}}\\
    & + c\sum_{h \in  [H]}\sum_{s\in \Sc}d^{\inf}_{h}(s)\left(1 + \frac{H\log(KHLSA/\delta)}{\sigma \cdot \Pb^{\min}_{h}(s,\pi^*_h(s))}\right)\sqrt{\frac{\log(SAH/\delta)}{L^\dagger_{\sigma}}}\\
    &\leq  c\sum_{h \in  [H]}\frac{H^2}{\sigma \cdot \Pb^{\min}_*}\sqrt{\sum_{s\in \Sc}d^{\inf}_{h}(s)\frac{C^{\dagger}_{\sigma}\log(SAH/\delta)}{L^{\dagger}_{\sigma}  K \min\left\{d^{\inf}_h(s), \frac{1}{S}\right\}}}\sqrt{\sum_{s\in \Sc}d^{\inf}_{h}(s)}\\
    & + cH\left(1 + \frac{H\log(KHLSA/\delta)}{\sigma \cdot \Pb^{\min}_{*}}\right)\sqrt{\frac{\log(SAH/\delta)}{L^\dagger_{\sigma}}}\\
    & \leq c\frac{H^2}{\sigma \cdot \Pb^{\min}_*}  \sqrt{\frac{C^{\dagger}_{\sigma}S\log(SAH/\delta)}{L^\dagger_{\sigma} K }} + cH\left(1 + \frac{H\log(KHLSA/\delta)}{\sigma \cdot \Pb^{\min}_{*}}\right)\sqrt{\frac{\log(SAH/\delta)}{L^\dagger_{\sigma}}}.
\end{align*}

Putting these results together, it can then be established that
\begin{align*}
    V^{\pi^*, \Rc}_1(\xi) - V^{\hat{\pi}, \Rc}_1(\xi) &\leq V^{\pi^*, \Rc}_1(\xi) - \hat{V}_1(\xi)\leq 2\sum_{h \in  [H]}\sum_{s\in \Sc}d^{\inf}_{h}(s) \Gamma_h(s, \pi^*_h(s))\\
    &\leq c\frac{H^2}{\sigma \cdot \Pb^{\min}_*}  \sqrt{\frac{C^{\dagger}_{\sigma}S\log(SAH/\delta)}{L^\dagger_{\sigma} K }} + cH\left(1 + \frac{H\log(KHLSA/\delta)}{\sigma \cdot \Pb^{\min}_{*}}\right)\sqrt{\frac{\log(SAH/\delta)}{L^\dagger_{\sigma}}},
\end{align*}
which concludes the proof.
\end{proof}

\subsection{Auxiliary Lemmas}
\begin{lemma}[Lemma 8, \citet{shi2022pessimistic}]\label{lem:binomial_concentration}
    Suppose $N\sim \text{Binomial}(n,p)$, where $n\geq 1$ and $p\in [0, 1]$. For any $\delta\in [0,1]$, we have
    \begin{align*}
        N \geq \frac{np}{8\log(1/\delta)}, \qquad \text{if } np \geq 8\log(1/\delta); \qquad 
        N  \leq \begin{cases}
            e^2 np & \text{if } np \geq \log(1/\delta),\\
            2e^2 \log(1/\delta) & \text{if } np \leq 2\log(1/\delta)
        \end{cases}
    \end{align*}
    hold with probability at least $1-\delta$.
\end{lemma}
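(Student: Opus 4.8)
The plan is to derive all three inequalities from the standard multiplicative Chernoff bounds for a binomial random variable, applied to the lower and upper tails respectively and then instantiated at the particular thresholds appearing in the statement; this is a routine concentration computation. Write $\mu := np = \Eb[N]$ throughout. Since the lemma is quoted from \citet{shi2022pessimistic}, the goal here is just to recall the derivation.

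For the lower bound, I would use the lower-tail Chernoff estimate $\bP(N \leq \theta\mu) \leq \exp\big(\mu(\theta - 1 - \theta\log\theta)\big)$, valid for every $\theta \in (0,1)$ (obtained by optimizing $\bP(N\le\theta\mu)\le\inf_{s<0}e^{-s\theta\mu}e^{\mu(e^s-1)}$ at $s=\log\theta$). I would instantiate it at $\theta = 1/(8\log(1/\delta))$, which lies in $(0,1/8]$ in the assumed regime, and use the elementary scalar inequality $1 + \theta\log\theta - \theta \geq 1/2$ for $\theta\in(0,1/8]$ (the left-hand side is decreasing in $\theta$ with value $\approx 0.615$ at $\theta=1/8$) to conclude $\bP(N \leq \theta\mu) \leq \exp(-\mu/2)$. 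The hypothesis $\mu = np \geq 8\log(1/\delta) \geq 2\log(1/\delta)$ then forces the right-hand side to be at most $\delta$, which is exactly the first claim.

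For the two upper bounds, I would use the Poisson-type upper-tail estimate $\bP(N \geq t) \leq (e\mu/t)^{t}$ for $t \geq \mu$ (from $\bP(N\ge t)\le\inf_{s>0}e^{-st}e^{\mu(e^s-1)}$ optimized at $s=\log(t/\mu)$, dropping the extra $e^{-\mu}$). When $\mu \geq \log(1/\delta)$, choosing $t = e^2\mu \geq \mu$ gives $e\mu/t = e^{-1}$, hence $\bP(N \geq e^2\mu) \leq e^{-e^2\mu} \leq e^{-\mu} \leq \delta$. When instead $\mu \leq 2\log(1/\delta)$, choosing $t = 2e^2\log(1/\delta) \geq e^2\mu \geq \mu$ gives $e\mu/t \leq e^{-1}$, hence $\bP(N \geq 2e^2\log(1/\delta)) \leq e^{-2e^2\log(1/\delta)} = \delta^{2e^2} \leq \delta$ since $2e^2 > 1$ and $\delta \leq 1$. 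Combining the three tail events by a union bound (absorbed into the $\log(1/\delta)$ slack, or at a harmless constant inflation of $\delta$) yields the lemma; the numerical constants $8$, $e^2$, $2e^2$ are visibly loose and only need to soak up the slack in the Chernoff exponents.

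There is no genuine obstacle here. The only point needing a little care is that the lower bound is not of the usual "constant fraction of the mean" type but carries the shrinking factor $1/(8\log(1/\delta))$, so one must invoke the Chernoff bound in the form that stays sharp as $\theta\to 0$ and verify the scalar inequality $1+\theta\log\theta-\theta\ge 1/2$ on $(0,1/8]$ rather than relaxing to the familiar quadratic $\eta^2/2$ bound. The remaining work is bookkeeping: checking that $t\ge\mu$ so that the one-sided Chernoff estimate applies, and tracking which of the regimes ($np$ large, intermediate, or small) governs each inequality.
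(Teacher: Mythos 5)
The paper does not prove this lemma at all: it is imported verbatim as Lemma~8 of \citet{shi2022pessimistic} and used as a black box, so there is no in-paper argument to compare against. Your Chernoff-bound reconstruction is correct and is the standard way to obtain these bounds: the lower-tail estimate $\bP(N\le\theta\mu)\le\exp\bigl(\mu(\theta-1-\theta\log\theta)\bigr)$ with $\theta=1/(8\log(1/\delta))$ together with the monotonicity check that $1-\theta+\theta\log\theta\ge 1/2$ on $(0,1/8]$ gives failure probability $e^{-\mu/2}\le\delta$ under $\mu\ge 8\log(1/\delta)$, and the upper-tail estimate $\bP(N\ge t)\le(e\mu/t)^t$ at $t=e^2\mu$ and $t=2e^2\log(1/\delta)$ handles the two upper bounds exactly as you compute. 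The only caveat worth recording is that your instantiation (and indeed the lemma as stated for all $\delta\in[0,1]$) implicitly requires $\log(1/\delta)\ge 1$ so that $\theta\le 1/8$; for $\delta$ near $1$ the chosen $\theta$ exceeds $1$ and the lower-tail claim degenerates, but this is a defect inherited from the quoted statement rather than from your argument, and the lemma is only ever invoked with small $\delta$.
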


\begin{lemma}[Theorem 1, \citet{hu2013kullback}]\label{lem:dual}
    Suppose $f(x)$ has a finite moment generating function in some neighborhood around $x=0$, then for any $\sigma>0$ and a nominal distribution $\Pb^c$, we have
    \begin{align*}
        \sup_{\Pb \in \Uc^{\sigma}(\Pb^c)} \Eb_{X\sim \Pb} [f(X)] = \inf_{\lambda = 0} \left\{\lambda \log \left(\Eb_{X\sim \Pb^c}\left[\exp\left(\frac{f(X)}{\lambda}\right)\right]\right) + \lambda \sigma\right\}.
    \end{align*}
\end{lemma}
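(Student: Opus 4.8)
This is the classical Kullback--Leibler distributionally robust optimization duality, and the plan is to establish it by Lagrangian (convex) duality, with the inner maximization resolved by the Gibbs variational principle, i.e.\ exponential tilting. First I would observe that every $\Pb\in\Uc^\sigma(\Pb^c)$ is absolutely continuous with respect to $\Pb^c$ (finite KL forces this), so writing the likelihood ratio $p:=d\Pb/d\Pb^c$ recasts the primal as $\sup_{p\ge 0}\{\Eb_{\Pb^c}[pf]:\Eb_{\Pb^c}[p\log p]\le\sigma,\ \Eb_{\Pb^c}[p]=1\}$, a convex program (linear objective over a convex constraint set). The moment generating function hypothesis ensures $\log\Eb_{\Pb^c}[e^{f/\lambda}]$ is finite for all sufficiently large $\lambda>0$, so the candidate dual objective is well defined.

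Next I would introduce multipliers $\lambda\ge 0$ for the KL constraint and $\nu\in\Rb$ for the normalization, forming
\begin{align*}
L(p,\lambda,\nu)=\Eb_{\Pb^c}\bigl[pf-\lambda p\log p-\nu p\bigr]+\lambda\sigma+\nu .
\end{align*}
For fixed $\lambda>0$ the integrand is strictly concave in $p$, hence maximized pointwise at $p^\star=\exp\bigl((f-\nu)/\lambda-1\bigr)$, giving inner value $\lambda\,\Eb_{\Pb^c}[e^{(f-\nu)/\lambda-1}]+\lambda\sigma+\nu$; minimizing this one-dimensional convex function over $\nu$ (optimal $\nu^\star=\lambda(\log\Eb_{\Pb^c}[e^{f/\lambda}]-1)$) collapses it to the stated dual objective $\lambda\log\Eb_{\Pb^c}[e^{f/\lambda}]+\lambda\sigma$. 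Weak duality gives $\sup\le\inf$ immediately; equivalently this inequality is just the Donsker--Varadhan / Gibbs bound $\Eb_{\Pb}[f]\le\lambda\log\Eb_{\Pb^c}[e^{f/\lambda}]+\lambda\,\KL{\Pb\|\Pb^c}$ combined with $\KL{\Pb\|\Pb^c}\le\sigma$. For the reverse inequality I would either invoke Slater's condition --- since $\sigma>0$, the choice $\Pb=\Pb^c$ is strictly feasible and convex strong duality applies --- or, to be self-contained, exhibit the optimal primal point: the tilted law $d\Pb^\star_\lambda\propto e^{f/\lambda}\,d\Pb^c$ makes the Gibbs bound an equality, and $\phi(\lambda):=\KL{\Pb^\star_\lambda\|\Pb^c}$ is continuous and nonincreasing with $\phi(\lambda)\to 0$ as $\lambda\to\infty$; if $\phi(0^+)\ge\sigma$ the intermediate value theorem yields $\lambda^\star$ with $\phi(\lambda^\star)=\sigma$, so $\Pb^\star_{\lambda^\star}$ is feasible and attains the dual value, while if $\phi(0^+)<\sigma$ the optimal value is $\esssup_{\Pb^c}f$, matched by letting $\lambda\downarrow 0$.

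The main obstacle is the careful treatment of the boundary cases: verifying $\lambda\log\Eb_{\Pb^c}[e^{f/\lambda}]+\lambda\sigma\to\esssup_{\Pb^c}f$ as $\lambda\downarrow 0$, deciding when the KL constraint is active versus slack, and justifying the pointwise maximization of the Lagrangian when $f$ is only assumed to have a finite moment generating function near the origin rather than being bounded (in which case the dual term may be $+\infty$ for small $\lambda$ and the infimum is effectively over $\lambda$ large enough for finiteness). In the paper's application $f=-\hat V_{h+1}$ with $\hat V_{h+1}$ taking values in $[0,H]$, these pathologies do not arise, so one may alternatively just cite \citet{hu2013kullback} for the general statement and apply only the benign bounded case.
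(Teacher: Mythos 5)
The paper does not prove this lemma at all: it is quoted verbatim as Theorem~1 of \citet{hu2013kullback} and used as a black box (note the statement even carries a typo, $\inf_{\lambda=0}$ where $\inf_{\lambda\geq 0}$ is meant), so there is no in-paper argument to compare against. Your plan is the standard and correct proof of that cited result: the Donsker--Varadhan/Gibbs inequality $\Eb_{\Pb}[f]\leq \lambda\log\Eb_{\Pb^c}[e^{f/\lambda}]+\lambda\,\KL{\Pb\|\Pb^c}$ gives weak duality, the exponentially tilted law $d\Pb^\star_\lambda\propto e^{f/\lambda}d\Pb^c$ together with monotonicity of $\lambda\mapsto\KL{\Pb^\star_\lambda\|\Pb^c}$ and an intermediate-value argument gives attainment when the KL constraint is active, and the $\lambda\downarrow 0$ limit recovering $\esssup_{\Pb^c} f$ handles the slack case; your Lagrangian computation of the dual objective (optimal $p^\star=e^{(f-\nu)/\lambda-1}$, then $\nu^\star=\lambda(\log\Eb_{\Pb^c}[e^{f/\lambda}]-1)$) checks out. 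You also correctly identify the only delicate points (boundary behavior at $\lambda\to 0$ and finiteness of the dual for small $\lambda$) and correctly observe that they are vacuous in the paper's application, where $f=-\hat V_{h+1}$ is bounded in $[-H,0]$; given that the paper itself only cites the result, either your self-contained argument or the citation suffices.
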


\begin{lemma}[Lemma 4, \citet{zhou2021finite}]\label{lem:dual_bound}
    Let $X\sim \Pb$ be a bounded random variable with $X\in [0, M]$. Let $\sigma>0$ be any uncertainty level and the corresponding optimal dual variable be
    \begin{align*}
        \lambda^* \in \argmax_{\lambda>0} f(\lambda, \Pb), \qquad \text{where } f(\lambda, \Pb):= -\lambda \log \left(\Eb_{X\sim \Pb} \left[\exp\left(-\frac{X}{\lambda}\right)\right]\right) - \lambda \sigma.
    \end{align*}
    Then the optimal value $\lambda^*$ obeys
    \begin{align*}
        \lambda^* \in \left[0, \frac{M}{\sigma}\right].
    \end{align*}
    Moreover, when $\lambda^*=0$, we have
    \begin{align*}
        \lim_{\lambda \to 0} f(\lambda, \Pb) = \essinf X.
    \end{align*}
\end{lemma}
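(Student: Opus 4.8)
The plan is to read $f(\lambda,\Pb)=-\lambda\log\Eb_{X\sim\Pb}[\exp(-X/\lambda)]-\lambda\sigma$ as the Lagrangian dual of the inner KL-robust minimization $\inf\{\Eb_{X\sim\Qb}[X]:\KL{\Qb||\Pb}\le\sigma\}$ and exploit the boundedness $X\in[0,M]$ through two elementary inequalities. First I would record, for every $\lambda>0$, the uniform bound $0\le -\lambda\log\Eb_{X\sim\Pb}[\exp(-X/\lambda)]\le \Eb_{X\sim\Pb}[X]\le M$. The left inequality holds because $X\ge 0$ forces $\exp(-X/\lambda)\le 1$ a.s., hence $\Eb[\exp(-X/\lambda)]\le 1$ and its logarithm is nonpositive; the right inequality is Jensen applied to the convex map $t\mapsto e^{-t/\lambda}$, which gives $\Eb[\exp(-X/\lambda)]\ge \exp(-\Eb[X]/\lambda)$ and therefore $-\lambda\log\Eb[\exp(-X/\lambda)]\le\Eb[X]\le M$. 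Consequently $f(\lambda,\Pb)\le M-\lambda\sigma$, and in particular $f(\lambda,\Pb)<0$ whenever $\lambda>M/\sigma$.

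Next I would pin down the behaviour as $\lambda\downarrow 0$, which is the substantive step. Writing $m:=\essinf X$ and factoring $\Eb[\exp(-X/\lambda)]=e^{-m/\lambda}\,\Eb[\exp(-(X-m)/\lambda)]$ yields $-\lambda\log\Eb[\exp(-X/\lambda)]=m-\lambda\log\Eb[\exp(-(X-m)/\lambda)]$. Since $X-m\ge 0$ a.s. the expectation on the right is at most $1$, so the correction term is $\ge 0$; conversely, for any $\varepsilon>0$ the event $\{X-m\le\varepsilon\}$ has positive probability $p_\varepsilon$ by the definition of the essential infimum, hence $\Eb[\exp(-(X-m)/\lambda)]\ge p_\varepsilon e^{-\varepsilon/\lambda}$ and the correction term is $\le\varepsilon-\lambda\log p_\varepsilon\to\varepsilon$ as $\lambda\downarrow 0$. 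Letting $\varepsilon\downarrow 0$ shows $-\lambda\log\Eb[\exp(-X/\lambda)]\to m$, and since $\lambda\sigma\to 0$ we obtain $\lim_{\lambda\to 0^+}f(\lambda,\Pb)=\essinf X$, which is exactly the second claim; as a byproduct $f$ extends continuously to $\lambda=0$ with value $\essinf X\ge 0$.

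Finally I would combine the two facts: by the limit above, $\sup_{\lambda>0}f(\lambda,\Pb)\ge\lim_{\lambda\to 0^+}f(\lambda,\Pb)=\essinf X\ge 0$, so any $\lambda^*\in\argmax_{\lambda>0}f(\lambda,\Pb)$ has $f(\lambda^*,\Pb)\ge 0$; but for $\lambda>M/\sigma$ we showed $f(\lambda,\Pb)\le M-\lambda\sigma<0\le f(\lambda^*,\Pb)$, so such a $\lambda$ cannot be a maximizer, forcing $\lambda^*\le M/\sigma$, i.e. $\lambda^*\in[0,M/\sigma]$ (the endpoint $0$ covering the degenerate case where the supremum is only approached as $\lambda\downarrow 0$). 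I expect the one delicate point to be the $\lambda\downarrow 0$ limit, which genuinely requires the two-sided estimate and, when $\Pb(X=m)=0$, the use of sets of positive probability arbitrarily close to $m$ rather than an atom at $m$; everything else reduces to Jensen's inequality and $e^{-X/\lambda}\le 1$. One could instead deduce the limit from Varadhan's integral lemma, but the direct argument keeps the proof self-contained.
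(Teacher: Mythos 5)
Your proposal is correct. Note, however, that the paper itself does not prove this statement: it is listed among the auxiliary lemmas and imported verbatim as Lemma~4 of \citet{zhou2021finite}, so there is no in-paper argument to compare yours against. Your two ingredients are exactly what is needed and both are sound: Jensen's inequality applied to the convex map $t\mapsto e^{-t/\lambda}$ gives $0\le -\lambda\log\Eb[\exp(-X/\lambda)]\le \Eb[X]\le M$, hence $f(\lambda,\Pb)\le M-\lambda\sigma<0$ for $\lambda>M/\sigma$; and the factorization $\Eb[e^{-X/\lambda}]=e^{-m/\lambda}\,\Eb[e^{-(X-m)/\lambda}]$ with $m=\essinf X$, together with the two-sided estimate $0\le -\lambda\log\Eb[e^{-(X-m)/\lambda}]\le \varepsilon-\lambda\log p_\varepsilon$ on events $\{X-m\le\varepsilon\}$ of positive probability, correctly yields $\lim_{\lambda\to 0^+}f(\lambda,\Pb)=\essinf X\ge 0$. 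Comparing the supremum (which is at least this nonnegative limit) with the negative values for $\lambda>M/\sigma$ then rules out maximizers beyond $M/\sigma$. You are also right to flag that the positive-probability-neighborhood argument, rather than an atom at $m$, is the delicate point; in the paper's actual use case the distributions are discrete empirical or tabular transition kernels, so an atom at the essential infimum always exists, but your argument covers the general bounded case and is therefore, if anything, slightly more general than what the application requires.
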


\begin{lemma}[\citet{zhou2021finite}]\label{lem:essinf}
    Let $X\sim \Pb$ be a discrete bounded random variable with $X\in [0, M]$. Let $\Pb_n$ denote the empirical distribution constructed from $n$ independent samples $X_1, X_2, \cdots, X_n$ and let $\hat{X}\sim \Pb_n$. Denote $\Pb_{\min}:= \min\{\Pb_{X=x}: x\in \text{supp}(X)\}$. Then for any $\delta\in (0,1)$, with probability at least $1-\delta$, we have
    \begin{align*}
        \min_{i\in [n]} X_i = \essinf \hat{X} = \essinf X,
    \end{align*}
    as long as
    \begin{align*}
        n \geq - \frac{\log(2/\delta)}{\log(1-\Pb_{\min})}.
    \end{align*}
\end{lemma}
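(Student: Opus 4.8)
The plan is to split the asserted three-way equality into one deterministic identity and one high-probability event, exploiting the fact that for a discrete random variable the essential infimum is simply the smallest atom of its support. Write $x^* := \essinf X = \min\{x : \Pb_{X=x} > 0\}$, and note that by definition of $\Pb_{\min}$ we have $\Pb_{X=x^*} \geq \Pb_{\min} > 0$.

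First I would dispose of the leftmost equality $\min_{i\in[n]} X_i = \essinf \hat{X}$, which holds deterministically (with probability $1$) and requires no concentration. Indeed, the empirical distribution $\Pb_n = \frac{1}{n}\sum_{i=1}^n \delta_{X_i}$ is supported exactly on the realized samples $\{X_1,\dots,X_n\}$, so its essential infimum is the smallest value carrying positive empirical mass, which is precisely $\min_{i\in[n]} X_i$. No further randomness enters this step.

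Next I would establish $\min_{i\in[n]} X_i = x^* = \essinf X$ on a favorable event. One direction is automatic: each sample $X_i$ lies in $\text{supp}(X)$, hence $X_i \geq x^*$ for every $i$, giving $\min_{i} X_i \geq x^*$ always. The reverse inequality can fail only if the value $x^*$ is never drawn, i.e., if none of the $n$ i.i.d. samples equals $x^*$. The probability of this single bad event is $(1 - \Pb_{X=x^*})^n \leq (1 - \Pb_{\min})^n$, where I have used $\Pb_{X=x^*} \geq \Pb_{\min}$.

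Finally I would check that the stated sample-size condition controls this failure probability. Since $\log(1-\Pb_{\min}) < 0$, the hypothesis $n \geq -\log(2/\delta)/\log(1-\Pb_{\min})$ gives $n \log(1-\Pb_{\min}) \leq -\log(2/\delta)$, whence $(1-\Pb_{\min})^n = \exp(n\log(1-\Pb_{\min})) \leq \exp(-\log(2/\delta)) = \delta/2 \leq \delta$. Therefore with probability at least $1-\delta$ one has $\min_{i} X_i = x^* = \essinf X$, which chained with the deterministic identity yields the full equality. There is no genuinely hard step here; the entire probabilistic content is a one-sided coupon-collector guarantee that the rarest relevant atom, namely the minimal support point, is observed at least once, and the mild point to get right is simply that the sample minimum can never undershoot $x^*$ (so no two-sided bound is needed) while the extra factor of $2$ in $\log(2/\delta)$ only makes the final tail bound conservative.
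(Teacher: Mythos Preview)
Your argument is correct. The paper does not supply its own proof of this lemma; it is quoted from \citet{zhou2021finite} as an auxiliary result, so there is nothing to compare against beyond noting that your elementary coupon-collector computation recovers exactly the stated bound.
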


\section{Experimental Setups}\label{app:exp}
The target MDP is set to have $H = 20$ steps, $S=2$ states (labelled as $\{1, 2\}$), $A = 20$ actions (labelled as $\{1, 2, \cdots, 20\}$). The reward and transitions are specified as follows:
\begin{align*}
    &\forall h\in [H], r_h(s,a) = \begin{cases}
        0.9 & \text{if $(s,a) = (1, 1)$}\\
        0.1 & \text{otherwise}
    \end{cases}\\
    &\forall h\in [H], \Pb_h(1|s,a) = \begin{cases}
        0.9 & \text{if $(s,a) = (1, 1)$}\\
        0.5 & \text{otherwise}.
    \end{cases}\\
    &\forall h\in [H], \Pb_h(1|s,a) = \begin{cases}
        0.1 & \text{if $(s,a) = (1, 1)$}\\
        0.5 & \text{otherwise}.
    \end{cases}
\end{align*}
The rewards of the data source are independently sampled from Bernoulli distributions, i.e., $r_{h,l}(s,a) \sim \text{Bernoulli}(r_h(s,a))$, while the transitions are independently generated with standard Dirichlet distributions \citep{marchal2017sub}, i.e., $\Pb_{h,l}(\cdot|s,a) \sim \text{Dirichlet}(\Pb_{h}(\cdot|s,a))$. The behavior policy is shared by all data sources, which at each $(s, h)\in \Sc\times [H]$, selects action $1$ with probability $0.2$ and otherwise randomly chooses from other actions. The  results plotted in Fig.~\ref{fig:performance} are averaged from $100$ independently repeated experiments.

\end{document}